\newcommand{\cmark}{\ding{51}}%
\newcommand{\xmark}{\ding{55}}%
\definecolor{DarkGreen}{RGB}{30,120,22}
\RecustomVerbatimCommand{\VerbatimInput}{VerbatimInput}
{fontsize=\footnotesize,
 breaklines=true,
 breakanywhere=true, 
 breaksymbol=,
 frame=single,  
 framesep=0.5em,
 labelposition=topline,
}
\definecolor{ultramarine}{rgb}{0.,0.1,0.9}
\def\supp{\mathop{\text{supp}}}
\long\def\comment#1{}
\def\cS{{\mathcal{S}}}
\newcommand{\bel}{\begin{eqnarray}\label}
\newcommand{\eel}{\end{eqnarray}}
\newcommand{\bes}{\begin{eqnarray*}}
\newcommand{\ees}{\end{eqnarray*}}
\let\emptyset\varnothing
\let\hat\widehat
\let\tilde\widetilde
\def\mid{\,|\,}
\def\eps{\epsilon}
\def\supp{\mathop{\text{supp}\kern.2ex}}
\def\supp{\mathop{\text{supp}}}
\def\##1\#{\begin{align}#1\end{align}}
\def\$#1\${\begin{align*}#1\end{align*}}
\theoremstyle{plain}
\title{Reason for Future, Act for Now: A Principled Framework for Autonomous LLM Agents with Provable Sample Efficiency}
\author{Zhihan Liu\thanks{Equal contribution.} \thanks{Northwestern University. \texttt{\{zhihanliu2027,shenaozhang2028,hongyiguo2025,boyiliu2018\}\\@u.northwestern.edu,zhaoranwang@gmail.com}} \qquad
    Hao Hu\footnotemark[1] \thanks{Tsinghua University. \texttt{huh22@mails.tsinghua.edu.cn}} \qquad
    Shenao Zhang\footnotemark[1] \footnotemark[2]\\
    Hongyi Guo\footnotemark[2] \qquad
    Shuqi Ke \thanks{The Chinese University of Hong Kong. \texttt{shuqike@link.cuhk.edu.cn}} \qquad
    Boyi Liu\footnotemark[2] \qquad
    Zhaoran Wang\footnotemark[2]}
\begin{document}

\maketitle
\begin{abstract}
Large language models (LLMs) demonstrate impressive reasoning abilities, but translating reasoning into actions in the real world remains challenging. In particular, it is unclear how to complete a given task provably within a minimum number of interactions with the external environment, e.g., through an internal mechanism of reasoning. To this end, we propose the first framework with provable regret guarantees to orchestrate reasoning and acting, which we call ``reason for future, act for now" (\texttt{RAFA}). Specifically, we design a prompt template for reasoning that learns from the memory buffer and plans a future trajectory over a long horizon (``reason for future"). At each step, the LLM agent takes the initial action of the planned trajectory (``act for now"), stores the collected feedback in the memory buffer, and reinvokes the reasoning routine to replan the future trajectory from the new state. 
The key idea is to cast reasoning in LLMs as learning and planning in Bayesian adaptive Markov decision processes (MDPs). Correspondingly, we prompt LLMs with the memory buffer to estimate the unknown environment (learning) and generate an optimal trajectory for multiple future steps that maximize a value function (planning). The learning and planning subroutines are performed in an ``in-context" manner to emulate the actor-critic update for MDPs. Our theoretical analysis establishes a $\sqrt{T}$ regret, while our experimental validation demonstrates superior empirical performance.  Here, $T$ denotes the number of online interactions. Project page:  \url{https://agentification.github.io/RAFA}. 
\end{abstract}

\tableofcontents

\newpage

\newtheorem{proposition}{Proposition}
\section{Introduction}
Large language models (LLMs) exhibit remarkable reasoning abilities, which open a new avenue for agents to interact with the real world autonomously. However, turning reasoning into actions remains challenging. Specifically, although LLMs are equipped with the prior knowledge obtained through pretraining, it is stateless in nature and ungrounded in the real world, which makes the resulting action suboptimal. To bridge the reasoning-acting gap, we aim to design an internal mechanism of reasoning on top of LLMs, which optimizes actions iteratively by incorporating feedback from the external environment. In particular, we focus on the sample efficiency of autonomous LLM agents in interactive decision-making tasks, which plays a key role in their practical adoption, especially when interactions are costly and risky. Our primary goal is to enable agents to complete a given task in a guaranteed manner through reasoning within a minimum number of interactions with the external environment. 

Reinforcement learning (RL) is a well-studied paradigm for improving actions by collecting feedback. However, to tailor existing RL techniques for autonomous LLM agents, we lack a rigorous mapping between RL and LLMs, which leads to various conceptual discrepancies. For example, RL operates in a numerical system, where rewards and transitions are defined by scalars and probabilities. In comparison, the inputs and outputs of LLMs are described by tokens in a linguistic system. As another example, LLMs are trained on a general-purpose corpus and remain fixed throughout the interactive process. In contrast, RL trains actors and critics via parameter updates on the collected feedback iteratively. Thus, it appears inappropriate to treat LLMs as actors or critics under the RL framework, although all of them are parameterized by deep neural networks. Moreover, it remains unclear what reasoning with LLMs means under the RL framework, e.g., what are the inputs and outputs of a reasoning routine and how reasoning should be coordinated with acting. Such conceptual discrepancies prevent us from establishing a principled framework beyond borrowing the ``trial and error'' concept from RL straightforwardly and make it difficult to establish the theoretical guarantee. %achieve provable sample efficiency guarantees. For instance, it is known in RL that an improper design of agents may induce an exponential dependency on horizons in the sample complexity. Without the RL-LLM correspondence, it is hard to avoid the same flaw in autonomous LLM agents. 

To address such conceptual discrepancies, we formalize reasoning and acting with LLMs under a Bayesian adaptive Markov decision process (MDP) framework, where the latent variable of interest is the unknown environment. The starting point is to cast the full history of states (of the external environment), actions, rewards, and their linguistic summaries in the memory buffer as the information state of Bayesian adaptive MDPs. Throughout the interactive process, the information state accumulates a growing collection of feedback from the external environment, which is mapped to an optimized action at each step by an internal mechanism of reasoning. As detailed below, we construct the reasoning routine through two key subroutines, namely learning and planning, which are instantiated by LLMs with specially designed prompts. 
\textbf{(a)} The learning subroutine forms an estimate of the external environment given the memory buffer, where LLMs are prompted to infer the transition and reward models (model) or/and the value function (critic).  
\textbf{(b)} The planning subroutine generates an optimal policy (actor) or trajectory for multiple future steps, which maximizes the value function (up to a certain error). Depending on the specific configuration of the state and action spaces (continuous versus discrete) and the transition and reward models (stochastic versus deterministic), the planning subroutine emulates the value iteration algorithm, the random shooting algorithm, or the Monte-Carlo tree-search algorithm.  

Although LLMs remain fixed throughout the interactive process, we can reduce their estimation uncertainty by prompting the growing collection of feedback from the external environment as contexts, which is verified both theoretically and empirically in this paper. From the perspective of Bayesian adaptive MDPs, LLMs can be considered as some functional of the posterior of the environment (for example, Bayesian model averaging \citep{wasserman2000bayesian}), hence the estimation uncertainty is reduced with increasing information via interactions.  For several tasks, we demonstrate that LLMs can make a more precise prediction when prompted with more data as contexts. 
Hence, LLMs can play a similar role of model estimators in the design of online RL algorithms for interactions. We improve the accuracy of LLMs by simply adding the new feedback to the memory buffer as contexts, instead of performing explicit parameter updates (such as gradient descent) on deep neural networks as in existing RL methods.

\begin{figure*}[t]
%\vskip-0.2in
  \begin{minipage}[c]{0.775\textwidth}
    \includegraphics[width=\textwidth]{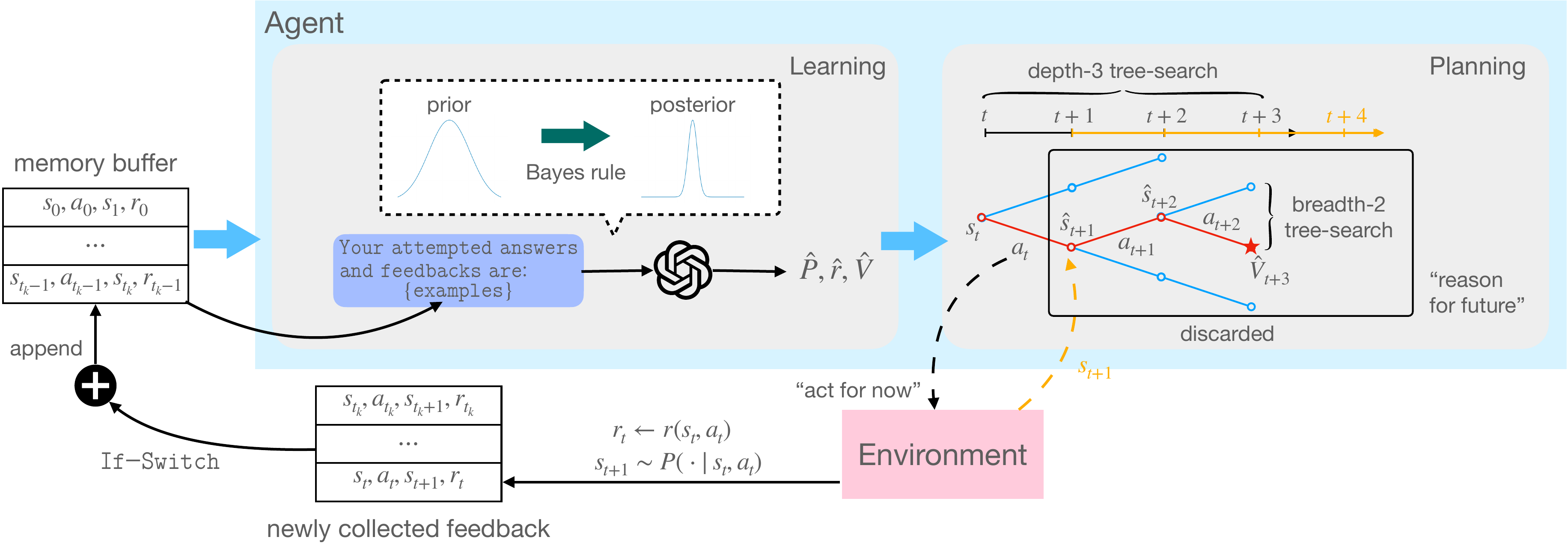}
  \end{minipage}\hfill
  \begin{minipage}[c]{0.2\textwidth}
    \caption{Illustration of the \texttt{RAFA} (``reason for future, act for now'') framework.} \label{fig:rafa_illustration}
  \end{minipage}
%\vskip0.01in
\end{figure*}

We conclude our contributions in this paper from three perspectives.
\textbf{(a)} We establish the LLM-RL correspondence and design a principled framework \texttt{RAFA} for orchestrating the reasoning and acting of LLMs. 
\textbf{(b)} Our empirical validation shows that \texttt{RAFA} outperforms various existing frameworks in interactive decision-making tasks, including ALFWorld, BlocksWorld, Game of 24, and a new benchmark based on Tic-Tac-Toe.
\textbf{(c)} Our theoretical analysis proves that \texttt{RAFA} achieves a $\sqrt{T}$ regret, explaining why    \texttt{RAFA} demonstrates strong empirical performance. Here, $T$ denotes the number of online interactions. We also provide two provably efficient variants of \texttt{RAFA} to implement efficient exploration for more complex tasks.
 
\subsection{Literature}

\paragraph{Reasoning with LLM.} We build on a recent line of work that develops various prompting schemes to improve the reasoning performance of LLMs. ``Chain of thoughts'' (``CoT'') \citep{chain_of_thought} decomposes a challenging problem into several reasoning stages and guides LLMs to solve them one by one. As generalizations, ``tree of thoughts'' \citep{tree_of_thought}, ``graph of thoughts'' \citep{graph_of_thought}, ``algorithm of thoughts'' \citep{sel2023algorithm}, and ``cumulative reasoning'' \citep{zhang2023cumulative} provide different graph-search schemes to guide LLMs. See also \cite{wang2022self, creswell2022selection, creswell2022faithful,guo2024can,zhang2024can}. Also, ``reasoning via planning'' (``RAP'') \citep{rap} emulates the Monte-Carlo tree-search (MCTS) algorithm to reduce the search complexity. \cite{pouplin2024retrieval} improve LLM reasoning process with MCTS and formulate the reasoning process as an MDP. \cite{sun2023query} use offline inverse RL to optimize the prompts for arithmetic problems.  For embodied LLM agents, \cite{huang2022language} propose to decompose a complex task into multiple executable steps. Most of them focus on general reasoning tasks, e.g., solving a mathematical or logic puzzle, where LLMs generate a detailed trace (trajectory) of arguments through an internal mechanism to reach a final answer. Here, LLMs play the same role as the planning subroutine in \texttt{RAFA}. In contrast, we focus on interactive decision-making tasks, where autonomous LLM agents collect feedback from the external environment to optimize actions iteratively. In particular, we aim to complete a given task within a minimum number of interactions with the external environment. To this end, it is essential to operate three interleaved modules, namely learning, planning, and acting, in a closed loop. While it is feasible to incorporate existing graph-search or MCTS schemes as the planning subroutine for generating trajectories, our core contribution is a principled framework that executes a selected subset of the planned trajectory to collect feedback (``act for now'') and replans an improved trajectory from the new state by learning from feedback (``reason for future''). From an RL perspective, existing graph-search or MCTS schemes are analogous to an open-loop method, e.g., motion planning or trajectory optimization \citep{betts1998survey}, which does not involve interactions with the external environment. To integrate them into a closed-loop approach, e.g., model predictive control \citep{rawlings2000tutorial}, one has to specify how to act given the planned trajectory and when to reinvoke the reasoning (learning and planning) routine, which is the key technique of \texttt{RAFA}. Another recent line of work tackles more complex tasks by allowing LLMs to access various additional modules, e.g., tools, programs, and other learning algorithms \citep{ahn2022can, shen2023hugginggpt, lu2023chameleon, liu2023llm+, cai2023large}, or by finetuning LLMs on the feedback \citep{zelikman2022star, li2022language, paul2023refiner,sun2023reinforcement}. 

\paragraph{Acting (and Reasoning) with LLM.} We build on a recent line of work that develops various closed-loop frameworks for interacting with the external environment. ``Inner monologue'' \citep{huang2022inner} and  ``ReAct'' \citep{yao2022react} combine reasoning and acting to refine each other for the first time. In comparison, \texttt{RAFA} provides a specific schedule for orchestrating reasoning and acting (as discussed above). As generalizations, ``Reflexion'' \citep{shinn2023reflexion} enables autonomous LLM agents to revise the current action of a pregenerated trajectory by learning from feedback, especially when they make mistakes. See also \cite{kim2023language}. However, making a local revision to the pre-generated trajectory is myopic because it fails to consider the long-term consequences of actions. Consequently, the obtained policy may get trapped by a local optimum. From an RL perspective, ``Reflexion'' \citep{shinn2023reflexion} is an oversimplified version of \texttt{RAFA}, where the planning subroutine revises the current action to maximize the reward function (``reason for now'') instead of planning multiple future steps to maximize the value function (``reason for future''), which measures the expected cumulative future reward. To remedy this issue, ``AdaPlanner'' \citep{sun2023adaplanner} regenerates the whole trajectory at each step, which yields a global improvement. See also \cite{wang2023describe}. However, the reasoning routine of ``AdaPlanner'' requires a handcrafted set of programs to reject suboptimal candidate trajectories. Without the domain knowledge of the current task, the regenerated trajectory is not necessarily optimal, i.e., maximizing the value function (up to a certain error). In contrast, the reasoning routine of \texttt{RAFA} is designed following the principled approach in RL. In particular, the learning subroutine infers the transition and reward models (model) or/and the value function (critic), while the planning subroutine emulates the value iteration algorithm, the random shooting algorithm, or the MCTS algorithm, none of which use any domain knowledge. \texttt{RAFA} also achieves provable sample efficiency guarantees for the first time and outperforms those existing frameworks empirically. 

\begin{table}[]
    \centering
 \begin{tabular}{l|ccc}
\toprule
 \makecell{Closed-Loop Mechanisms}& \makecell{No Parameter  Update} & \makecell{Theoretical  Guarantee}\\
\midrule
\texttt{RAFA}  & \cmark & \cmark \\
Model-Based Deep RL  & \xmark &  \cmark  \\
Model Predictive Control & \xmark &  \cmark  \\
Thompson Sampling & \xmark &  \cmark  \\
{``React'', ``Reflexion'', and ``Adaplanner''} &  \cmark  &  \xmark \\
\bottomrule
\end{tabular}
    \caption{{Comparison between \texttt{RAFA} and other mechanisms.}}
    \label{tab:com}\vspace{-0.5cm}
\end{table}
\paragraph{Large Language Model (LLM) and In-Context Learning (ICL). } 
LLMs \citep{radford2019language, brown2020language, hoffmann2022training, chowdhery2022palm, openai2023gpt4, touvron2023llama} display notable reasoning abilities. A pivotal aspect of reasoning is the ICL ability \citep{liang2022holistic, razeghi2022impact, shin2022effect, olsson2022context, akyurek2022learning, kirsch2022general, garg2022can, von2023transformers, li2023transformers, abernethy2023mechanism}, which allows LLMs to solve a broad range of tasks with only a few in-context examples instead of finetuning parameters on a specific dataset. We focus on harnessing the ICL ability of LLMs to optimize actions in the real world, which is crucial to autonomous LLM agents. In particular, we build on a recent line of work \citep{xie2021explanation, zhang2022analysis, zhang2023and, wang2023large, wies2023learnability, jiang2023latent,lee2023supervised} that attributes the ICL ability to implicit Bayesian inference, i.e., an implicit mechanism that enables LLMs to infer a latent concept from those in-context examples, which is verified both theoretically and empirically. In \texttt{RAFA}, the latent concept is the transition and reward models (model) of the unknown environment or/and the value function (critic), which is inferred from the memory buffer in the learning subroutine. Claim \ref{claim:llm} can also be considered as a result of ICL ability.  

\paragraph{Reinforcement Learning (RL) under a Bayesian Framework. }
We build on a recent line of work on the infinite-horizon \citep{abbasi2015bayesian, dong2019q, wei2020model, zhou2021nearly, zhou2021provably, chen2022sample,chua2018deep,hafner2019learning,sekar2020planning} and Bayesian \citep{strens2000bayesian, osband2013more, russo2014learningx, russo2014learning, russo2016information,lu2019information} settings of RL, which include model-based deep RL \citep{janner2019trust,NEURIPS2023_4640d5da,Wang_Wang_Zhou_Li_Li_2022,liu2024maximize}, model predictive control \citep{morari1999model}, and Thompson sampling \citep{russo2014learning}. The infinite-horizon setting allows \texttt{RAFA} to interact with the external environment continuously without resetting to an initial state, while the Bayesian setting allows us to connect \texttt{RAFA} with BMA and establish the theoretical guarantee. RL operates in a numerical system, where rewards and transitions are defined by scalars and probabilities, and trains actors and critics on the collected feedback iteratively. We focus on emulating the actor-model or actor-critic update in RL through an internal mechanism of reasoning on top of LLMs, which allows data and actions to be tokens in a linguistic system while bypassing the explicit update of parameters in model-based RL \citep{chua2018deep,hafner2019learning,sekar2020planning,liu2022learning,zhong2022gec,zheng2022optimistic,liu2022welfare}. 
In particular, the learning and planning subroutines of \texttt{RAFA} emulate the posterior update and various planning algorithms in RL. Moreover, \texttt{RAFA} orchestrates reasoning (learning and planning) and acting following the principled approach in RL, i.e., (re)planning a future trajectory over a long horizon (``reason for future'') at the new state and taking the initial action of the planned trajectory (``act for now''). As a result, \texttt{RAFA} inherits provable sample efficiency guarantees from RL. We summarize the comparison between \texttt{RAFA} and other closed-loop mechanisms in Table~\ref{tab:com}.

\subsection{Notations}
We provide a table of notations in Appendix \ref{app:not}.

\section{Bridging LLM and RL}\label{sec: pre}

\label{sec:rl_llm}
\paragraph{Interaction Protocol.} We use Bayesian adaptive Markov decision processes (MDPs) \citep{ghavamzadeh2015bayesian} to model how autonomous LLM agents interact with the external environment. We consider an infinite-horizon MDP $M = (\mathcal{S}, \mathcal{A}, P, r, \rho, \gamma,\mathbb{P}_0)$, where $\mathcal{S}$ is the state space, $\mathcal{A}$ is the action space, $P:\mathcal{S}\times\mathcal{A}\mapsto \Delta(\mathcal{S})$ is the transition kernel, $r:\mathcal{S}\times\mathcal{A}\mapsto\mathbb{R}$ is the reward function, $\rho$ is the initial distribution of states, $\gamma\in(0,1)$ is the discount factor, and $\mathbb{P}_0$ is the prior distribution of the transition kernel and the reward function. Here, $P$ gives the probability distribution of the next state given the current state and action, while $r$ is assumed to be deterministic without loss of generality. For notational simplicity, we parameterize $P$ and $r$ by a shared parameter $\theta\in \Theta$ and denote them as $P_{\theta}$ and $r_{\theta}$. At the beginning of the interaction, the data-generating parameter $\theta^\star$ is sampled from the prior $\mathbb{P}_0$.  At the $t$-th step during the interaction, the LLM agent receives a state $s_t \in \cS$, takes an action $a_t \in \cA$ following the current policy $\pi_t: \cS \mapsto \cA $, and receives a reward $r_t = r_{\theta^\star}(s_t, a_t)$. Subsequently, the external environment transits to the next state $s_{t+1} \sim P_{\theta^\star}(\cdot \mid s_{t}, a_{t})$, while the LLM agent computes the updated policy $\pi_{t+1}$ through an internal mechanism of reasoning (as discussed below). Note that $\cS$ and $\cA$ are represented by tokens in a linguistic system. Here, $\pi \in \Pi$ is assumed to be deterministic without loss of generality, where $\Pi$ is the feasible set of policies.

\paragraph{Value Function.} For a policy $\pi$ and a parameter $\theta$ of the transition and reward models, we define the state-value and action-value functions as
%\vspace{-0.1cm}
\begin{align}
    V^\pi_\theta(s) &= \mathbb{E}\Bigl[\sum_{t=0}^\infty \gamma ^t r_\theta(s_t,a_t)\Big|\,s_0 = s\Bigl],\notag\\ 
    Q_\theta^\pi(s,a)&=\mathbb{E}\Bigl[\sum_{t=0}^\infty \gamma ^t r_\theta(s_t,a_t)\Big|\,s_0 = s,a_0 = a\Bigr],
\end{align}%å\vspace{-0.05cm}
where $\mathbb{E}$ is taken with respect to $a_t = \pi(s_t)$ and $s_{t+1}\sim P_{\theta}(\cdot \mid s_t, a_t)$ for all $t\geq0$. In other words, $V^\pi_\theta$ (and $Q_\theta^\pi$) gives the expected cumulative future reward from the current state $s$ (and action $a$). 

We define the optimal policy $\pi_\theta^\star$ with respect to a given parameter $\theta$ as $\pi_{\theta}^\star=\operatornamewithlimits{argmax}_{a\in\cA}Q_\theta^\star$, where $Q_\theta^\star$ is the fixed point of the following Bellman optimality equation,
\begin{align}
        Q^\star_\theta\left(s, a\right)&=\left(B_\theta V_{\theta}^\star\right)\left(s, a\right),\notag\\ V_\theta^\star(s) &= \max_{a\in\mathcal{A}}Q_\theta^\star(s,a),\label{eq:bellman_opt}
\end{align} where 
$Q^\star_\theta$ and $V_\theta^\star$ are the fixed-point solutions. 
 Here, we define $(B_\theta V)(s, a)= r_{\theta}(s,a)+\gamma\cdot (P_{\theta^\star} V)(s,a)$ and $(P_{\theta}V)(s,a)=\mathbb{E}_{s^\prime\sim P_{\theta}(\cdot\mid s,a)}[V(s^\prime)]$ for any value function $V$. See \citet{sutton2018reinforcement} for the existence and uniqueness guarantees for $Q^\star_\theta$, $V_\theta^\star$, and $\pi_\theta^\star$.

\paragraph{Posterior, Entropy, and Information Gain.} By Bayes' rule, the posterior of $\theta^\star$ given any in-context dataset $\cD$ is 
\begin{align}
    \mathbb{P}_{\text{post}}(\theta\mid \cD) \propto \mathbb{P}_0(\theta) L(\cD\mid \theta),\label{eq:post_theta}
\end{align}
where we denote by $L(\cD\mid \theta)$ the likelihood of $\cD$ conditioned on $\theta$. 
We define the random variable $\xi_{(s,a)}$ as the pair of the next state and the current reward $(s^\prime,r)$ given the query state-action pair $(s,a)$. Given any in-context dataset $\mathcal{D}$ and query state-action pair $(s,a)$, the posterior of $\xi_{(s,a)}$ can be specified as
\begin{align}
    &\mathbb{P}_{\text{post}}(\xi_{(s,a)}\mid \cD,s,a) =\mathbb{E}_{\theta \sim \mathbb{P}_{\text{post}}(\cdot\mid \cD)}\bigl[P_\theta(s^\prime\mid s, a)\cdot \textbf{1}(r =r_\theta(s,a))\bigr], \label{eq:posterior}
\end{align}%\vspace{-0.05cm}
where we use Bayes' rule and the fact that the query state-action pair $(s,a)$ is conditionally independent of $\theta^\star$ given $\cD$. 
To characterize the uncertainty of $\theta^\star$ conditioned on $\cD$, we define the posterior entropy $H(\theta\mid \cD)$ as
\begin{align}
    H(\theta\mid \cD) = \mathbb{E}_{\theta\sim \mathbb{P}_{\text{post}}(\cdot\mid \cD)} \bigl[-\log \bigl(p_{\text{post}}(\theta\mid \cD)\bigr)\bigr],\label{eq:ent_def}
\end{align}
where $p_{\text{post}}$ is the probability mass (or density) function of $\mathbb{P}_{\text{post}}$. 
High posterior entropy $H(\theta\mid \cD)$ means high uncertainty of $\theta^\star$, which suggests that it is hard for the agent to make a precise prediction given $\cD$.  
We also define the information gain $I(\theta;\xi\mid\cD)$ as $H(\theta\mid\mathcal{D})-H(\theta\mid \mathcal{D},\xi)$, which characterizes how much information $\xi_{(s,a)}$ carries to reduce the uncertainty of $\theta^\star$ conditioned on $\cD$.

\paragraph{Sample Efficiency.} %Recall that $\theta^\star$ is the underlying parameter that generates states and rewards. 
As the performance metric, we define the Bayesian regret after $T$ steps of interactions, 
\begin{align}\label{eq:optbellmaneq}
    \mathfrak{R}(T) =\mathbb{E}\Bigl[ \sum_{t=0}^{T-1} V_{\theta^\star}^{\pi^\star} (s_t) - V_{\theta^\star}^{\pi_t}(s_t)\Bigr], 
\end{align}
where $\pi^\star = \pi_{\theta^\star}^\star$, $\mathbb{E}$ is taken with respect to the prior distribution $\mathbb{P}_0$ of $\theta^\star$, the stochastic outcome of $s_t$, and the iterative update of $\pi_t$, which involves states, actions, and rewards until the $t$-th step, i.e., the full history $\mathcal{D}_t=\{(s_i,a_i,s_{i+1},r_i)\}_{i=0}^{t-1}$. We aim to design a sample-efficient agent that satisfies $\mathfrak{R}(T) = o(T)$, i.e., the Bayesian regret is sublinear in the total number of interactions $T$.%with the external environment. 

\paragraph{What Reasoning Means and Role of LLM.} To bridge LLM mechanisms with online RL algorithms, we claim that LLMs can play a similar role of model estimators in the design of online RL algorithms for interactions, which is one aspect of In-Context Learning (ICL) ability  of LLMs. 
\begin{claim}
    {LLMs provide a more accurate estimate for the environment with more feedback from online interactions.}\label{claim:llm}
\end{claim}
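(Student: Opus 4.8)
The plan is to read Claim~\ref{claim:llm} quantitatively through the Bayesian lens already in place: an LLM prompted with an in-context dataset $\cD$ is modeled as emitting (a functional of) the posterior predictive $\mathbb{P}_{\text{post}}(\xi_{(s,a)}\mid \cD,s,a)$ of \eqref{eq:posterior}, so ``accuracy of the estimate for the environment'' becomes the statistical closeness of this posterior predictive to the ground-truth law of $\xi_{(s,a)}$, namely $P_{\theta^\star}(\cdot\mid s,a)$ together with $\textbf{1}(r=r_{\theta^\star}(s,a))$, measured e.g.\ in KL divergence or total variation, or --- as a convenient proxy --- through the posterior entropy $H(\theta\mid \cD)$ of \eqref{eq:ent_def}. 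The claim then splits into two assertions: (i) the predictive uncertainty is non-increasing, \emph{in expectation}, as feedback accumulates along the interaction; and (ii) under an identifiability condition this uncertainty in fact vanishes, i.e.\ the LLM's estimate converges to the truth.

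First I would establish (i) by an information-theoretic monotonicity argument. By definition $I(\theta;\xi\mid\cD) = H(\theta\mid\cD) - H(\theta\mid\cD,\xi)\ge 0$ (non-negativity of conditional mutual information), hence $H(\theta\mid\cD,\xi)\le H(\theta\mid\cD)$; chaining this over steps $t=0,1,\dots$ with $\cD=\cD_t$ and $\xi$ the newly observed $\xi_{(s_t,a_t)}=(s_{t+1},r_t)$ shows that $\E[H(\theta\mid\cD_t)]$ is non-increasing in $t$. The one subtlety is that the data are collected adaptively --- $a_t=\pi_t(s_t)$ with $\pi_t$ depending on $\cD_t$ --- so I would invoke exactly the conditional-independence fact noted below \eqref{eq:posterior} (the query pair $(s_t,a_t)$ is independent of $\theta^\star$ given $\cD_t$) to confirm that the posterior update is still the plain Bayes update and that the entropy chain rule applies along the data filtration. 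A parallel statement at the level of the predictive law follows from the data-processing inequality: $\E\bigl[\mathrm{KL}\bigl(P_{\theta^\star}(\cdot\mid s,a)\,\|\,\mathbb{P}_{\text{post}}(\cdot\mid\cD_t,s,a)\bigr)\bigr]$ is non-increasing in $t$, which is the precise sense in which ``more feedback'' yields ``a more accurate estimate''.

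Next I would upgrade monotonicity to genuine convergence for (ii). Under an identifiability/realizability assumption --- $\mathbb{P}_0$ charges every neighborhood of $\theta^\star$ and distinct parameters induce distinguishable transition--reward laws on the states actually visited --- Doob's posterior consistency theorem (or, since the observations form a martingale-difference sequence relative to the data filtration, a martingale-convergence argument on the posterior) yields $\mathbb{P}_{\text{post}}(\cdot\mid\cD_t)\to\delta_{\theta^\star}$, and therefore $\mathbb{P}_{\text{post}}(\xi_{(s,a)}\mid\cD_t,s,a)$ converges to the true law of $\xi_{(s,a)}$, so the LLM's estimate becomes exact in the limit. If explicit rates are wanted, I would instead telescope (i) to $\sum_{t}I(\theta;\xi_{(s_t,a_t)}\mid\cD_t)\le H(\theta\mid\cD_0)$, which forces the per-step predictive error to shrink on average --- this is also the quantity that reappears in the $\sqrt{T}$ regret analysis.

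I expect the main obstacle to be not the information theory, which is routine, but the modeling step: Claim~\ref{claim:llm} is a statement about an actual LLM, and turning it into a theorem requires committing to the ICL-as-implicit-Bayesian-inference postulate \citep{xie2021explanation} invoked in the ICL discussion above, i.e.\ that the pretrained LLM's in-context output coincides with --- or is uniformly close to --- the Bayesian posterior predictive for the relevant prior $\mathbb{P}_0$ over environments. Quantifying the approximation error of this identification, and how it degrades with prompt length and with the distribution shift between pretraining and the task at hand, is the delicate part; once that is granted, everything downstream is the clean Bayesian argument above, which is also why the paper complements the claim with empirical corroboration.
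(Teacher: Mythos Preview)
Your proposal is correct and conceptually aligned with the paper's treatment: both reduce Claim~\ref{claim:llm} to (a) the modeling postulate that the LLM's in-context output coincides with the Bayesian posterior predictive, and (b) the fact that posterior uncertainty shrinks as data accumulate. The paper makes (a) explicit as Assumption~\ref{as:perfect} (posterior alignment) and packages (b) as Proposition~\ref{prop:llm_bma}, which shows that under that assumption the LLM performs Bayesian model averaging; the monotone reduction of estimation uncertainty is then delegated to a citation on BMA \citep{wasserman2000bayesian} rather than proved from scratch, and the claim is further buttressed by three empirical demonstrations (Figure~\ref{fig:claim}).

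Your route is thus the same in spirit but more self-contained on the Bayesian side: you spell out the entropy/mutual-information monotonicity argument and add a Doob-type consistency statement for part~(ii), neither of which the paper writes out. Conversely, the paper is more explicit about what the modeling step actually entails---formalizing it as Assumption~\ref{as:perfect} and later relaxing it in Appendix~\ref{app:relax} to an MLE-on-pretraining-data assumption with a quantified approximation error---whereas you flag this step as the main obstacle but leave it at the level of the ICL-as-implicit-Bayes postulate. If you want to match the paper's presentation, you could replace your informal ``LLM outputs the posterior predictive'' premise with the precise Assumption~\ref{as:perfect} and then note, as the paper does, that this immediately yields the BMA identity of Proposition~\ref{prop:llm_bma}; your information-theoretic monotonicity argument would then serve as an expanded justification of the sentence the paper leaves to the BMA citation.
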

In Proposition \ref{prop:llm_bma} in Section \ref{sec:theory}, we prove that LLMs with posterior alignment perform Bayesian model averaging (BMA). This theoretical result supports Claim \ref{claim:llm}, as the estimation uncertainty of BMA is reduced given more feedback from  interactions with the environment \citep{wasserman2000bayesian}.
We also provide empirical evidence on three tasks for Claim \ref{claim:llm} as follows. 

\begin{figure}
    \centering
    \subfloat[Information bandit.]{ \includegraphics[width=0.3\linewidth]{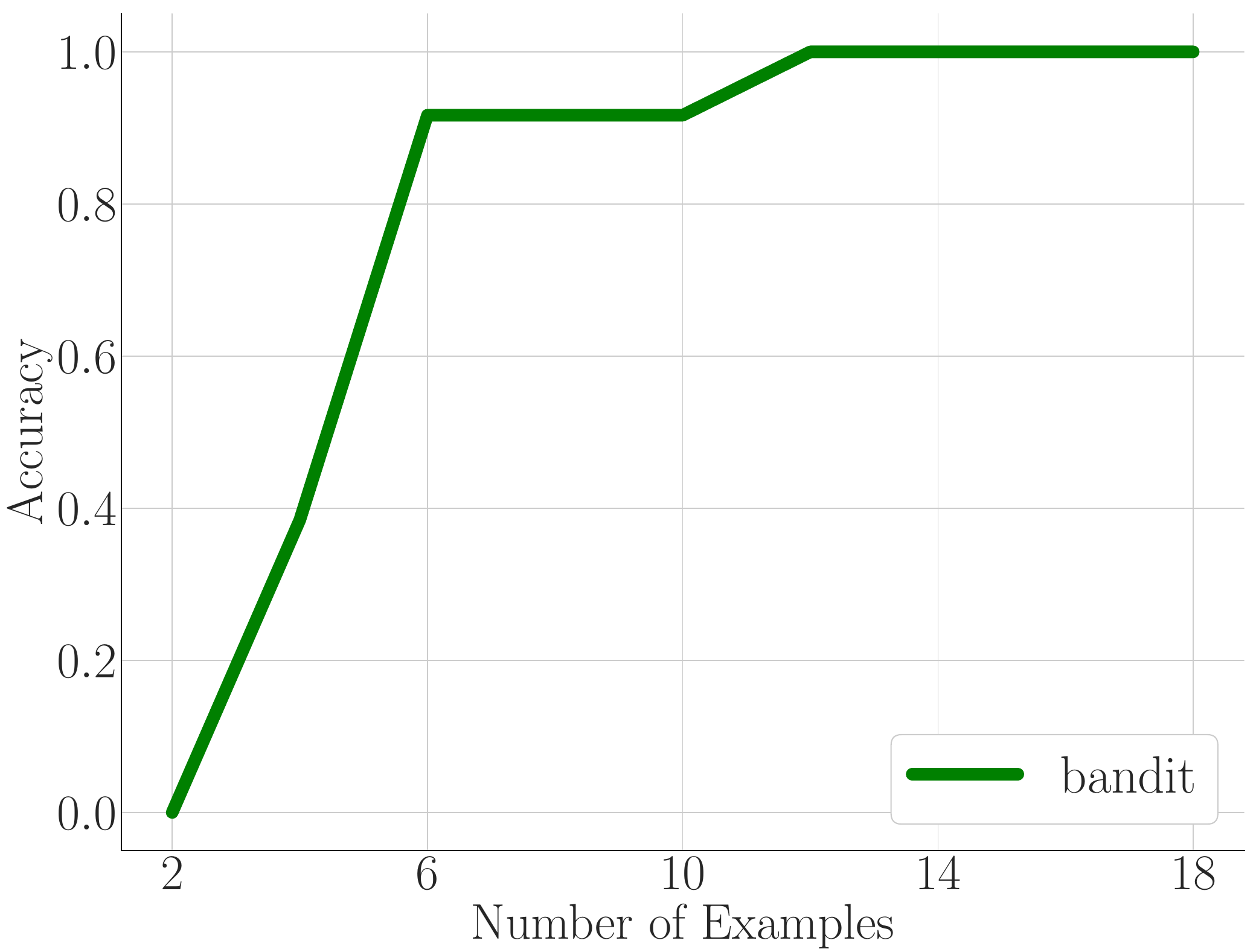}}
    \hfill
    \subfloat[Concept learning.]{ \includegraphics[width=0.3\linewidth]{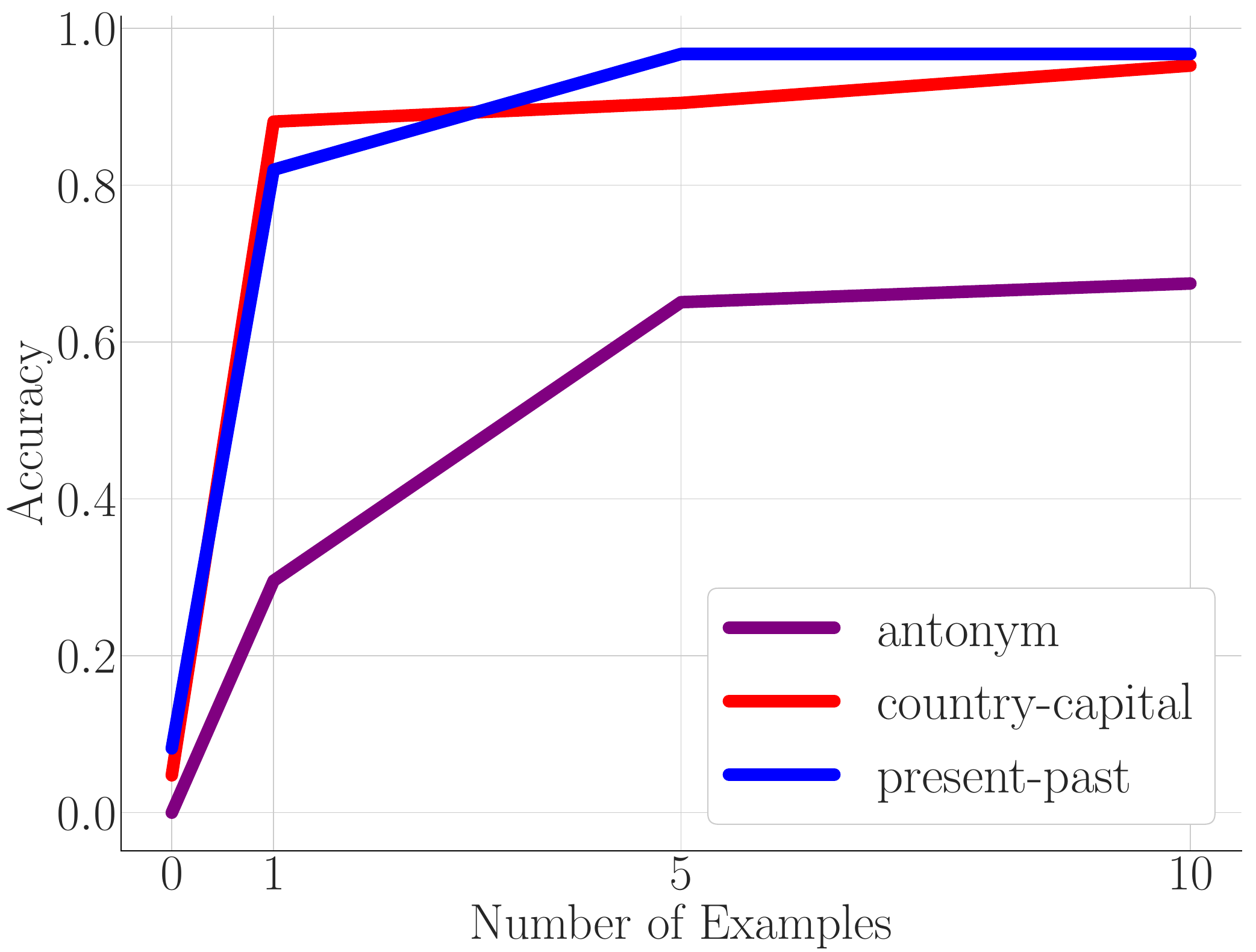}}
    \hfill
    \subfloat[Function value prediction.]{ \includegraphics[width=0.3\linewidth]{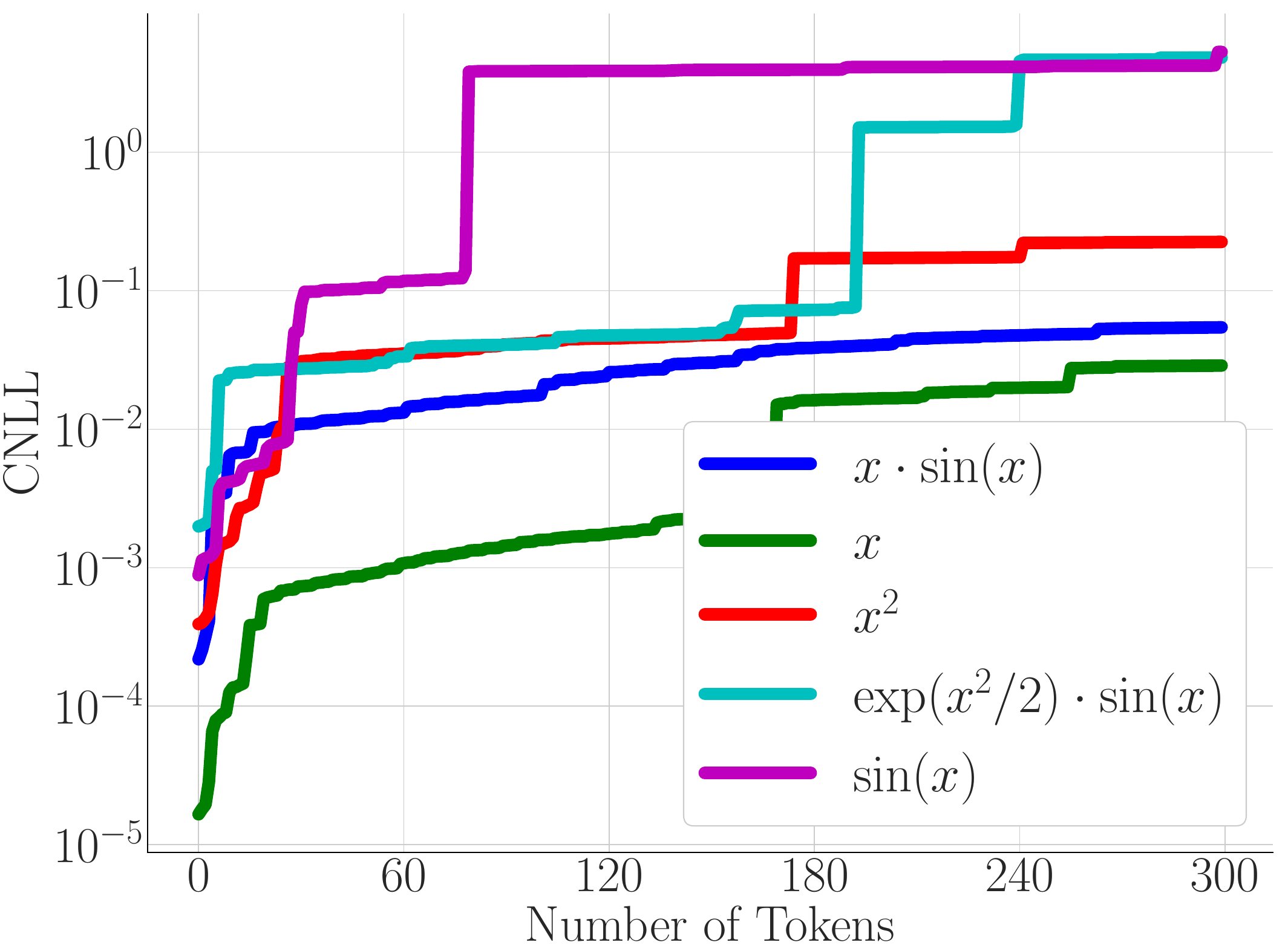}}
    \caption{Empirical evidences for Claim~\ref{claim:llm} on different tasks. LLMs demonstrate improving prediction abilities as the number of in-context samples grows.}
    \label{fig:claim}
\end{figure}
\noindent\textit{(a) Information Bandit.} The goal of our $100$-arm bandit experiment is to find the arm with the highest reward. There is an informative arm whose reward is the index of the best arm. We prompt the LLM (\texttt{gpt-4}) to pull the arm by providing it with the historical data of several bandit instances that share the same informative arm. It can be observed from the right figure that the LLM can learn the best arm with only $6$ examples, and is thus an effective reward estimator.

\noindent\textit{(b) Concept Learning.} 
We evaluate LLMs (\texttt{Llama 2-7B}) in three tasks \citep{todd2023function} with hidden concepts: (1) Antonym: Generate the word with the opposite meaning given an input word; 
(2) Country-Capital: Generate the capital city of a given country; and (3) Present-Past: Generate the verb’s past inflection given a verb in the present tense. We observe that with more in-context examples provided to the LLM, the accuracy of the test instance monotonically increases, indicating that the hidden concepts of the tasks are learned.

\noindent\textit{(c) Function Value Prediction.} 
The goal of this experiment is to let the LLM (\texttt{gpt-3})  predict the values of a function on unseen data points given the values on the points with fixed intervals.  Following \citet{gruver2023large}, we report the $t$-interval cumulative negative log-likelihood $\text{CNLL} = -\sum_i^t \log P(v_i | \text{prompt}_{i-1})$, where $v_i$ is the value of the function at data point $i$. It can be observed that the LLMs are good time series forecasters.

Under Claim \ref{claim:llm}, we establish the correspondence between LLMs and RL by using LLMs as model estimators in RL algorithms, which opens the door to creating a practical algorithm that combines the strengths of both LLMs and RL. LLMs excel in accuracy with minimal feedback, which improves the sample efficiency. LLMs can also refine estimates using new feedback as prompts, which avoids explicit parameter updates. RL algorithms benefit from online interaction to improve estimates and policies and have theoretical guarantees with optimal planning algorithms like value iteration. This LLM-RL correspondence inspires us to introduce a new framework in the next section, aiming to orchestrate the reasoning (learning and planning) and acting of LLMs.

\section{Algorithm}\label{sec:mech}\vspace{-0.1cm}

\begin{algorithm}[h]\small
	\caption {Reason for future, act for now (\texttt{RAFA}): The LLM version.}
	\begin{algorithmic}[1]	\label{alg: llm}
	\STATE \textbf{input}:  An LLM learner-planner \texttt{LLM-LR-PL}, which aims at generating an optimal trajectory given an initial state and returns the initial action (e.g., Algorithm \ref{alg: example}), and a switching condition \texttt{If-Switch}.
 \STATE \textbf{initialization}: Sample the initial state $s_0 \sim \rho$, set $t = 0$, and initialize the memory buffer $\mathcal{D}_0 = \emptyset$.
    \FOR{$k=0,1,\ldots,$}
    \STATE Set $t_k \leftarrow{t}$. 
    \REPEAT
    \STATE Learn and plan given memory $\mathcal{D}_{t_k}$ to get action $a_t\leftarrow \texttt{LLM-LR-PL}(\mathcal{D}_{t_k},s_t)$.\hfill (``reason for future'')
        \STATE  Execute action $a_t$ to receive reward $r_t$ and state $s_{t+1}$ from environment. \hfill (``act for now'')
        %\STATE Record $(s_t,a_t,r_t,s_{t+1})$ to update historical dataset $\mathcal{D}_{t}$. 
        \STATE Update memory $\mathcal{D}_{t+1} \leftarrow \mathcal{D}_{t} \cup \{(s_t,a_t,s_{t+1},r_t)\}$. 
        \STATE Set $t\leftarrow t+1$.
        \UNTIL $\texttt{If-Switch}(\mathcal{D}_t)$ is \texttt{True}. \hfill (the switching condition is satisfied)
    \ENDFOR
	\end{algorithmic}
\end{algorithm}

\noindent\textbf{Architecture of \texttt{RAFA}.} By leveraging the LLM-RL correspondence in Section \ref{sec:rl_llm}, we provide a principled framework for orchestrating reasoning and acting, namely ``reason for future, act for now'' (\texttt{RAFA}), in Algorithms \ref{alg: llm} and \ref{alg: example}. %In Section \ref{sec:theory}, we present the theoretical version of \texttt{RAFA} in Algorithm \ref{alg: theory_gen} to illustrate the design rationale and establish the theoretical foundation. 
At the $t$-th step of Algorithm \ref{alg: llm}, the LLM agent invokes the reasoning routine, which learns from the memory buffer and plans a future trajectory over a long horizon (``reason for future'' in Line 6), takes the initial action of the planned trajectory (``act for now'' in Line 7), and stores the collected feedback (state, action, and reward) in the memory buffer (Line 8). Upon the state transition of the external environment, the LLM agent reinvokes the reasoning routine to replan another future trajectory from the new state (Line 6 following Line 9). To ensure the learning and planning stability, we impose the switching condition (Line 10) to decide whether to incorporate the newest chunk of history, i.e., the set difference $\cD_t - \cD_{t_k}$, into the information state, which is used in the reasoning routine as contexts. In other words, the reasoning routine uses the same history $\mathcal{D}_{t_k}$ for all $t_k \leq t < t_{k+1}$ until the $(k+1)$-th switch at the $(t_{k+1}-1)$-th step, which guarantees that the posterior distribution and the optimized action or the corresponding policy are updated in a conservative manner. We specify the switching condition in Sections \ref{sec: experiments} and \ref{sec:theory}.

\begin{figure*}[t]
%\vskip-0.55in
\vspace{-0.02cm}
  \begin{minipage}[c]{0.77\textwidth}
    \includegraphics[width=\textwidth]{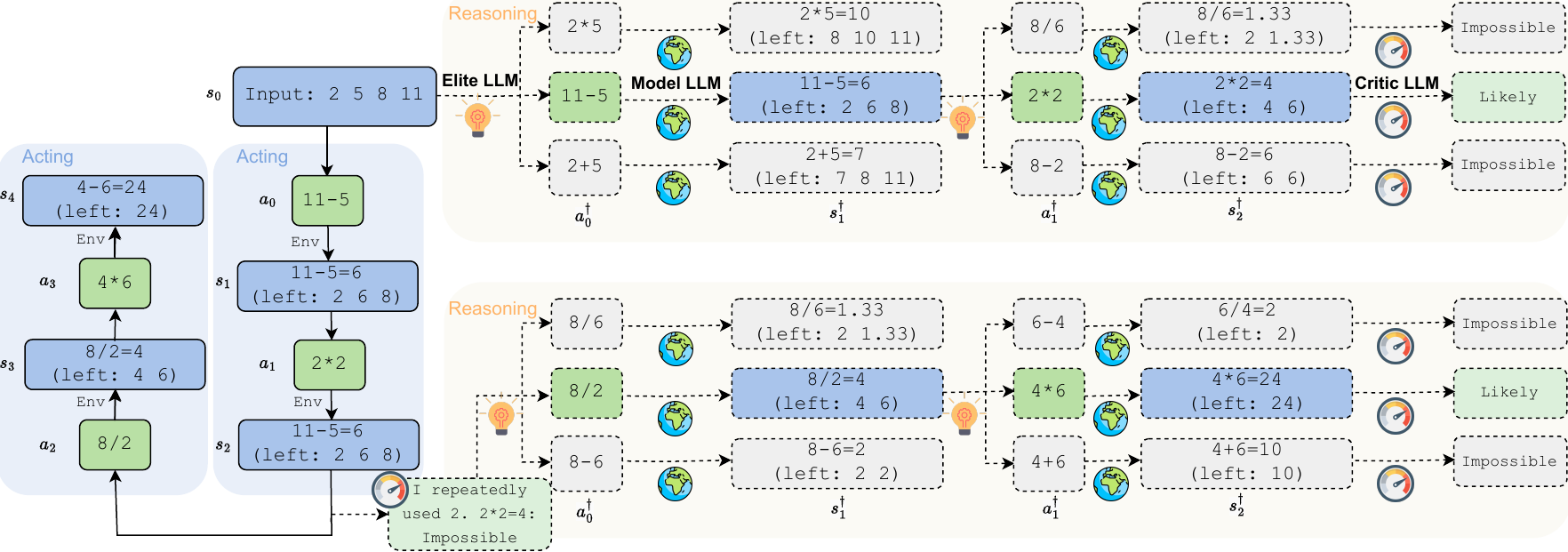}
  \end{minipage}\hfill
  %\hspace{0.05cm}
  \begin{minipage}[c]{0.21\textwidth}
    \caption{\small \texttt{RAFA} for Game~of 24. Actions are proposed (dotted) and selected (\textcolor{DarkGreen}{green}). Hallucinations that the same number can be reused are mitigated through interactions.} \label{fig:game24_illustration}
  \end{minipage}
\vskip+0.02in
\end{figure*}
\noindent\textbf{``Reason for Future'' (Line 6 in Algorithm \ref{alg: llm} and Lines 3-11 in Algorithm \ref{alg: example}).} As detailed below, the reasoning routine composes the learning and planning subroutines to map the full history $\cD_{t_k}$ (until the $t_k$-th step) to an optimized action $a_t$. Note that the reasoning routine does not interact with the external environment throughout the learning and planning subroutines. 

    \noindent$\bullet$ The learning subroutine (Lines 3-4 in Algorithm \ref{alg: example}) maps the memory buffer $\cD_{t_k}$ to a transition kernel (\texttt{Model}) and a value function (\texttt{Critic}), which are used in the planning subroutine. In practice, we prompt LLMs to form an estimate of the external environment based on the memory buffer. Here, the estimate is instantiated by two LLMs: \texttt{Model} and \texttt{Critic}, which estimate their ground-truth counterparts in association with the data-generating parameter. 
    Under Claim \ref{claim:llm}, the learning subroutine yields more accurate versions of \texttt{Model} and \texttt{Critic} when we prompt them with a growing collection of feedback from the external environment. 
    Consequently, the planning subroutine can use them to assess the long-term outcome of actions with a higher accuracy. Depending on whether we emulate the model-based or model-free approach of RL, we may choose to emulate \texttt{Model} or \texttt{Critic} individually.  Compared with the learning subroutine in RL,  we replace the parameterized function approximation (usually deep neural networks) with LLMs and use an ``in-context'' manner to update the LLMs, which eliminates the need for explicit parameter updates. Because LLMs are pretrained and undergo supervised fine-tuning, they provide much better estimates compared to learning from scratch, leading to an improvement in sample efficiency for online interactions. 

 \begin{algorithm}[h] \small
	\caption{The LLM learner-planner (\texttt{LLM-LR-PL}): A tree-search example. {(the deterministic case)}}
	\label{alg: example}
	\begin{algorithmic}[1]
	\STATE \textbf{input}: The memory buffer $\mathcal{D}$, the initial state $s$, the search breadth $B$, and the search depth $U$.
 \STATE \textbf{initialization}: Initialize the state array $\mathcal{S}_0 \leftarrow \{s\}$ and the action array $\mathcal{A}_0 \leftarrow \emptyset$.
\\--------------------------------------------- (the learning subroutine) --------------------------------------------
\STATE Set \texttt{Model} as an LLM instance prompted to use $\mathcal{D}$ as contexts to \textit{generate the next state}.
\STATE Set \texttt{Critic} as an LLM instance prompted to use $\mathcal{D}$ as contexts to \textit{estimate the value function}.
\\--------------------------------------------- (the planning subroutine) --------------------------------------------\vspace{-1pt}
\STATE Set \texttt{Elite} as an LLM instance prompted to use $\mathcal{D}$ as contexts to \textit{generate multiple candidate actions}.
     \FOR{$u = 0,\ldots,U$}
\STATE For each current state in $\mathcal{S}_{u}$, invoke \texttt{Elite} to generate $B$ candidate actions and store them in $\mathcal{A}_u$.
\STATE For each candidate action in $\mathcal{A}_{u}$, invoke \texttt{Model} to generate the next state and store it in $\mathcal{S}_{u+1}$.
     \ENDFOR
    \STATE For all resulting rollouts in $\mathcal{S}_0\times\mathcal{A}_0\times \cdots \times \mathcal{S}_U \times \mathcal{A}_U$, invoke \texttt{Critic} to evaluate the expected cumulative future reward and select the best one $(s_0^\dagger,a_0^\dagger,\ldots,s_U^\dagger, a_U^\dagger)$, where $s_0^\dagger = s$.
    \STATE \textbf{output}: The initial action $a_0^\dagger$ of the selected rollout.
	\end{algorithmic}
\end{algorithm}

     \noindent$\bullet$ The planning subroutine (Lines 5-11 in Algorithm \ref{alg: example}) maps \texttt{Model} and \texttt{Critic} to a future trajectory $(s_0^\dagger,a_0^\dagger,\ldots,s_U^\dagger, a_U^\dagger)$, where $s_0^\dagger$ is the current state $s_t$ and $a_0^\dagger$ is executed in the external environment as the current action $a_t$ during the acting phase. Intuitively, we prompt LLMs to generate an optimal policy (actor) for multiple future steps, which maximizes the value function (\texttt{Critic}). From an RL perspective (Sections \ref{sec:rl_llm} and \ref{sec:theory}), the planning subroutine approximately solves the Bellman equation \citep{sutton2018reinforcement}, where we solve the optimal policy  (or the corresponding action) given the estimated transition kernel and reward function (or critic) by LLMs. As two LLM instances from the learning subroutine, \texttt{Model} and \texttt{Critic} instantiate the estimated transition kernel and the estimated value function. Hence, we can simulate a given number of trajectories with \texttt{Model}, evaluate them with \texttt{Critic}, and obtain an improved policy, which is achieved by specially designed prompts instead of a numerical algorithm. By maximizing the expected cumulative future reward (instead of the immediate reward), the planning subroutine returns an optimized action that improves the long-term outcome. There are two error sources that affect the planning subroutine, namely the posterior uncertainty, which is inherited from \texttt{Model} and \texttt{Critic} due to the finite size of $\cD_{t_k}$, and the planning suboptimality, which is induced by the limited capacity for computation, e.g., the bounded width and depth of tree-search (Lines 6-9 in Algorithm \ref{alg: example}). Depending on the specific configuration of the state and action spaces (continuous versus discrete) and the transition and reward models (stochastic versus deterministic), we may choose to emulate the value iteration algorithm, the random shooting algorithm, or the Monte-Carlo tree-search algorithm. All of them allow \texttt{RAFA} to achieve provable sample efficiency guarantees as long as they satisfy a specific requirement of optimality (Definition \ref{def: epsoptplan}). For illustration, we emulate the tree-search algorithm and defer its stochastic variant to Appendix \ref{app:omitted_alg}.
    
\noindent\textbf{``Act for Now'' (Lines 7-10 in Algorithm \ref{alg: llm}).} At the current state $s_t$, the LLM agent executes the optimized action $a_t$ in the external environment, which is obtained from the reasoning routine. Specifically, we take the initial action $a_0^\dagger$ of the planned trajectory $(s_0^\dagger,a_0^\dagger,\ldots,s_U^\dagger, a_U^\dagger)$, where $s_0^\dagger = s_t$ and $a_0^\dagger = a_t$, and discard the remaining subset. At the next state $s_{t+1}$, the LLM agent replans another future trajectory $(s_0^\dagger,a_0^\dagger,\ldots,s_U^\dagger, a_U^\dagger)$ with $s_0^\dagger = s_{t+1}$ and $a_0^\dagger = a_{t+1}$. In other words, the acting phase follows a short-term subset of the long-term plan, which is regenerated at every new state. The LLM agent stores the collected feedback $(s_t,a_t,r_t,s_{t+1})$ in the memory buffer $\cD_t$ and queries a switching condition \texttt{If-Switch} to decide when to update the  memory buffer $\cD_{t_k}$, which is used in the reasoning routine as contexts for learning and planning. Intuitively, we incorporate the newest chunk of history $\cD_t - \cD_{t_k}$ to improve the current policy only when it carries significant novel information, e.g., when the LLM agent loses for the first time following a winning streak.

\begin{figure}[H]
    \includegraphics[width=\textwidth]{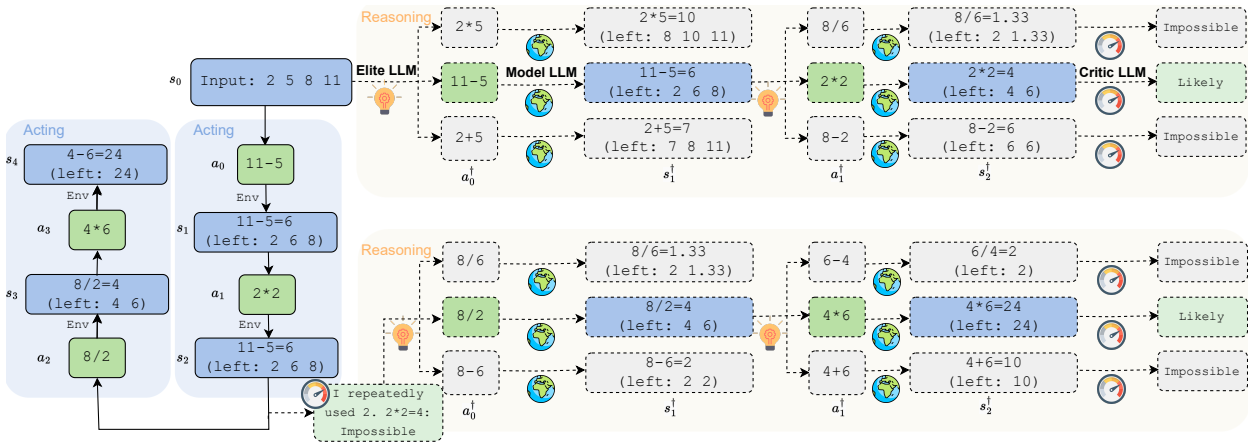}
    \caption{\texttt{RAFA} for Game~of 24. Actions are proposed (dotted) and selected (\textcolor{DarkGreen}{green}). Hallucinations that the same number can be reused are mitigated through interactions.} \label{fig:game24_illustration}
\end{figure}

\section{Experiment}\label{sec: experiments}
We evaluate \texttt{RAFA} in several text-based benchmarks, e.g., Game of 24, ALFWorld, BlocksWorld, and Tic-Tac-Toe. The detailed setups, results, and ablations are provided in Appendix~\ref{sec:add_exp}, while the detailed prompts are found in Appendix~\ref{sec:prompts}. We release all the codes on the page: \url{https://agentification.github.io/RAFA}. 

\subsection{Game of 24}
Game of 24 \citep{tree_of_thought} is a mathematical puzzle to obtain 24 from four natural numbers through basic arithmetic operations. The state is the  (possibly unfinished) current formula and the action is the next formula (or the modified part). 

\paragraph{Setup.} We emulate the tree-search algorithm to plan ($B\in \{1,2\}$). At the $t$-th step, \texttt{RAFA} learns from the memory buffer and switches to a new policy upon receiving an unexpected reward, which is the switching condition. After the $t$-th step, \texttt{RAFA} digests the collected feedback and generates a linguistic summary, which is saved into the memory buffer to avoid similar previous mistakes.

\begin{table}[H]
\centering
\begin{tabular}{l|ccccc}
\toprule
 & \texttt{RAFA} $(B=1)$ & \texttt{RAFA} $(B=2)$ & \texttt{ToT} $(B=1)$ & \texttt{ToT} $(B=2)$ & \texttt{Reflexion} \\
\midrule
\texttt{gpt-4} & 89\% & \textbf{93\%} & 73\% & 81\% &  21\% \\
\texttt{gpt-3.5} & 29\% & \textbf{46\%} &  10\% & 17\% &16\% \\
\bottomrule
\end{tabular}
\setlength{\belowcaptionskip}{-10pt}
\captionof{table}{\small Game of 24 results.}
\label{tab:game24_results}
\end{table}
\paragraph{Result.} \texttt{RAFA} attains SOTA performances as shown in Table~\ref{tab:game24_results}. \texttt{RAFA} achieves superior sample efficiency by mitigating hallucinations and avoid careless trials (Figures \ref{fig:game24_illustration} and \ref{fig:game24_sample_efficiency}). 

\begin{figure}[H]
\centering
\subfloat{
    \begin{minipage}[t]{0.47\linewidth}
        \centering
        \includegraphics[width=1\textwidth]{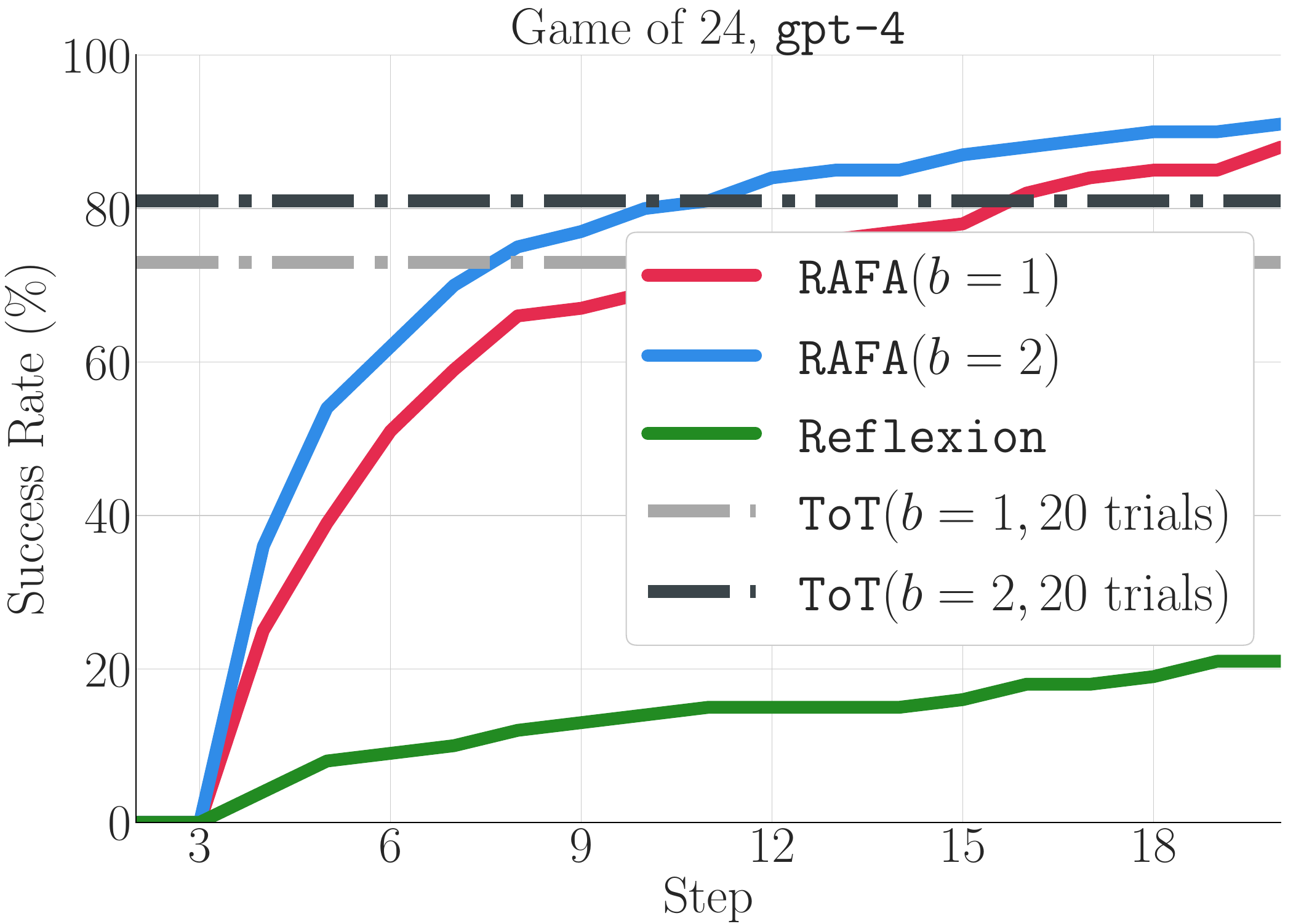}\\
    \end{minipage}
}
\subfloat{
    \begin{minipage}[t]{0.47\linewidth}
        \centering
        \includegraphics[width=1\textwidth]{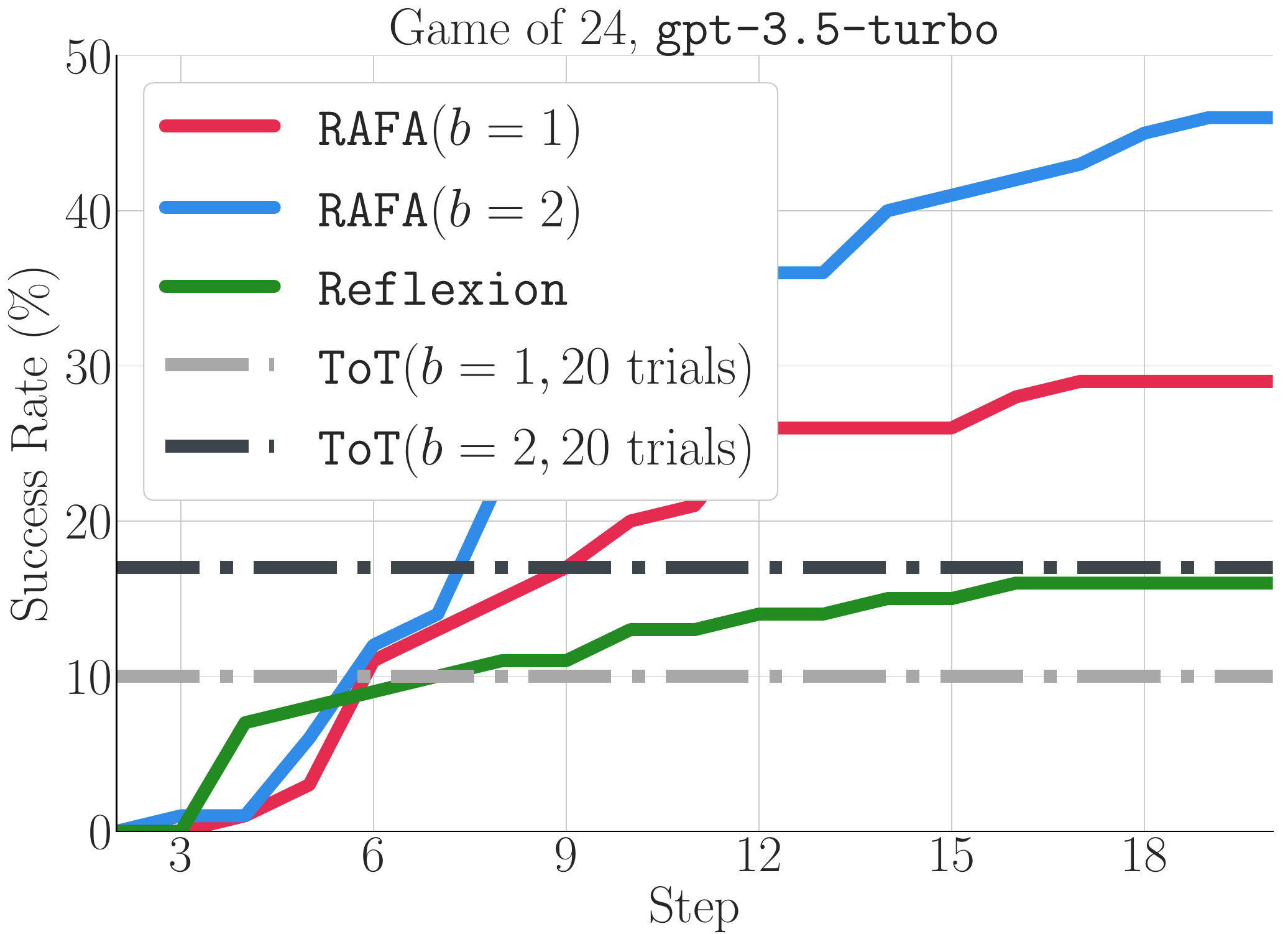}\\
    \end{minipage}
}
\setlength{\belowcaptionskip}{-10pt}
\caption{Sample efficiency on Game of 24.}
\label{fig:game24_sample_efficiency}
\end{figure}

\subsection{ALFWorld}
\begin{figure}[H]
\centering
\includegraphics[width=\textwidth]{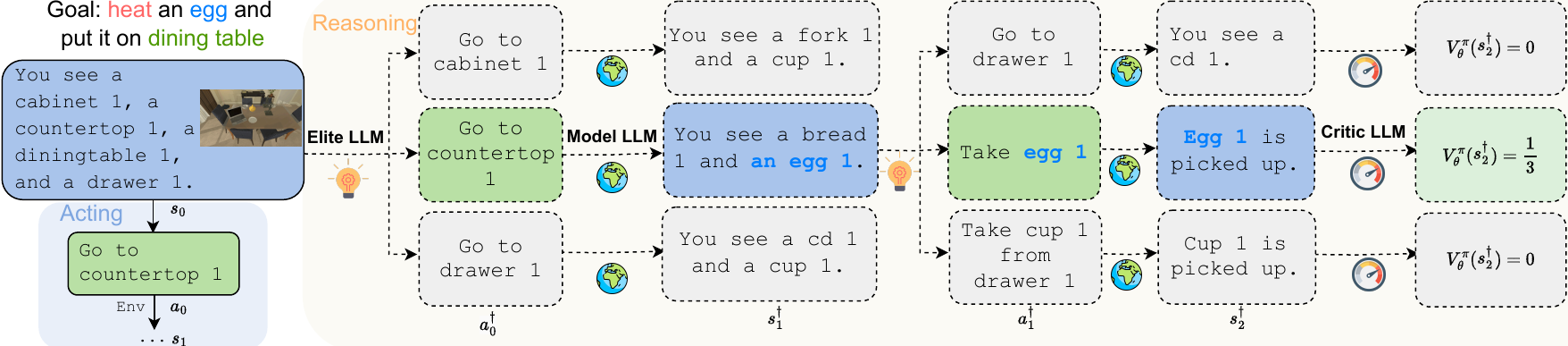}  
\caption{An illustration of \texttt{RAFA} in the ALFWorld environment.\vspace{-0.1cm}}
\label{fig:alf_demo}
\end{figure}

ALFWorld \citep{shridhar2020alfworld} is an interactive environment for embodied agent simulations, which encompasses $134$ household tasks in six overall categories (Table \ref{tab:alf}). We use \texttt{gpt-3} (\texttt{text-davinci-003}).

\begin{wrapfigure}{r}{0.5\textwidth}
  \centering
  % \vspace{-0.2cm}
  \includegraphics[width=\linewidth]{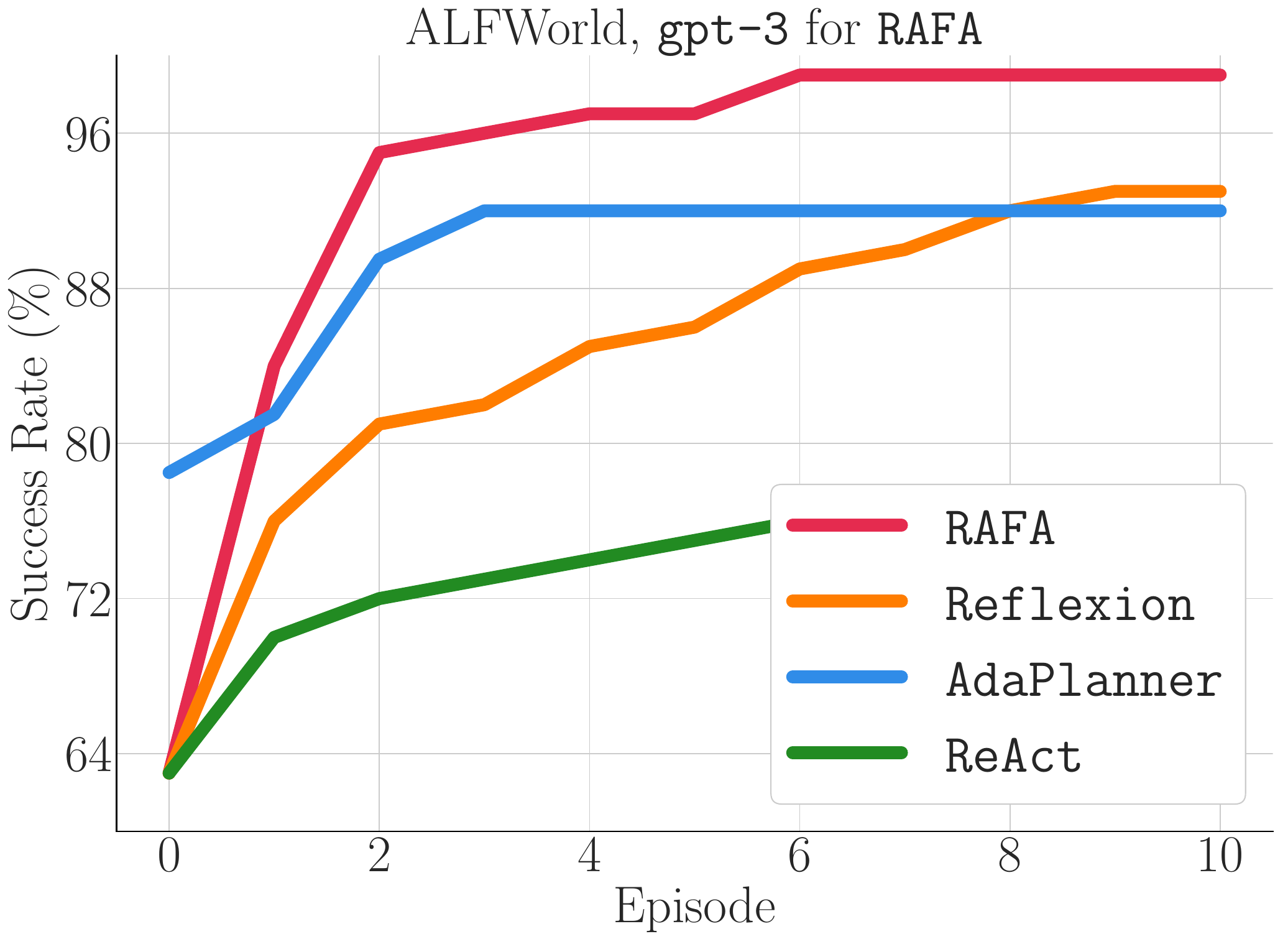}
  % \setlength{\belowcaptionskip}{-20pt}
  % \vspace{-0.6cm}
  \caption{Sample efficiency on ALFWorld.}
  \label{fig:alf_curve}
 % \vspace{-2cm}
\end{wrapfigure}
\paragraph{Setup.} We emulate the tree-search algorithm to plan ($B=2$). \texttt{RAFA} invokes \texttt{Critic} to evaluate the completed portion of the desired goal and switches to a new policy after $20$ consecutive failures.

\paragraph{Result.} \texttt{RAFA} outperforms various existing frameworks (Figure \ref{fig:alf_curve}). The better performance of \texttt{AdaPlanner} at the initial episode is attributed to a handcrafted set of programs for rejecting suboptimal candidate trajectories, which is challenging to construct without the domain knowledge of a specific task. One such example is the PickTwo category.

\begin{table}
\centering
\begin{tabularx}{\textwidth}{c|cccccc|c}
\toprule
           & Pick   & Clean & Heat   & Cool   & Examine & PickTwo & \hspace{0.6cm} Total \\ \midrule
\texttt{BUTLER}     & 46.00  & 39.00 & 74.00  & \textbf{100.00} & 22.00   & 24.00   & \hspace{0.6cm} 37.00 \\ 
\texttt{ReAct}  & 66.67  & 41.94 & 91.03  & 80.95  & 55.56   & 35.29   & \hspace{0.6cm} 61.94 \\ 
\texttt{AdaPlanner} & \textbf{100.00} & \textbf{96.77} & 95.65  & \textbf{100.00} & \textbf{100.00}  & 47.06   &  \hspace{0.6cm} 91.79 \\ 
\texttt{Reflexion}  & \textbf{100.00} & 90.32 & 82.61  & 90.48  & \textbf{100.00}   & 94.12   & \hspace{0.6cm} 92.54 \\ 
\texttt{RAFA}   & \textbf{100.00} & \textbf{96.77} & \textbf{100.00} & \textbf{100.00} & \textbf{100.00}  & \textbf{100.00}  & \hspace{0.6cm} \textbf{99.25} \\  \bottomrule
\end{tabularx}
\caption{ALFWorld results (success rates \%).}
\label{tab:alf}
\end{table}

\subsection{BlocksWorld}
BlocksWorld \citep{rap} contains tasks to arrange
blocks in specific configurations.

\paragraph{Setup.} We use the \texttt{Vicuna} \citep{zheng2023judging} model and emulate the MCTS algorithm to plan. 

\begin{figure}[H]
\centering
\includegraphics[width=0.9\textwidth]{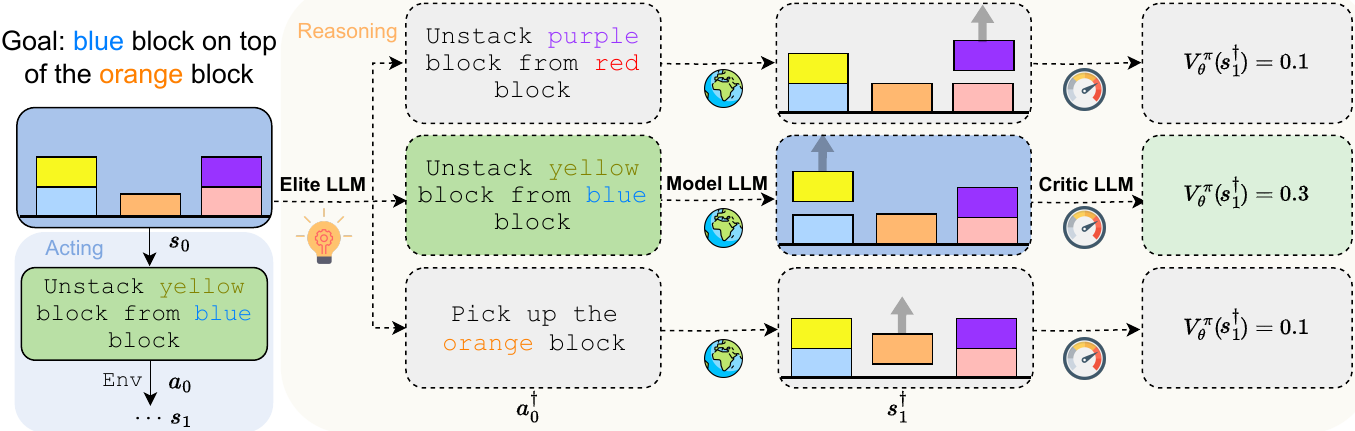}
\setlength{\belowcaptionskip}{-10pt}
\caption{\small \texttt{RAFA} for BlocksWorld.}
\label{fig:blocksworld-example}
\end{figure}

\paragraph{Result.} \texttt{RAFA} achieves superior success rates across multiple \texttt{Vicuna} versions (Figure \ref{fig:step4}). Comparisons with \texttt{CoT} and \texttt{RAP} demonstrate how the learning subroutine improves the planning optimality.  

\begin{figure}[H]
    \centering
    \subfloat{%
\includegraphics[width=0.4\linewidth]{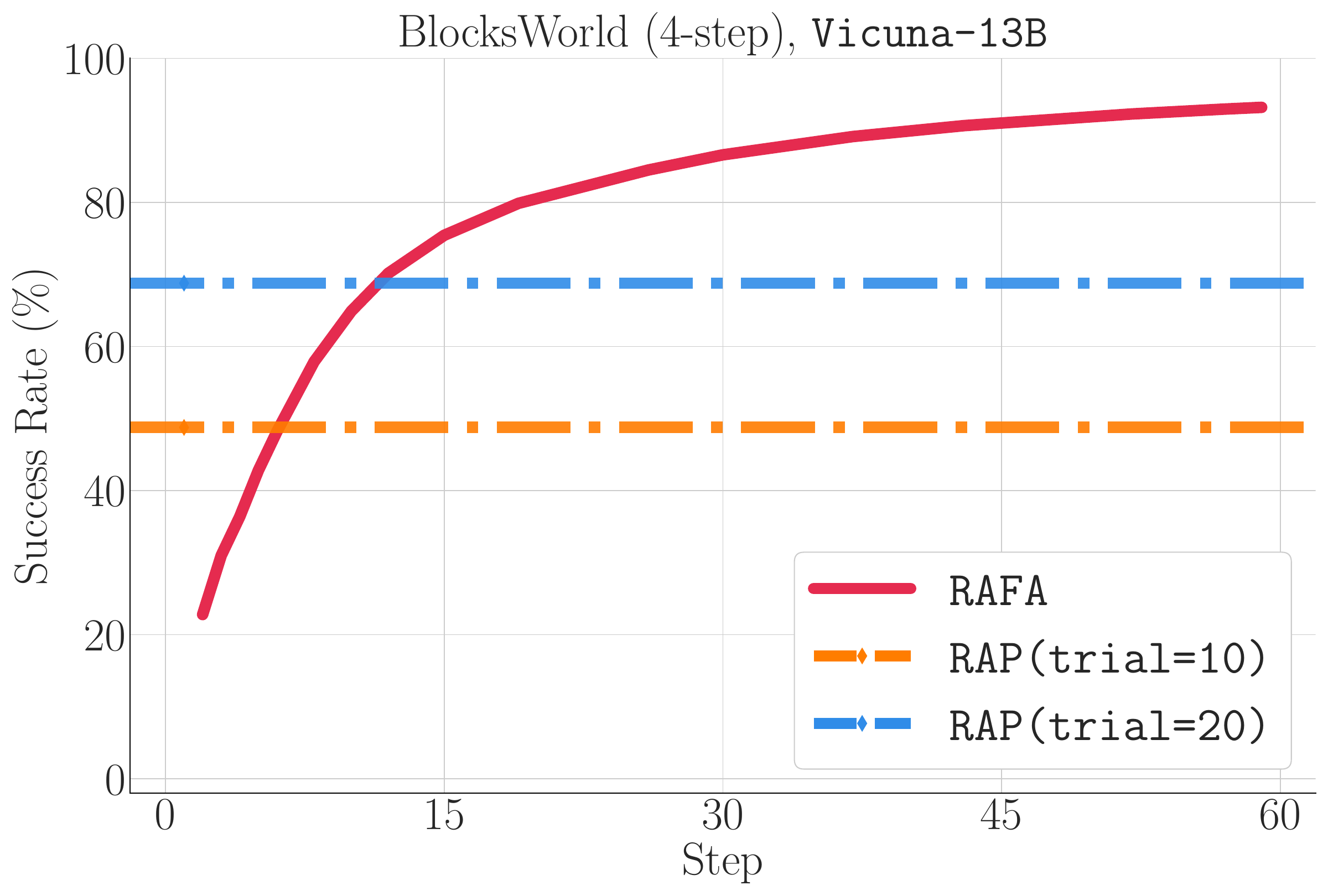}}
    \subfloat{%
        \includegraphics[width=0.4\linewidth]{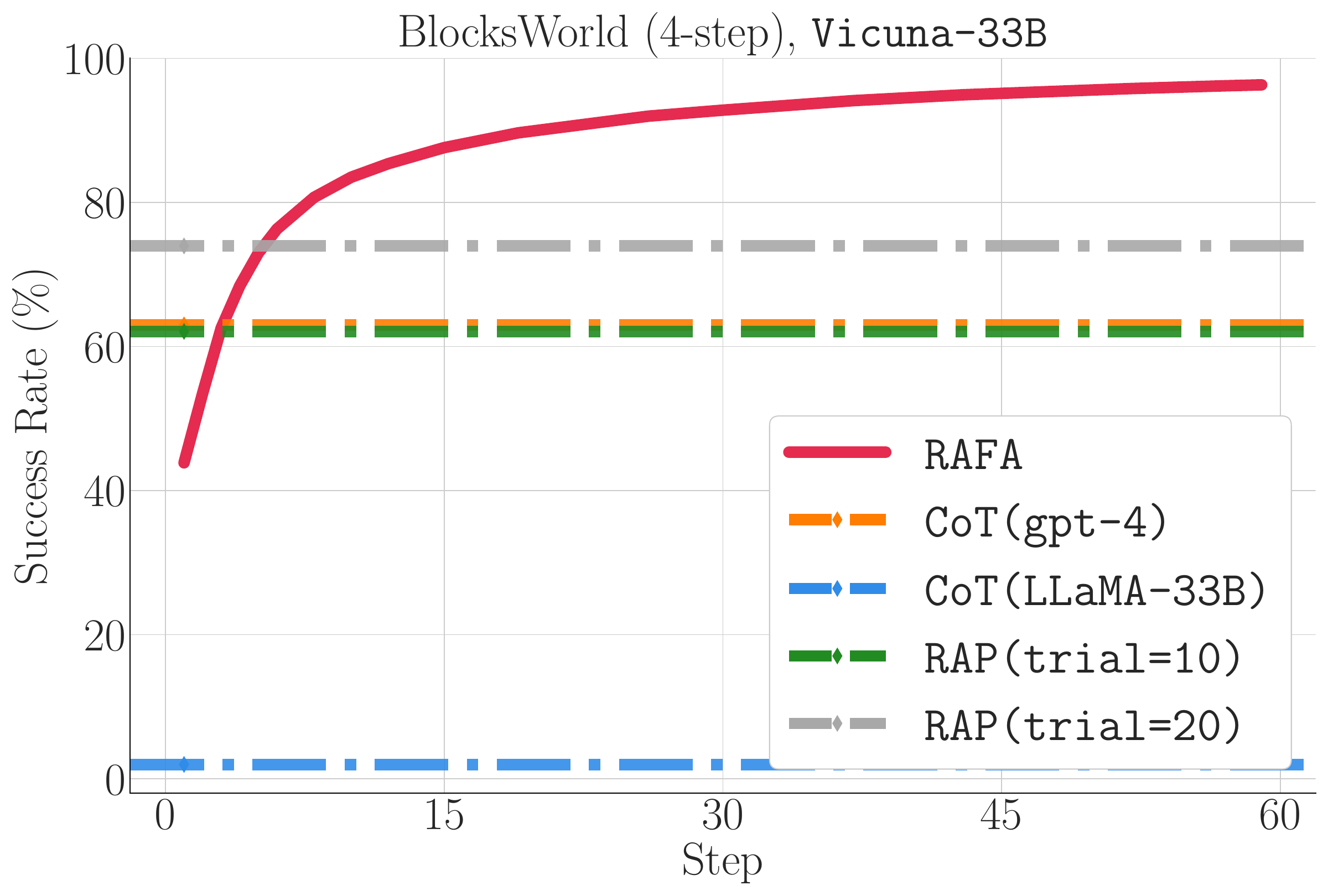}}\\
    \subfloat{%
       \includegraphics[width=0.4\linewidth]{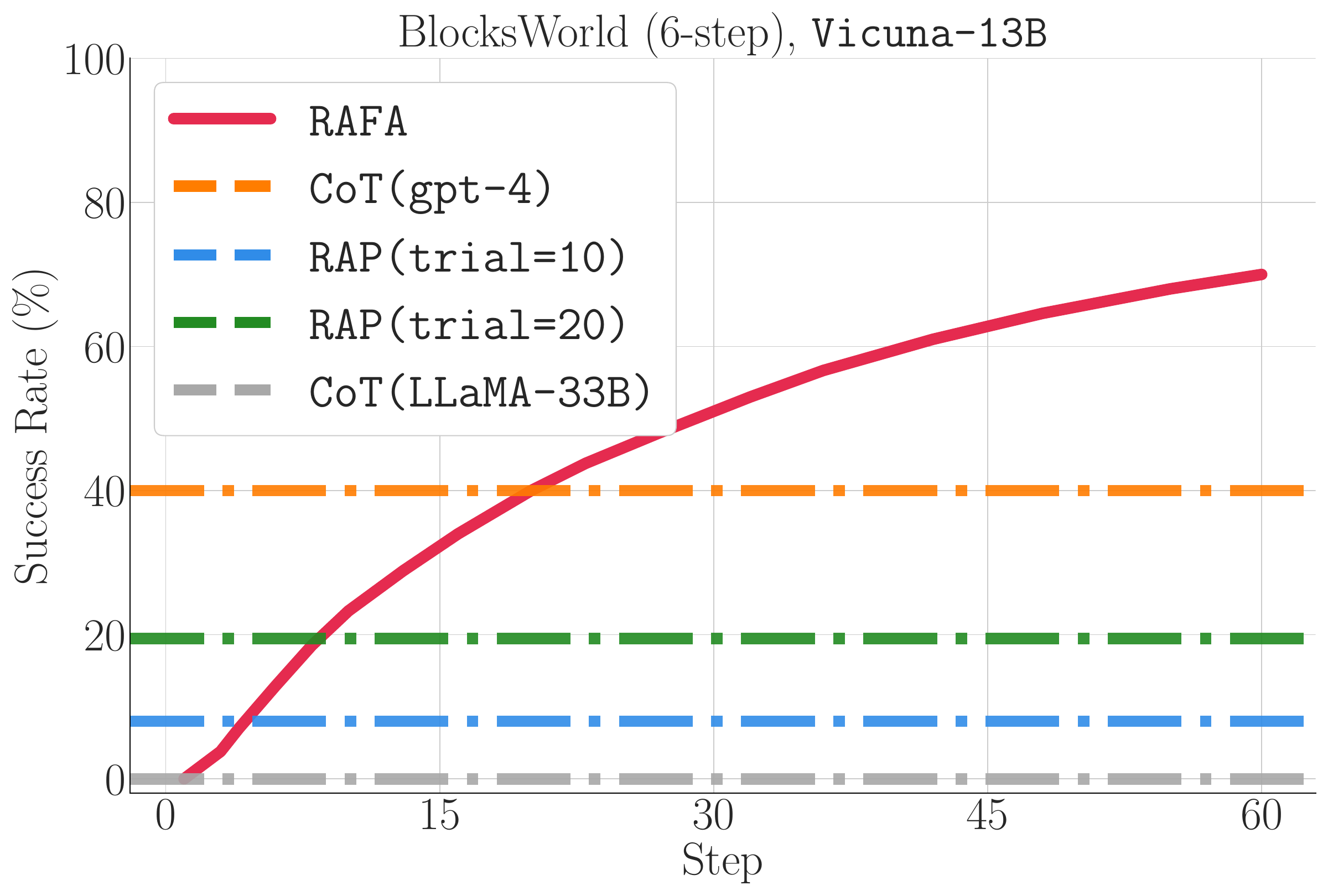}}
    \subfloat{%
        \includegraphics[width=0.4\linewidth]{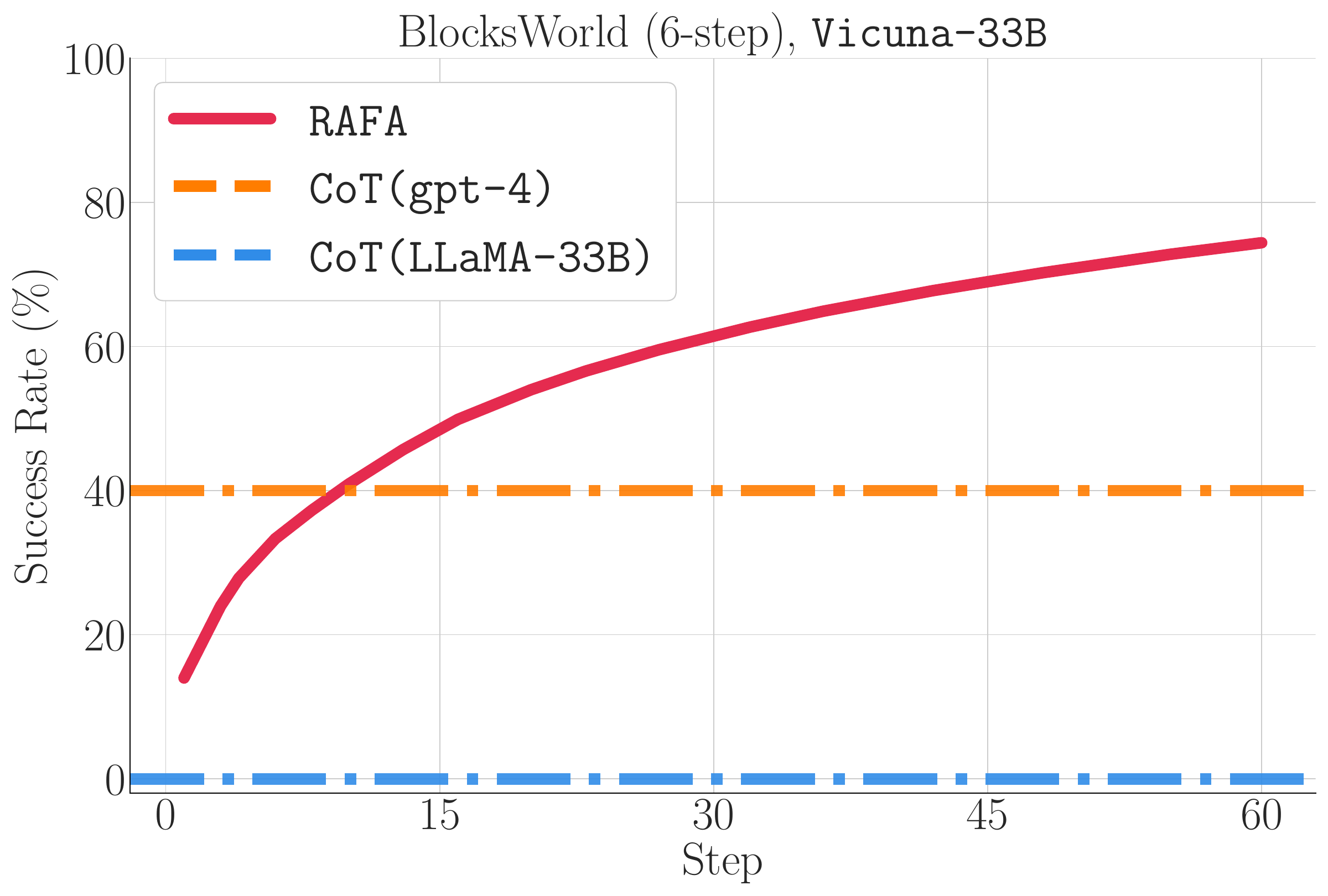}}
\setlength{\belowcaptionskip}{-10pt}
  \caption{\small Sample efficiency on BlocksWorld ($4$ and $6$ are the minimum numbers of steps for solving a specific task). \texttt{CoT} is prompted by four in-context examples.}
    \label{fig:step4}
\end{figure}

\subsection{Tic-Tac-Toe}

\begin{wrapfigure}{r}{0.5\textwidth}
  \centering
  \includegraphics[width=\linewidth]{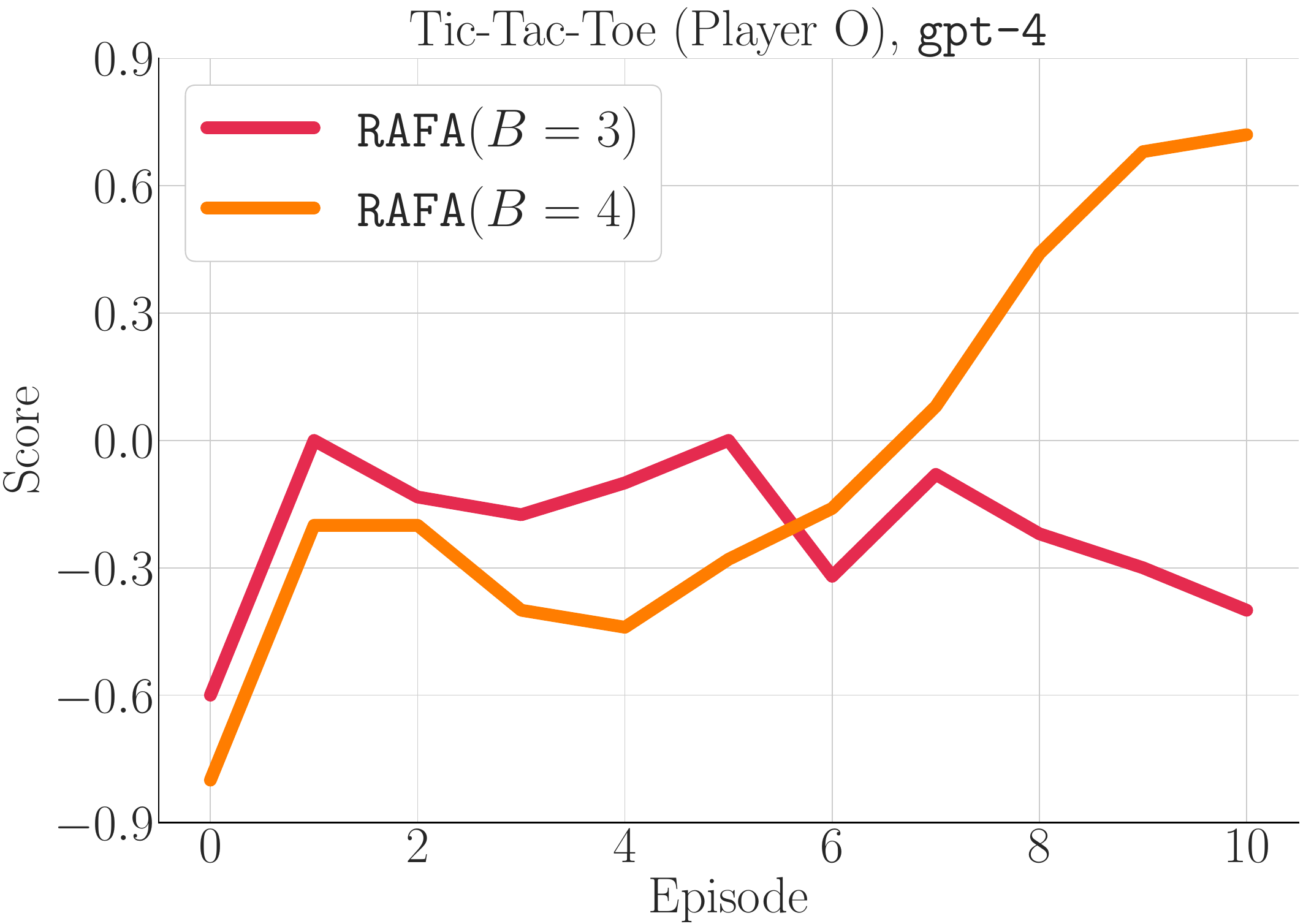}
  \setlength{\belowcaptionskip}{-10pt}
  \vspace{-0.6cm}
    \caption{Sample efficiency on Tic-Tac-Toe ($0$ means tie).}
      \vspace{-1.2cm}
    \label{fig:tictactoe_results}
\end{wrapfigure}
Tic-Tac-Toe \citep{beck2008combinatorial} is a competitive game where the X and O sides take turns to place~marks. \texttt{RAFA} invokes  \texttt{Model} to simulate the transition and opponent dynamics (Figure \ref{fig:tictactoe_rafa}).

\paragraph{Setup.}
We use \texttt{gpt-4} and emulate the tree-search algorithm to plan ($B\in \{3,4\}$). \texttt{RAFA} switches to a new policy when (a) the predicted state differs from the observed one, (2) the predicted action of opponents differs from the observed one, or (3) \texttt{Critic} gives the wrong prediction of the game status. Here, X has an asymmetric advantage (winning surely if played properly).

\paragraph{Result.} 
\texttt{RAFA} (playing O) matches and beats \texttt{gpt-4} for $T=5$ and $T=7$ (Table \ref{tab:tictactoe}), although O is destined to lose. The ablation study ($B=3$ versus $B=4$) illustrates how the planning suboptimality affects the sample efficiency (Figure \ref{fig:tictactoe_results}).

\begin{figure}[H]
\centering
\includegraphics[width=0.85\textwidth]{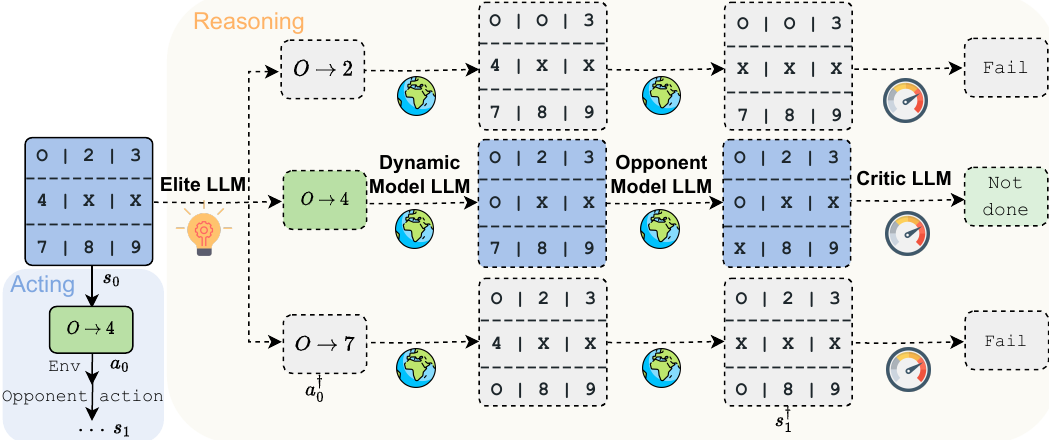}
\setlength{\belowcaptionskip}{-10pt}
\caption{\texttt{RAFA} (playing O) for Tic-Tac-Toe.}
\label{fig:tictactoe_rafa}
\end{figure}

\begin{table}[H]
    \begin{tabularx}{\linewidth}{p{2.8cm}|*{6}{p{2.8cm}}}
      \toprule
      \diagbox[height=1.35em,width=7.5em]{X}{O} & \texttt{gpt-4} & \texttt{RAFA} \!$(T\!=\!1)$ & \texttt{RAFA} \!$(T\!=\!5)$ & \texttt{RAFA} \!$(T\!=\!7)$ \\ 
      \midrule
      \texttt{gpt-4} & 90\%, 0\%, \textbf{10\%} & 90\%, 0\%, \textbf{10\%} & 50\%, 0\%, \textbf{50\%} & 0\%, 0\%, \textbf{100\%} \\
      \bottomrule
    \end{tabularx}
        \setlength{\belowcaptionskip}{-10pt}
    \captionof{table}{\small Tic-Tac-Toe Results. We set $B=4$ and report the winning rate of X, the tie rate, and the winning rate of O.}
    \label{tab:tictactoe}
\end{table}

\section{Theoretical Analysis}
\label{sec:theory}
%\label{app:theory}
In this section, we provide the theoretical results in this paper. In Section \ref{app:plan}, we characterize the requirement for the planning subroutine in  \texttt{RAFA} and show the value iteration algorithm with a truncated horizon can be an example of the desired planner. In Section \ref{sec:bma}, we show that the LLM with a posterior alignment performs BMA, which supports Claim \ref{claim:llm} in theory. We present the regret analysis for \texttt{RAFA} in Section \ref{app:theory} to explain its superior empirical performance, where we provide necessary assumptions and the regret bound of \texttt{RAFA}. In Section \ref{app:theory_explore}, we show that \texttt{RAFA} can be modified to encourage efficient exploration for more complex tasks such that \texttt{RAFA} is still sample-efficient without the concentrability assumption in Section \ref{app:theory}.

\subsection{Planning Optimality}\label{app:plan}

%\newtheorem{manualtheoreminner}{Definition}
%\newenvironment{manualtheorem}[1]{%
%%  \manualtheoreminner
%}{\endmanualtheoreminner}

 %For the  enlarging the lookahead window reduces the planning suboptimality (``reason for future''). 
 To characterize the requirement for the planning subroutine in  \texttt{RAFA} (Algorithm \ref{alg: llm}), we define the $\epsilon$-optimal planner as follows. 
 \begin{definition}[{$\epsilon$-Optimal Planner}]\normalfont
 Denote $\{V\mid V \text{ is a value function}\}$ as  $\mathcal{V}$. 
A planning algorithm $\texttt{PL}^\epsilon:\mathcal{P}\times\mathcal{R}\mapsto\Pi\times\mathcal{V}$ is an $\epsilon$-optimal planner if $\texttt{PL}^\epsilon( P, r) = (\pi,V)$, where $ |Q(s,a) -  r(s,a) - \gamma\cdot( P V)(s,a)|\le \eps$ and $ V(s) = \max_{a}Q(s,a)=Q(s,\pi(s))$ for all $(s,a)\in\cS\times\cA.$\label{def: epsoptplan}
\end{definition} In other words, an $\eps$-optimal planner with a model (transition kernel and reward function) can generate a policy to approximately maximize the corresponding long-term value function instead of the myopic reward with an approximate error limit $\eps$. 
As an instance of the planner satisfying Definition \ref{def: epsoptplan}, we present the value iteration algorithm (Algorithm \ref{alg: fvi}) with a truncated horizon $U$, i.e., a finite length of the lookahead window as the $\epsilon$-optimal planner in Algorithm~\ref{alg: theory_gen}. The following proposition ensures that Algorithm \ref{alg: fvi} satisfies Definition \ref{def: epsoptplan}.
\begin{prop}\normalfont\label{prop:viisepsopt}
    Algorithm \ref{alg: fvi} is an $\epsilon$-optimal planner as long as we set $U\ge 1+\lceil \log_{\gamma}(\eps/L)\rceil$ and any value function is bounded by $L\ge0$.
\end{prop}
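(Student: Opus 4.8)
The plan is to analyze the value iteration algorithm (Algorithm \ref{alg: fvi}) as a finite-horizon truncation of the Bellman optimality operator and to bound the gap between the truncated-horizon $Q$-function it returns and the ``one-step Bellman consistency'' requirement of Definition \ref{def: epsoptplan}. Concretely, let $B_\theta$ denote the Bellman optimality operator associated with the input model $(P,r)$, i.e. $(B V)(s,a) = r(s,a) + \gamma (PV)(s,a)$ and then $V \mapsto \max_a (BV)(s,a)$. Algorithm \ref{alg: fvi} initializes some $V_0$ (bounded by $L$) and applies the operator $U$ times to obtain $V_U$, then outputs $\pi(s) = \argmax_a (B V_{U-1})(s,a)$ together with the value $V_U$. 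The quantity we need to control is $|Q(s,a) - r(s,a) - \gamma (PV_U)(s,a)|$, where $Q(s,a) = (BV_{U-1})(s,a)$ is the $Q$-function produced at the last iteration; this difference equals $\gamma | (P(V_{U-1} - V_U))(s,a)| \le \gamma \| V_{U-1} - V_U \|_\infty$.

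The key step is then a standard contraction argument: the Bellman optimality operator is a $\gamma$-contraction in the sup-norm, so $\| V_u - V_{u-1}\|_\infty \le \gamma \|V_{u-1} - V_{u-2}\|_\infty \le \cdots \le \gamma^{u-1}\|V_1 - V_0\|_\infty$. Since every value function in play is bounded by $L$ (both $V_0$ and $V_1$, as $V_1$ is again a value function of the truncated problem), we have $\|V_1 - V_0\|_\infty \le 2L$ — or more crudely $\le L$ after absorbing constants into the rounding in the statement; I would track the constant carefully but the order is $\gamma^{u} L$. Hence $\|V_{U-1} - V_U\|_\infty \le \gamma^{U-1} \cdot O(L)$, and therefore $|Q(s,a) - r(s,a) - \gamma(PV_U)(s,a)| \le \gamma^{U} \cdot O(L)$. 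Setting $U \ge 1 + \lceil \log_\gamma(\epsilon/L)\rceil$ makes $\gamma^{U-1} \le \epsilon/L$ (note $\log_\gamma$ is decreasing since $\gamma \in (0,1)$, so the inequality flips appropriately), which yields the bound $\le \epsilon$. The second requirement of Definition \ref{def: epsoptplan}, namely $V(s) = \max_a Q(s,a) = Q(s,\pi(s))$, holds by construction since $\pi$ is defined as the greedy policy with respect to $Q$ and $V_U(s) = \max_a (BV_{U-1})(s,a) = \max_a Q(s,a)$.

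I expect the main (minor) obstacle to be bookkeeping rather than conceptual: matching the exact constant in the exponent so that the stated threshold $U \ge 1 + \lceil \log_\gamma(\epsilon/L)\rceil$ produces exactly $\epsilon$ and not, say, $2\epsilon$ — this depends on whether the algorithm outputs $V_U$ or $V_{U-1}$, on whether the relevant bound is $\|V_1-V_0\|_\infty \le L$ or $\le 2L$, and on the precise indexing of iterations in Algorithm \ref{alg: fvi}. A secondary point to verify is that the intermediate iterates $V_u$ remain genuine value functions bounded by $L$ (so that the hypothesis ``any value function is bounded by $L$'' applies to them), which follows because one application of $B$ to a bounded function with bounded rewards stays bounded, but one should confirm the algorithm's normalization makes this literally true. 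Modulo these constant-chasing details, the proof is the one-line contraction estimate above combined with the greedy-policy identity.
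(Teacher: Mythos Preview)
Your proposal is correct and follows essentially the same argument as the paper's proof: reduce the $\epsilon$-optimality condition of Definition \ref{def: epsoptplan} to $\gamma\|V^{(1)}-V^{(2)}\|_\infty$ via the identity $Q^{(1)}=r+\gamma PV^{(2)}$, then bound consecutive iterates by the $\gamma$-contraction of the Bellman operator to obtain $\gamma^{U-1}L$, and verify the greedy-policy requirement by construction. The only discrepancies are the indexing caveats you already flag (the paper runs the loop for $u=U-1,\ldots,1$, so $U-1$ iterations from $V^{(U)}=0$, yielding exactly $\gamma^{U-1}L$ with no factor of $2$), which are purely bookkeeping and not substantive.
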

\begin{proof}
    See Appendix \ref{prop:C2} for a detailed proof.
\end{proof}
\begin{algorithm}[H]
	\caption {$\epsilon$-Optimal planner: The value iteration algorithm with a truncated horizon.}%, ($U$-\texttt{VI})}
	\label{alg: fvi}
	\begin{algorithmic}[1]
	\STATE  \textbf{input}: The model $(P,r)$ and the truncated horizon $U$. 
 \STATE  \textbf{initialization}: Set the value function $V_\theta^{(U)} (\cdot) \leftarrow 0$.
    \FOR{$u=U-1,\ldots,1$}
    \STATE  Set the value function $V^{(u)}(\cdot) \leftarrow \max_{a\in\mathcal{A}}Q^{(u)}(\cdot,a)$, where $Q^{(u)}(\cdot,\cdot) \leftarrow r(\cdot,\cdot)+\gamma (PV^{(u+1)})(\cdot,\cdot)$.
    \ENDFOR
    \STATE  \textbf{output}: The greedy policy $\pi(\cdot) = \arg\max_{a\in\mathcal{A}} Q^{(1)}(\cdot,a)$ and the value function $V^{(1)}$.
	\end{algorithmic}
\end{algorithm}
 Alternatively, we may choose to emulate the tree-search algorithm, the random shooting algorithm, or the Monte-Carlo tree-search algorithm. In the tree-search example (Lines 5-11 in Algorithm \ref{alg: example}), $\epsilon$ decreases as the search breadth $B$ and depth $U$ increase. Note that, as long as we emulate an $\epsilon$-optimal planner, we are able to establish provable sample efficiency guarantees.
 \subsection{LLMs with Posterior Alignments 
 Perform Bayesian Model Averaging (BMA)} \label{sec:bma}In the following, we analyze Claim \ref{claim:llm} from the theoretical perspective. For LLMs used in \texttt{RAFA}, we denote $P^{\texttt{LLM}}(\xi_{(s,a)}\mid \cD, s, a)$ as the probability measure of the predicted state-reward pair given the query state-action pair and the memory buffer $\mathcal{D}$ as the in-context dataset. 
 Induced by $P^{\texttt{LLM}}$, we also denote $P_{\texttt{LLM}(\mathcal{D})}$ and $r_{\texttt{LLM}(\mathcal{D})}$ as the estimated transition kernel and reward function by LLMs, respectively.

For the simplicity of analysis, we assume that all LLMs have posterior alignments in the tasks that we study, that is, their posterior distributions of the reward and the next state given the current state-action pair and any in-context dataset match the posteriors in these tasks. We formulate this assumption as follows.
\begin{assumption}
    [Posterior Alignment] We assume that LLMs are aligned with the posterior of the state and reward in the underlying MDP, which is formulated as \label{as:perfect}
    \begin{align*}
P^{\texttt{LLM}}\bigl(\xi_{(s,a)}\big|\,\cD,s,a\bigr) &= \mathbb{P}_{\text{post}}\bigl(\xi_{(s,a)}\big|\, \mathcal{D},s,a\bigr),
    \end{align*}
    for any in-context dataset $\mathcal{D}=\{(s_i,a_i,r_i,s_{i}^\prime)\}_{i=0}^I$ with size $I$, query state-action pair $(s,a)$, reward $r$, and state $s^\prime$.
    Here, the posterior $\mathbb{P}_{\text{post}}$ is defined in \eqref{eq:posterior}.
\end{assumption}
We remark that the posterior alignment in Assumption \ref{as:perfect} comes from the in-context ability of LLMs, which is widely studied in \citet{lee2023supervised,wies2024learnability,xie2021explanation}. We also remark that Assumption \ref{as:perfect} does not mean that LLMs can trivially make the optimal decision at each step in the underlying MDP: (1) Though the posterior distributions of state and reward are aligned, LLMs still need to be instructed to maximize the long-term value (via explicit planning) instead of the myopic reward. (2) LLMs still require online interactions to enlarge the in-context dataset $\cD$ such that their prediction uncertainty can be reduced from the prior uncertainty. 
In Appendix \ref{app:relax}, we also discuss how to relax Assumption \ref{as:perfect} to accommodate an additional error term in the regret bound derived by our analysis, where we assume that LLMs are maximum likelihood estimatiors (MLEs) on the pretraining dataset with uniform coverage.   Based on Assumption \ref{as:perfect}, we prove that LLMs with posterior alignments perform BMA in the model estimation in the following proposition.
\begin{prop}
    [LLMs with Posterior Alignments Perform BMA] Under Assumption \ref{as:perfect}, the LLM predictions satisfy  
   \begin{align*}
       r_{\texttt{LLM}(\mathcal{D})}(s,a)+\gamma\cdot(P_{\texttt{LLM}(\mathcal{D})}V)(s,a)  =\mathbb{E}_{\theta\sim \mathbb{P}_{\text{post}}(\cdot\mid\cD)} [(B_{\theta} V)(s,a)]
   \end{align*}
    for any dataset $\mathcal{D}=\{(s_i,a_i,r_i,s_{i}^\prime)\}_{i=0}^I$ with size $I$, value function $V$, and query state-action pair $(s,a)\in\cS \times\cA$. Here,  $\mathbb{P}_{\text{post}}(\theta\mid \mathcal{D})$ is the posterior of $\theta^\star$ given  $\mathcal{D}$ in the underlying MDP. \label{prop:llm_bma}
\end{prop}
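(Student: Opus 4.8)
The plan is to unfold both sides of the claimed identity using the definitions of the LLM-induced model, the Bellman operator, and the posterior of $\xi_{(s,a)}$ from \eqref{eq:posterior}, and then match them term by term. First I would note that, by the definition of $P^{\texttt{LLM}}$ as a probability measure over the state-reward pair $\xi_{(s,a)} = (s',r)$, the LLM-induced reward and transition are obtained by taking the appropriate marginals and conditional expectations of $P^{\texttt{LLM}}(\xi_{(s,a)}\mid \cD,s,a)$: concretely, $r_{\texttt{LLM}(\cD)}(s,a) = \mathbb{E}[r \mid \cD,s,a]$ under $P^{\texttt{LLM}}$, and $(P_{\texttt{LLM}(\cD)}V)(s,a) = \mathbb{E}[V(s') \mid \cD,s,a]$ under $P^{\texttt{LLM}}$. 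Hence the left-hand side equals $\mathbb{E}_{\xi_{(s,a)} \sim P^{\texttt{LLM}}(\cdot\mid \cD,s,a)}[r + \gamma V(s')]$.

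Next I would invoke Assumption~\ref{as:perfect} to replace $P^{\texttt{LLM}}(\cdot\mid \cD,s,a)$ by $\mathbb{P}_{\text{post}}(\cdot\mid \cD,s,a)$, so that the left-hand side becomes $\mathbb{E}_{\xi_{(s,a)} \sim \mathbb{P}_{\text{post}}(\cdot\mid \cD,s,a)}[r + \gamma V(s')]$. Then I would substitute the explicit form \eqref{eq:posterior}, namely $\mathbb{P}_{\text{post}}(\xi_{(s,a)}\mid \cD,s,a) = \mathbb{E}_{\theta\sim\mathbb{P}_{\text{post}}(\cdot\mid\cD)}[P_\theta(s'\mid s,a)\cdot \mathbf{1}(r = r_\theta(s,a))]$, and use Fubini/Tonelli to interchange the expectation over $\theta$ with the integral/sum over $(s',r)$ — this is justified since everything is nonnegative or, after assuming value functions are bounded (as in Proposition~\ref{prop:viisepsopt}), absolutely integrable. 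After the interchange, the inner integral over $\xi_{(s,a)}$ against the measure $P_\theta(s'\mid s,a)\,\mathbf{1}(r = r_\theta(s,a))$ collapses the reward delta and produces exactly $r_\theta(s,a) + \gamma\,\mathbb{E}_{s'\sim P_\theta(\cdot\mid s,a)}[V(s')] = r_\theta(s,a) + \gamma (P_\theta V)(s,a) = (B_\theta V)(s,a)$, by the definition of $B_\theta$ given right after \eqref{eq:bellman_opt}. Taking the outer expectation over $\theta\sim\mathbb{P}_{\text{post}}(\cdot\mid\cD)$ then yields the right-hand side $\mathbb{E}_{\theta\sim\mathbb{P}_{\text{post}}(\cdot\mid\cD)}[(B_\theta V)(s,a)]$, completing the argument.

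The argument is essentially a bookkeeping exercise once the definitions are lined up, so I do not expect a deep obstacle; the one place that warrants care is the interchange of expectations and the separation of the LLM's joint prediction over $(s',r)$ into its effect on the reward term and the transition term. In particular, one must be careful that the identity \eqref{eq:posterior} indeed describes a valid joint law on $(s',r)$ (it is a mixture over $\theta$ of product-type laws in which $r$ is deterministic given $\theta$) and that ``$r_{\texttt{LLM}(\cD)}$'' and ``$P_{\texttt{LLM}(\cD)}$'' are defined precisely as the reward-marginal mean and state-conditional of this joint law — this is exactly the content of the sentence ``Induced by $P^{\texttt{LLM}}$, we also denote $P_{\texttt{LLM}(\mathcal{D})}$ and $r_{\texttt{LLM}(\mathcal{D})}$ as the estimated transition kernel and reward function by LLMs.'' A secondary subtlety is that $r$ under the mixture is not deterministic (different $\theta$ give different $r_\theta(s,a)$), so $r_{\texttt{LLM}(\cD)}(s,a)$ is genuinely a posterior average $\mathbb{E}_{\theta\sim\mathbb{P}_{\text{post}}(\cdot\mid\cD)}[r_\theta(s,a)]$ rather than a single value; the linearity of expectation is what makes the two averaged terms recombine into $\mathbb{E}_\theta[(B_\theta V)(s,a)]$. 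I would state these definitional conventions explicitly at the start of the proof and then carry out the three substitutions (marginalization, Assumption~\ref{as:perfect}, formula \eqref{eq:posterior}) in sequence.
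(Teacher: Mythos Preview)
Your proposal is correct and follows essentially the same approach as the paper: invoke Assumption~\ref{as:perfect} to replace $P^{\texttt{LLM}}$ by the posterior law \eqref{eq:posterior}, apply Fubini to swap the $\theta$-expectation with the integral over $(s',r)$, and then identify the inner integral as $(B_\theta V)(s,a)$. The only cosmetic difference is that the paper treats the transition term and the reward term separately (obtaining $(P_{\texttt{LLM}(\cD)}V)(s,a)=\mathbb{E}_{\theta}[(P_\theta V)(s,a)]$ and $r_{\texttt{LLM}(\cD)}(s,a)=\mathbb{E}_{\theta}[r_\theta(s,a)]$ as two intermediate identities) before combining by linearity, whereas you bundle them into the single functional $r+\gamma V(s')$ from the start.
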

\begin{proof}
    [Proof of Proposition \ref{prop:llm_bma}]
    See the detailed proof in Appendix \ref{pf:llm_bma}.
\end{proof}

The proof of Proposition \ref{prop:llm_bma} can be found in Appendix \ref{pf:llm_bma}. 
Some variants of Proposition \ref{prop:llm_bma} can be found in various literature \citep{lee2023supervised,zhang2022analysis,zhang2023and}. In particular, 
\citet{zhang2022analysis} establish the theoretical equivalence between BMA and the ideal attention architecture and analyze the generalization error rate of  LLMs. By Proposition \ref{prop:llm_bma}, LLMs can provide a more certain and accurate estimate for the data-generating model with more collected feedback, as the uncertainty in the posterior is reduced with more data. Thus, Proposition \ref{prop:llm_bma} supports Claim \ref{claim:llm} in theory. 

\subsection{Regret Bound of \texttt{RAFA}}\label{app:theory}
\begin{algorithm}[H]
	\caption {Reason for future, act for now (\texttt{RAFA}): The theoretical version.}
	\label{alg: theory_gen}
	\begin{algorithmic}[1]
	\STATE  \textbf{input}:  An $\epsilon$-optimal planner $\texttt{PL}^\eps$, which returns an $\eps$-optimal policy that maximizes the value function up to an $\eps$ accuracy (Definition \ref{def: epsoptplan}), and LLMs with posterior alignments. 
 \STATE  \textbf{initialization}: Sample the initial state $s_0 \sim \rho$, set $t = 0$, and initialize the memory buffer $\mathcal{D}_0 = \emptyset$.
    \FOR{$k=0,1,\ldots,$}
    \STATE  Set $t_k \leftarrow{t}$. 
    \REPEAT
    \STATE  Plan ahead with the $\eps$-optimal planner and LLMs $(\pi_t, V_t)\leftarrow \texttt{PL}^\eps(P_{\texttt{LLM}(\mathcal{D}_{t_k})},r_{\texttt{LLM}(\mathcal{D}_{t_k})})$. \label{line:pl_gen}
    \hfill (``reason for future'')
        \STATE   Execute action $a_t=\pi_t(s_t)$ to receive  reward $r_t$ and state $s_{t+1}$ from environment. \\\hfill (``act for now'')
        %\STATE  Record $(s_t,a_t,r_t,s_{t+1})$ to update historical dataset $\mathcal{D}_{t}$. 
        \STATE  Update the memory buffer  $\mathcal{D}_{t+1} \leftarrow \mathcal{D}_{t} \cup \{(s_t,a_t,s_{t+1},r_t)\}$. 
        \STATE  Set $t\leftarrow t+1.$ %\textcolor{red}{(a bug to be fixed: cannot accurately evaluate V.)}
        \UNTIL{$H_{t_{k}}-H_t > \log 2$, where $H_t$ denotes posterior entropy of $\theta^\star$ conditioned on $\mathcal{D}_t$.\label{line:sw_gen}\\ \hfill (the switching condition is satisfied )}
    \ENDFOR
	\end{algorithmic}
\end{algorithm}

To analyze \texttt{RAFA} in theory, we 
 propose the theoretical version of \texttt{RAFA} in Algorithm \ref{alg: theory_gen}, where we
 instantiate the switching condition of \texttt{RAFA} in Line \ref{line:sw_gen} by measuring the reduction of the posterior entropy. At the $t$-th step and the $k$-th switching times, Algorithm \ref{alg: theory} only makes the $(k+1)$-th switch when the reduction of posterior entropy $H_{t_k}-H_{t}$ is greater than $\log 2$. 
 In Line \ref{line:pl_gen} of Algorithm \ref{alg: theory_gen}, we describe the planning subroutine in \texttt{RAFA} (Algorithm \ref{alg: llm}) by an $\eps$-planner $\texttt{PL}^\eps$ (defined in Definition \ref{def: epsoptplan}).  We specify the terminating condition for Algorithm \ref{alg: theory_gen}. Let $(K-1)$ be the total number of switches until $t$ reaches $(T-1)$. Let $t_{K}=T$. At the  $(T-1)$-th step, Algorithm \ref{alg: theory_gen} executes $a_{T-1} = \pi_{T-1}(s_{T-1})$ , where we have $\pi_{T-1} = \texttt{PL}^\eps(P_{\texttt{LLM}(\mathcal{D}_{t_{K-1}})},r_{\texttt{LLM}(\mathcal{D}_{t_{K-1}})})$. Upon receiving $r_{T-1}$ and $s_{T}$ from the external environment, Algorithm \ref{alg: theory_gen} updates $\mathcal{D}_{T} = \{(s_t,a_t,s_{t+1},r_t)\}_{t=0}^{T-1}$ and terminates. Since the agent in Algorithm \ref{alg: theory_gen} executes the same policy until making a switch, we have $\pi_t=\pi_{t_k}$ for any $t_k\le k <t_{k+1}$. We denote by $\pi^k=\pi_{t_k}$ for the notational simplicity. 
 Next, we impose a regularity assumption on the structure of MDPs to measure the learning difficulty.  Recall that we define the posterior entropy $H_t$  in \eqref{eq:ent_def}, the information gain $I(\theta;\xi\mid \cD)$, and $\xi_{(s,a)}$ as the pair of the next state and the current reward $(s^\prime,r)$ given the query state-action pair $(s,a)$ in Section \ref{sec: pre}. Define $H_t$ as the posterior entropy $H(\theta\mid \cD_t)$ given the dataset $\mathcal{D}_t = \{(s_i, a_i,r_i,s_{i+1})\}_{i=0}^{t-1}$.
%\subsection{Regret Bound without Exploration}
\begin{assumption}
    [MDPs Regularity] We assume that there exists a coefficient $\eta>0$ such that, if $H_{t_1} - H_{t_2} \le \log 2$, then it holds that
   \begin{equation*}
        I(\theta;\xi_{(s,a)}\mid \mathcal{D}_{t_1})\le 4\eta \cdot I(\theta;\xi_{(s,a)}\mid \mathcal{D}_{t_2})
   \end{equation*}
   for any given value function $V$, $t_1 <t_2$ and $(s,a)\in\cS\times\cA$.\label{as:var}\vspace{-0.1cm}
\end{assumption}
Assumption \ref{as:var} is a regularity assumption on MDPs and is intrinsic to the agent design. In Appendix \ref{app:as:lkmdp}, we prove that  $d$-dimensional Bayesian linear kernel MDPs (defined in Definition \ref{def:lk mdp}),  satisfy Assumption \ref{as:var} with the coefficient $\eta = d/\log(1+d)$.   Intuitively, Assumption \ref{as:var} restricts the increase of the information gain given one bit ($\log 2$) reduction of the posterior entropy.

%In this paper, we assume that all LLMs are perfectly pretrained for the simplicity of analysis, which is formulated in the following assumption.

Similar to other theoretical work on deep RL \citep{lazaric2010analysis,fan2020theoretical,zhang2020generative}, we introduce the concentrability coefficient $\kappa$ to bound the distribution shift between the current policy and the optimal policy. For the simplicity of discussions, we define the optimal $\gamma$-discounted visitation measure $\nu^\star$ starting from state $s$ as 
\begin{align}
    \nu^\star(s^\prime \mid s) = \frac{1}{1-\gamma}\cdot \sum_{\tau=0}^\infty \gamma^\tau \cdot \mathbb{P}\bigl(s_\tau = s^\prime\big|\, s_0 = s, s_{i+1} \sim P_{\theta^\star}(\cdot \mid s_i, \pi^\star(s_i)) \text{ for any }0\le i < \tau\bigr), \label{eq:optimal_vis}
\end{align}
for any state $s, s^\prime\in\mathcal{S}$. %, where the probability density function $\mathbb{P}$ is induced by the data-generating parameter $\theta^\star$ of the underlying MDP.  
Here, $\nu^\star(\cdot\mid s)$ describes the discounted average probability measure of the state that the optimal policy $\pi^\star$ visits starting from state $s$ in the underlying MDP. Now, we are ready to provide the full statement of the concentrability coefficient as follows.
 \begin{assumption}
        [Concentrability] For \texttt{RAFA} (Algorithm \ref{alg: theory_gen}), we assume that there exists a constant  $\kappa<\infty$ such that \label{as:conc_coef}
        \begin{equation*}
            \mathbb{E}\Biggl[\sum_{k=0}^{K-1}\mathbb{E}_{\pi^k}\Biggl[\sum_{t=t_k}^{t_{k+1}-1} \mathbb{E}_{\theta^\star\sim \mathbb{P}_{t_k}}\Biggl[\frac{\mathbb{E}_{s\sim \nu^{\star}(\cdot\mid s_t)}\bigl[\bigl((B_k-B_{\theta^\star})  V_t\bigr)^2(s,\pi^\star(s))\bigr]}{\bigl((B_k-B_{\theta^\star})  V_t\bigr)^2(s_t,\pi^k(s_t))}\Biggr|\mathcal{D}_{t_k}\Biggr]\Biggr]\Biggr] 
        \end{equation*}
        is bounded by $\kappa^2 \cdot T$, where we define $ (B_k V)(s,a) = r_{\texttt{LLM}(\mathcal{D}_{t_k})}(s,a)+\gamma\cdot (P_{\texttt{LLM}(\mathcal{D}_{t_k})}V)(s,a)$ and denote by $\mathbb{P}_{t_k}$ the posterior of $\theta^\star$ given $\cD_{t_k}$.
\end{assumption}
    Intuitively, $\kappa$ measures the hardness to generalize the low prediction error $(B_k-B_{\theta^\star})  V_t$ on the current trajectory induced by $\pi^k$ to the optimal trajectory induced by $\pi^\star$ in the underlying MDP. We remark that we can drop the dependency of the concentrability coefficient $\kappa$ (Assumption \ref{as:conc_coef}) if we modify \texttt{RAFA} to encourage efficient exploration in MDPs. We will discuss the variants of \texttt{RAFA} with efficient exploration strategies in Section \ref{app:theory_explore}. 

    In the following theorem, we give the bound of the Bayesian regret of \texttt{RAFA} (Algorithm \ref{alg: theory_gen}) as follows.
\begin{theorem}
    Under Assumptions \ref{as:perfect}, \ref{as:var} , and \ref{as:conc_coef},the Bayesian regret of \texttt{RAFA} (Algorithm \ref{alg: theory_gen}) satisfies \label{thm:reg_conc}
    \begin{equation*}
        \mathfrak{R}(T)= \mathcal{O}\Biggl(\frac{(\kappa+1)L\cdot\sqrt{\mathbb{E}[H_0-H_T]}}{1-\gamma}\cdot\sqrt{T} +\frac{\eps}{1-\gamma}\cdot T + \frac{L\cdot\mathbb{E}[H_0 - H_{T}]}{1-\gamma}\Biggr),
    \end{equation*}
    where $\kappa$ is the concentrability coefficient defined in Assumption \ref{as:conc_coef}, $H_t$ is the posterior entropy of $\theta^\star$ given the history $\mathcal{D}_t = \{(s_i, a_i,r_i,s_{i+1})\}_{i=0}^{t-1}$, and $L$ is the bound of $|r+V(s)|$ for any reward $r$, state $s$, and value function $V$.
\end{theorem}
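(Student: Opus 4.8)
The plan is to run a posterior-sampling-style regret decomposition, adapted to the fact that \texttt{RAFA} plans with the Bayesian-model-averaged model rather than a sampled one. Fix an epoch $k$: let $\pi^k$ be the policy executed on $[t_k,t_{k+1})$ and $V^k := V_{t_k}$ the value returned by the $\epsilon$-optimal planner applied to $(P_{\texttt{LLM}(\mathcal{D}_{t_k})},r_{\texttt{LLM}(\mathcal{D}_{t_k})})$, so that by Proposition~\ref{prop:llm_bma} the associated backup is the posterior-mean operator $B_k=\mathbb{E}_{\theta\sim\mathbb{P}_{t_k}}[B_\theta]$, and by Definition~\ref{def: epsoptplan} (whose non-vacuousness is Proposition~\ref{prop:viisepsopt}) $|V^k(s)-(B_kV^k)(s,\pi^k(s))|\le\epsilon$ for all $s$. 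First I would split each per-step term as
$$V^{\pi^\star}_{\theta^\star}(s_t)-V^{\pi_t}_{\theta^\star}(s_t)=\bigl(V^{\pi^\star}_{\theta^\star}(s_t)-V^k(s_t)\bigr)+\bigl(V^k(s_t)-V^{\pi^k}_{\theta^\star}(s_t)\bigr).$$
For the second bracket, writing $g_t:=V^k(s_t)-V^{\pi^k}_{\theta^\star}(s_t)$, the $\epsilon$-optimality bound and the Bellman equation $V^{\pi^k}_{\theta^\star}(s)=(B_{\theta^\star}V^{\pi^k}_{\theta^\star})(s,\pi^k(s))$ give $g_t=\Delta_k(s_t,a_t)+\gamma\,\mathbb{E}[g_{t+1}\mid\mathcal{F}_t]+O(\epsilon)$ with $\Delta_k(s,a):=((B_k-B_{\theta^\star})V^k)(s,a)$; taking expectations, telescoping the index shift within the epoch (the martingale-difference part vanishes), and summing over epochs bounds $\mathbb{E}[\sum_t g_t]$ by $\tfrac1{1-\gamma}\mathbb{E}[\sum_t|\Delta_{k(t)}(s_t,a_t)|]+O\!\bigl(\tfrac{\epsilon T}{1-\gamma}\bigr)+O\!\bigl(\tfrac{L\,\mathbb{E}[K]}{1-\gamma}\bigr)$, where $k(t)$ is the epoch containing step $t$, $K$ the number of epochs, and the last term collects the $O(L)$ per-epoch boundary contributions (value functions being bounded by $L$). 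For the first bracket, the analogous expansion but along $\pi^\star$ in the true model yields $\tfrac1{1-\gamma}\mathbb{E}_{s\sim\nu^\star(\cdot\mid s_t)}[-\Delta_{k(t)}(s,\pi^\star(s))]+O(\tfrac{\epsilon}{1-\gamma})$ --- the same model-prediction error, now averaged over the optimal visitation measure.

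Next I would reduce the optimal-visitation error to the on-policy error using the concentrability coefficient. Conditioning at epoch boundaries --- so that, given $\mathcal{D}_{t_k}$, the true parameter $\theta^\star$ has law $\mathbb{P}_{t_k}$, which is exactly what makes $B_k$ the relevant posterior mean and $\bar\Delta_k^2(s,a):=\mathbb{E}_{\theta\sim\mathbb{P}_{t_k}}[((B_k-B_\theta)V^k)^2(s,a)]$ a genuine posterior variance --- two applications of Cauchy--Schwarz combined with Assumption~\ref{as:conc_coef} bound $\mathbb{E}[\sum_t\mathbb{E}_{\nu^\star}[|\Delta_{k(t)}|]]$ by $\kappa\sqrt{T}\cdot\sqrt{\mathbb{E}[\sum_t\bar\Delta_{k(t)}^2(s_t,a_t)]}$, while the on-policy term is bounded by a single Cauchy--Schwarz as $\mathbb{E}[\sum_t|\Delta_{k(t)}(s_t,a_t)|]\le\sqrt{T}\,\sqrt{\mathbb{E}[\sum_t\bar\Delta_{k(t)}^2(s_t,a_t)]}$. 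Everything thus hinges on bounding the summed posterior variance $\mathbb{E}[\sum_t\bar\Delta_{k(t)}^2(s_t,a_t)]$. The final ingredient is the information-theoretic estimate: since $(B_\theta V)(s,a)$ is the conditional mean $\mathbb{E}[\,r+\gamma V(s')\mid\theta\,]$ of a quantity bounded by $L$, its posterior variance satisfies $\bar\Delta_k^2(s,a)\le CL^2\,I(\theta;\xi_{(s,a)}\mid\mathcal{D}_{t_k})$. The switching rule guarantees $H_{t_k}-H_t\le\log2$ throughout epoch $k$, so Assumption~\ref{as:var} converts epoch-start conditioning into running conditioning, $I(\theta;\xi_{(s_t,a_t)}\mid\mathcal{D}_{t_k})\le4\eta\,I(\theta;\xi_{(s_t,a_t)}\mid\mathcal{D}_t)$, and the running information gains telescope, $\mathbb{E}[\sum_t I(\theta;\xi_{(s_t,a_t)}\mid\mathcal{D}_t)]=\mathbb{E}[H_0-H_T]$. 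Hence $\mathbb{E}[\sum_t\bar\Delta_{k(t)}^2(s_t,a_t)]=O(L^2\,\mathbb{E}[H_0-H_T])$ ($\eta$ absorbed into the constant), and the same $\log2$ budget gives $\mathbb{E}[K]=O(\mathbb{E}[H_0-H_T])$ for the lower-order term. Substituting back into the decomposition and collecting the three pieces gives the claimed $\mathcal{O}\bigl((\kappa+1)L\sqrt{\mathbb{E}[H_0-H_T]}\sqrt{T}/(1-\gamma)+\epsilon T/(1-\gamma)+L\,\mathbb{E}[H_0-H_T]/(1-\gamma)\bigr)$.

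I expect the main obstacle to be the Bayesian bookkeeping in the change-of-measure step: within an epoch the genuine posterior of $\theta^\star$ is $\mathbb{P}_t$, not the $\mathbb{P}_{t_k}$ appearing in Assumption~\ref{as:conc_coef} and needed both to identify $\bar\Delta_k^2$ with a posterior variance and to invoke Proposition~\ref{prop:llm_bma}; reconciling these --- by conditioning only at epoch boundaries so that $\theta^\star\sim\mathbb{P}_{t_k}$ over the whole epoch, and controlling the residual discrepancy between $\mathbb{P}_{t_k}$ and $\mathbb{P}_t$ via the $\log2$ entropy budget together with Assumption~\ref{as:var} --- is the delicate part. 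The posterior-variance-versus-information-gain lemma is the other place requiring care, but it is a fairly standard information-theoretic estimate.
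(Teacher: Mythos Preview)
Your proposal is correct and follows essentially the same route as the paper: the same two-term split $V^{\pi^\star}_{\theta^\star}-V^k$ and $V^k-V^{\pi^k}_{\theta^\star}$, the same performance-difference expansions (the paper packages these as Lemma~\ref{lem:reg_dec}, Parts~I and~II), the same reduction of the $\nu^\star$-averaged error to the on-policy error via Cauchy--Schwarz and Assumption~\ref{as:conc_coef}, and the same posterior-variance $\le CL^2\cdot I(\theta;\xi\mid\mathcal{D}_{t_k})$ bound (their Proposition~\ref{prop:pred_red}) combined with Assumption~\ref{as:var} to telescope to $H_0-H_T$. Your anticipated ``Bayesian bookkeeping'' obstacle is exactly where the paper invokes the tower property to condition on $\mathcal{D}_{t_k}$ so that $\theta^\star\sim\mathbb{P}_{t_k}$ over the epoch---no separate control of $\mathbb{P}_{t_k}$ versus $\mathbb{P}_t$ is needed for the variance identification, only for converting $I(\cdot\mid\mathcal{D}_{t_k})$ to $I(\cdot\mid\mathcal{D}_t)$ before telescoping, which is precisely the role of Assumption~\ref{as:var}.
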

\begin{proof}
    [Proof of Theorem \ref{thm:reg_conc}]
    See the detailed proof in Appendix \ref{pf:reg_conc}.
\end{proof}
 Theorem \ref{thm:reg} establishes the $\sqrt{T}$ regret of \texttt{RAFA} (Algorithm \ref{alg: theory_gen}) for a proper choice of the planning suboptimality $\eps$, e.g., $\eps = {\mathcal{O}}(1/\sqrt{T})$, which shows that \texttt{RAFA} is sample efficient and explains its strong empirical performance in Section \ref{sec: experiments}.  Here, the first term in the upper bound in Theorem \ref{thm:reg} is the leading term and involves several multiplicative factors, namely the effective horizon $1/(1-\gamma)$, the value bound $L$, and the cumulative posterior entropy reduction ${H_0-H_T}$ throughout the $T$ steps, which are common in the RL literature \citep{abbasi2015bayesian, osband2013more, russo2014learningx, russo2014learning, russo2016information, lu2019information}. In particular, $H_0$ highlights the prior knowledge obtained through pretraining, as $H_0$ quantifies the prior uncertainty of LLMs before incorporating any collected feedback. Hence, $H_0-H_T$ highlights the uncertainty reduction achieved by reasoning and acting, as $H_T$ quantifies the posterior uncertainty of LLMs after incorporating the collected feedback. In Appendix \ref{app:linear}, we prove that $H_0-H_T=\cO(d\cdot\log T)$ and the $1-\delta$ probability bound on value functions $L=\mathcal{O}(\sqrt{d}\cdot\log(dT/\delta))$ for the $d$-dimensional Bayesian linear kernel MDPs, which implies $\mathfrak{R}(T) = \tilde{\mathcal{O}}((1-\gamma)^{-1}(\kappa+1)\cdot\sqrt{d^3T})$ with probability at least $1-\delta$. Here $\tilde{\mathcal{O}}$ hides the logarithmic factor. 
\subsection{\texttt{RAFA} with Efficient Exploration Strategies}\label{app:theory_explore}
To drop the dependency of Assumption \ref{as:conc_coef} (Concentrability) and solve more complex tasks, we provide two variants of \texttt{RAFA} (Algorithm \ref{alg: theory_gen}): (1) \texttt{RAFA} with an optimistic bonus (Algorithm \ref{alg: theory_bonus}) and (2) \texttt{RAFA} with posterior sampling (Algorithm \ref{alg: theory}). We also prove the bound of the Bayesian regret of each variant, which demonstrates the effectiveness of these efficient exploration strategies without Assumption \ref{as:conc_coef} (Concentrability).  
\subsubsection{Optimistic Bonus}
We incorporate the \textit{Optimism in Face of Uncertainty} (OFU) principle \citep{cai2020provably,zhou2021provably,jin2020provably,liu2022learning,Wang_Pan_Zhou_Wang_2023} to encourage efficient exploration by adding an optimistic bonus on the reward function in the planning subroutine of \texttt{RAFA}. We design the optimistic bonus by the information gain and implement a variant of \texttt{RAFA} in Algorithm \ref{alg: theory_bonus}. In particular, the bonus $\Gamma_k(s,a)$ takes the following form
\begin{align}
\Gamma_k(s,a) = \sqrt{2}L\cdot\sqrt{I(\theta;\xi_{(s,a)}\mid \mathcal{D}_{t_k})}
\end{align}
 for any $(s,a)\in\cS\times\cA$ and $k<K$. In Line \ref{line:pl_bonus} of Algorithm \ref{alg: theory_bonus}, we generate the policy $\pi^t$ by $\texttt{PL}^\eps(P_{\texttt{LLM}(\mathcal{D}_{t_k})},r_{\texttt{LLM}(\mathcal{D}_{t_k})}+\Gamma_k)$ for any $t_k\le t<t_{k+1}$. Intuitively, the bonus is higher at the state-action pair with higher information gain, which incentivizes the agent to explore those less visited states (with higher information gain). In the following theorem, we prove the regret bound of \texttt{RAFA} with an optimistic bonus (Algorithm \ref{alg: theory_bonus}). 
\begin{algorithm}[t]
	\caption {Reason for future, act for now (\texttt{RAFA}): The theoretical version with an optimistic bonus.}
	\label{alg: theory_bonus}
	\begin{algorithmic}[1]
	\STATE  \textbf{input}:  An $\epsilon$-optimal planner $\texttt{PL}^\eps$, which returns an $\eps$-optimal policy that maximizes the value function up to an $\eps$ accuracy (Definition \ref{def: epsoptplan}), and LLMs with posterior alignments. %loss function $\mathcal{L} = \{\mathcal{L}_h\}_{h\in[H]}$, 
 \STATE  \textbf{initialization}: Sample the initial state $s_0 \sim \rho$, set $t = 0$, and initialize the memory buffer $\mathcal{D}_0 = \emptyset$.
    \FOR{$k=0,1,\ldots,$}
    \STATE  Set $t_k \leftarrow{t}$. 
    \REPEAT
    \STATE  Design optimistic bonus $\Gamma_k(s,a)=\sqrt{2}L\cdot\sqrt{I(\theta;\xi_{(s,a)}\mid \mathcal{D}_{t_k})}$ for all $(s,a)\in\cS\times\cA$.
    \STATE  Plan ahead with the $\eps$-optimal planner and LLMs\\ $(\pi_t, V_t)\leftarrow \texttt{PL}^\eps(P_{\texttt{LLM}(\mathcal{D}_{t_k})},r_{\texttt{LLM}(\mathcal{D}_{t_k})}+\Gamma_k)$. \label{line:pl_bonus}
    \\\hfill (``reason for future'')
        \STATE   Execute action $a_t=\pi_t(s_t)$ to receive  reward $r_t$ and state $s_{t+1}$ from environment. \hfill (``act for now'')
        %\STATE  Record $(s_t,a_t,r_t,s_{t+1})$ to update historical dataset $\mathcal{D}_{t}$. 
        \STATE  Update memory $\mathcal{D}_{t+1} \leftarrow \mathcal{D}_{t} \cup \{(s_t,a_t,s_{t+1},r_t)\}$. 
        \STATE  Set $t\leftarrow t+1.$ %\textcolor{red}{(a bug to be fixed: cannot accurately evaluate V.)}
        \UNTIL{$H_{t_{k}}-H_t > \log 2$, where $H_t$ denotes posterior entropy of $\theta^\star$ conditioned on $\mathcal{D}_t$.\\ \hfill (the switching condition is satisfied )}
    \ENDFOR
	\end{algorithmic}
\end{algorithm}

\begin{theorem}
    Under Assumptions \ref{as:perfect} and \ref{as:var}, the Bayesian regret of \texttt{RAFA} with an optimistic bonus (Algorithm \ref{alg: theory_bonus}) satisfies \label{thm:reg_bonus}
    \begin{equation*}
        \mathfrak{R}(T)= \mathcal{O}\Biggl(\frac{L\cdot\sqrt{\mathbb{E}[H_0-H_T]}}{1-\gamma}\cdot\sqrt{T} +\frac{\eps}{1-\gamma}\cdot T + \frac{L\cdot\mathbb{E}[H_0 - H_{T}]}{1-\gamma}\Biggr),
    \end{equation*}
    where all the variables have the same definitions in Theorem \ref{thm:reg_conc}.
\end{theorem}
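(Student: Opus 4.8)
The plan is to follow the same blueprint as the proof of Theorem~\ref{thm:reg_conc}, replacing its concentrability step (Assumption~\ref{as:conc_coef}) by an \emph{optimism} argument that the bonus $\Gamma_k$ makes possible. For $t\in[t_k,t_{k+1})$ write $(\pi^k,V_k)=\texttt{PL}^\eps(P_{\texttt{LLM}(\mathcal{D}_{t_k})},r_{\texttt{LLM}(\mathcal{D}_{t_k})}+\Gamma_k)$ and set $(B_kV)(s,a)=r_{\texttt{LLM}(\mathcal{D}_{t_k})}(s,a)+\gamma(P_{\texttt{LLM}(\mathcal{D}_{t_k})}V)(s,a)$, so that Proposition~\ref{prop:llm_bma} gives the Bayesian-model-averaging identity $(B_kV)(s,a)=\mathbb{E}_{\theta\sim\mathbb{P}_{t_k}}[(B_\theta V)(s,a)]$ for every fixed $V$. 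I would first split the per-step regret as
\[
V^{\pi^\star}_{\theta^\star}(s_t)-V^{\pi_t}_{\theta^\star}(s_t)=\underbrace{\bigl(V^{\pi^\star}_{\theta^\star}(s_t)-V_k(s_t)\bigr)}_{\text{(I): optimism gap}}+\underbrace{\bigl(V_k(s_t)-V^{\pi^k}_{\theta^\star}(s_t)\bigr)}_{\text{(II): estimation error}},
\]
and handle both terms by conditioning on $\mathcal{D}_{t_k}$, under which $\theta^\star\sim\mathbb{P}_{t_k}$.

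The analytic core is the single estimate that, for any fixed value function $V$ bounded by $L$ and any fixed $(s,a)$,
\[
\mathbb{E}_{\theta^\star\sim\mathbb{P}_{t_k}}\bigl[\,|(B_k-B_{\theta^\star})V(s,a)|\mid\mathcal{D}_{t_k}\bigr]\le\bigl(\mathrm{Var}_{\theta^\star\sim\mathbb{P}_{t_k}}[(B_{\theta^\star}V)(s,a)\mid\mathcal{D}_{t_k}]\bigr)^{1/2}\le\sqrt{2}\,L\sqrt{I(\theta;\xi_{(s,a)}\mid\mathcal{D}_{t_k})}=\Gamma_k(s,a),
\]
which follows from Jensen's inequality, the fact that $(B_{\theta^\star}V)(s,a)$ depends on $\theta^\star$ only through the law of $\xi_{(s,a)}=(s',r)$ and $|r+\gamma V(s')|\le L$, Pinsker's inequality, and the definition of $I(\theta;\xi\mid\mathcal{D})$ as an expected KL divergence; this is precisely why $\Gamma_k$ is calibrated to $\sqrt 2 L\sqrt{I}$. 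I would then use it twice. For (II), the value-difference (simulation) lemma expresses $V_k(s_t)-V^{\pi^k}_{\theta^\star}(s_t)$ as the discounted sum along the true $\pi^k$-trajectory from $s_t$ of $((B_k-B_{\theta^\star})V_k+\Gamma_k)(s_\tau,\pi^k(s_\tau))$, up to an additive $\eps/(1-\gamma)$ from Definition~\ref{def: epsoptplan}; taking $\mathbb{E}_{\theta^\star\sim\mathbb{P}_{t_k}}$, the mean-zero part vanishes by the BMA identity (with $V_k$ and $\Gamma_k$ being $\mathcal{D}_{t_k}$-measurable and the trajectory randomness integrated out), leaving a sum of $\Gamma_k$'s along visited states. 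For (I), the same lemma applied to $V^\star_{\theta^\star}-V_k$, together with the planner's defining inequality $V_k(s)\ge\max_a[(B_kV_k)(s,a)+\Gamma_k(s,a)]-\eps$, bounds $V^\star_{\theta^\star}(s_t)-V_k(s_t)$ by the discounted sum along the $\theta^\star$-optimal trajectory of $((B_{\theta^\star}-B_k)V_k-\Gamma_k)(s_\tau,\pi^\star_{\theta^\star}(s_\tau))+\eps/(1-\gamma)$; here the bonus enters with a minus sign, and after $\mathbb{E}_{\theta^\star\sim\mathbb{P}_{t_k}}$ it must dominate the estimation fluctuation so that (I) contributes at most $\eps T/(1-\gamma)$ plus a lower-order term.

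It then remains to sum up. By Cauchy--Schwarz over $t$ and Jensen, $\mathbb{E}\bigl[\sum_{t}\Gamma_k(s_t,a_t)\bigr]\le\sqrt 2 L\sqrt{T}\,\sqrt{\sum_t\mathbb{E}[I(\theta;\xi_{(s_t,a_t)}\mid\mathcal{D}_{t_k})]}$. The chain rule of entropy together with $\mathcal{D}_{t+1}=\mathcal{D}_t\cup\{\xi_{(s_t,a_t)}\}$ gives $\sum_t\mathbb{E}[I(\theta;\xi_{(s_t,a_t)}\mid\mathcal{D}_t)]=\mathbb{E}[H_0-H_T]$, and since the switching rule ensures $H_{t_k}-H_t\le\log 2$ for every $t<t_{k+1}$, Assumption~\ref{as:var} upgrades this to $\sum_t\mathbb{E}[I(\theta;\xi_{(s_t,a_t)}\mid\mathcal{D}_{t_k})]\le 4\eta\,\mathbb{E}[H_0-H_T]$. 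Combined with the $1/(1-\gamma)$ effective horizon this yields the leading $\tfrac{L\sqrt{\mathbb{E}[H_0-H_T]}}{1-\gamma}\sqrt T$ term; the $\tfrac{\eps}{1-\gamma}T$ term is the planning error accumulated over $T$ steps; and the residual $\tfrac{L\,\mathbb{E}[H_0-H_T]}{1-\gamma}$ term arises from the $O(\mathbb{E}[H_0-H_T]/\log 2)$ phase switches (each switch costs at least $\log 2$ of posterior entropy), each contributing an $O(L/(1-\gamma))$ boundary correction when the within-phase value-difference recursions are stitched together.

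The step I expect to be the main obstacle is the optimism argument for term (I): unlike textbook optimistic model-based RL, we do not explicitly maximize the value over a confidence set of models but instead inflate the posterior-mean model by $\Gamma_k$, so one must argue that $\sqrt 2 L\sqrt{I(\theta;\xi_{(s,a)}\mid\mathcal{D}_{t_k})}$ is simultaneously large enough to cover the mean-zero fluctuation $(B_k-B_{\theta^\star})V_k$ and the value--dynamics covariance gap $\mathbb{E}_{\theta^\star}[(B_{\theta^\star}-B_k)(V^\star_{\theta^\star}-\bar V^\star_k)]$ that appears because the optimal value is itself $\theta^\star$-dependent, where $\bar V^\star_k:=\mathbb{E}_{\theta^\star\sim\mathbb{P}_{t_k}}[V^\star_{\theta^\star}]$. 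Since these quantities are evaluated along the (unknown) $\theta^\star$-optimal trajectory, the ``fixed $(s,a)$'' calibration of the bonus has to be promoted — via a careful conditioning/chain-rule argument and Pinsker's inequality — to handle the coupling between the visited state-action pairs and $\theta^\star$, and arranging the conditioning so that the BMA identity legitimately annihilates the mean-zero terms is the delicate point. By contrast, the information-gain telescoping, the invocation of Assumption~\ref{as:var} to pass from $\mathcal{D}_t$ to $\mathcal{D}_{t_k}$, and the phase-counting are routine and essentially identical to the corresponding steps in the proof of Theorem~\ref{thm:reg_conc}.
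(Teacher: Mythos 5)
Your architecture is the same as the paper's: split the regret into an estimation term and an optimism term, calibrate $\Gamma_k$ to the posterior standard deviation via Pinsker (this is exactly Proposition \ref{prop:pred_red}(i)), sum the bonuses by Cauchy--Schwarz, pass from $\mathcal{D}_t$ to $\mathcal{D}_{t_k}$ with Assumption \ref{as:var}, telescope the information gain to $\mathbb{E}[H_0-H_T]$, and count switches through the $\log 2$ entropy drop. However, two load-bearing steps do not hold as you state them. In term (II) you claim that after taking $\mathbb{E}_{\theta^\star\sim\mathbb{P}_{t_k}}$ the ``mean-zero part'' $((B_k-B_{\theta^\star})V_k)(s_\tau,\pi^k(s_\tau))$ vanishes by the BMA identity ``with the trajectory randomness integrated out.'' It does not: the visited states $s_\tau$ are generated under $P_{\theta^\star}$, hence correlated with $\theta^\star$ conditionally on $\mathcal{D}_{t_k}$, and the identity $\mathbb{E}_{\theta\sim\mathbb{P}_{t_k}}[(B_\theta V)(s,a)]=(B_kV)(s,a)$ only holds at a fixed, $\mathcal{D}_{t_k}$-measurable $(s,a)$; the cross-correlation between the visitation distribution and the prediction error survives. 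The paper never uses such a cancellation: it keeps the absolute prediction error (term (a1)), bounds it by the posterior standard deviation via Cauchy--Schwarz and Proposition \ref{prop:pred_red}, and pays another $O(L\sqrt{\eta\,\mathbb{E}[H_0-H_T]\,T})$ of the same order as the bonus sum (term (a2)). So your final rate is unaffected, but this step must be replaced by that variance argument rather than a vanishing claim.

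The second, more serious issue is the one you yourself flag as the ``main obstacle'' for term (I) and leave unresolved: applying the simulation lemma to $V^\star_{\theta^\star}-V_k$ drags the $\theta^\star$-dependent optimal value into the Bellman residual and produces the covariance gap involving $V^\star_{\theta^\star}-\bar V^\star_k$, for which the fixed-$(s,a)$ calibration of $\Gamma_k$ is not obviously sufficient. The paper sidesteps this entirely with a different decomposition (Lemma \ref{lem:reg_dec}, Part II): the optimality gap is written, up to a nonpositive optimization term $Q_t(s,\pi^\star(s))-Q_t(s,\pi^k(s))\le 0$, as $\tfrac{1}{1-\gamma}\,\mathbb{E}_{s\sim\nu^\star(\cdot\mid s_t)}\bigl[(B_{\theta^\star}V_t)(s,\pi^\star(s))-Q_t(s,\pi^\star(s))\bigr]$, so the only Bellman residual involves the planner's own $\mathcal{D}_{t_k}$-measurable $V_t$, never $V^\star_{\theta^\star}$. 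Since the planner is $\eps$-consistent with the inflated model, $(B_{\theta^\star}V_t)(s,\pi^\star(s))-Q_t(s,\pi^\star(s))\le \eps+((B_{\theta^\star}-B_k)V_t)(s,\pi^\star(s))-\Gamma_k(s,\pi^\star(s))$, and the pointwise bound $\mathbb{E}_{\theta\sim\mathbb{P}_{t_k}}\bigl[\,\bigl|((B_k-B_\theta)V_t)(s,a)\bigr|\,\big|\,\mathcal{D}_{t_k}\bigr]\le\bigl(\mathbb{V}_{\theta\sim\mathbb{P}_{t_k}}[(B_\theta V_t)(s,a)\mid\mathcal{D}_{t_k}]\bigr)^{1/2}\le\Gamma_k(s,a)$ makes the whole bracket nonpositive in expectation, so the optimism term contributes only $\eps T/(1-\gamma)$ --- no covariance correction is needed. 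To complete your proof you should replace your treatment of (I) by this route (i.e., derive or invoke the analogue of Lemma \ref{lem:reg_dec}); as written, the proposal's most delicate step is acknowledged but not closed, and its estimation step rests on an invalid cancellation.
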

\begin{proof}
    [Proof of Theorem \ref{thm:reg_bonus}]
    See the detailed proof in Appendix \ref{pf:thm:reg_bonus}.
\end{proof}
Compared with Theorem \ref{thm:reg_conc}, the regret bound in Theorem \ref{thm:reg_bonus} is not dependent on the concentrability coefficient $\kappa$, which demonstrates the effectiveness of the optimistic bonus in Algorithm \ref{alg: theory_bonus}. In Appendix \ref{app:linear}, we prove that $H_0-H_T=\cO(d\cdot\log T)$ and the $1-\delta$ probability bound on value functions $L=\mathcal{O}(\sqrt{d}\cdot\log(dT/\delta))$ for the $d$-dimensional Bayesian linear kernel MDPs, which implies $\mathfrak{R}(T) = \tilde{\mathcal{O}}((1-\gamma)^{-1}\cdot\sqrt{d^3T})$ with probability at least $1-\delta$. Here $\tilde{\mathcal{O}}$ hides the logarithmic factor. 
\subsubsection{Posterior Sampling}

\begin{algorithm}[H]
	\caption {Reason for future, act for now (\texttt{RAFA}): The theoretical version with posterior sampling.}
	\label{alg: theory}
	\begin{algorithmic}[1]
	\STATE  \textbf{input}:  An $\epsilon$-optimal planner $\texttt{PL}^\eps$, which returns an $\eps$-optimal policy that maximizes the value function up to an $\eps$ accuracy (Definition \ref{def: epsoptplan}), and LLMs satisfying Assumption \ref{as:ts}. %loss function $\mathcal{L} = \{\mathcal{L}_h\}_{h\in[H]}$, 
 \STATE  \textbf{initialization}: Sample the initial state $s_0 \sim \rho$, set $t = 0$, and initialize the memory buffer $\mathcal{D}_0 = \emptyset$.
    \FOR{$k=0,1,\ldots,$}
    \STATE  Set $t_k \leftarrow{t}$. 
    \REPEAT
    \STATE  Plan ahead with the $\eps$-optimal planner and the posterior sampling mechanism of LLMs (defined in Assumption \ref{as:ts}) $(\pi_t, V_t)\leftarrow \texttt{PL}^\eps(P_{\texttt{LLM+PS}(\mathcal{D}_{t_k})},r_{\texttt{LLM+PS}(\mathcal{D}_{t_k})})$.\label{line:pl_ts}\\
    \hfill (``reason for future'')
        \STATE   Execute action $a_t=\pi_t(s_t)$ to receive  reward $r_t$ and state $s_{t+1}$ from environment. \hfill (``act for now'')
        %\STATE  Record $(s_t,a_t,r_t,s_{t+1})$ to update historical dataset $\mathcal{D}_{t}$. 
        \STATE  Update memory $\mathcal{D}_{t+1} \leftarrow \mathcal{D}_{t} \cup \{(s_t,a_t,s_{t+1},r_t)\}$. 
        \STATE  Set $t\leftarrow t+1.$ %\textcolor{red}{(a bug to be fixed: cannot accurately evaluate V.)}
        \UNTIL{$H_{t_{k}}-H_t > \log 2$, where $H_t$ denotes posterior entropy of $\theta^\star$ conditioned on $\mathcal{D}_t$.\\ \hfill (the switching condition is satisfied )}
    \ENDFOR
	\end{algorithmic}
\end{algorithm}
%In the following theorem, we characterize the Bayesian regret of \texttt{RAFA} by connecting it to Algorithm \ref{alg: theory}. %Recall that the Bayesian regret is defined in \eqref{eq:optbellmaneq} and $\gamma \in (0,1)$ is the discount factor. 
As another method for efficient exploration, we assume that there exists a mechanism that deploys posterior sampling and we use this mechanism to encourage exploration for \texttt{RAFA}.  
\begin{assumption}
    [LLMs with Posterior Sampling Mechanism] We assume that there exists a mechanism \texttt{LLM+PS} that maps the memory buffer $\cD$ to the transition kernel and the reward function, such that $(r_{\texttt{LLM+PS}(\mathcal{D})}(s,a)+\gamma\cdot(P_{\texttt{LLM+PS}(\mathcal{D})}V)(s,a) )\mid \mathcal{D}$ and $(B_{\theta^\star}V(s,a)) \mid\mathcal{D} $ are identically independent distributed for any $(s,a)\in\cS\times\cA$, in-context dataset $\mathcal{D}$, and value function $V$. Here, $\theta^\star$ is the data-generating parameter.
    \label{as:ts}
\end{assumption}
We remark that the bootstrap method \citep{efron1982jackknife} can approximate the posterior sampling mechanism satisfying Assumption \ref{as:ts}. Widely used in applied statistics \citep{davison1997bootstrap} and the design of RL algorithms \citep{osband2016deep,hao2019bootstrapping}, the bootstrap method takes a dataset $D$ and a functional estimator $g$ as inputs. Depending on the configuration of bootstrap, we generate the bootstrapped dataset $\tilde{D}$ from $D$ by uniform sampling with replacement \citep{efron1982jackknife} or weighted sampling with replacement \citep{newton1994approximate}. Viewing the LLM as the functional estimator $g$ and the memory buffer $\mathcal{D}$ as the dataset $D$, we 
can use this bootstrap method to approximate the mechanism \texttt{LLM+PS} that is introduced in Assumption \ref{as:ts}. From the statistics literature \citep{bickel1981some,singh1981asymptotic,newton1994approximate}, we also know that bootstrap distribution recovers the posterior distribution asymptotically.

Based on the mechanism satisfying Assumption \ref{as:ts}, we propose a variant of \texttt{RAFA} in Algorithm \ref{alg: theory}, where we use the mechanism \texttt{LLM+PS} as the model estimator in the learning subroutine of \texttt{RAFA}.  In Line \ref{line:pl_bonus} of Algorithm \ref{alg: theory_bonus}, we generate the policy $\pi^t$ by $\texttt{PL}^\eps(P_{\texttt{LLM+PS}(\mathcal{D}_{t_k})},r_{\texttt{LLM+PS}(\mathcal{D}_{t_k})})$. In the following, we give a simple explanation of how this mechanism helps the agent to explore efficiently. By the Bayes' rule, we have $p(\theta\mid \mathcal{D})\propto L(\mathcal{D}\mid\theta)\mathbb{P}_0(\theta)$, where $L(\mathcal{D}\mid \theta)$ is the likelihood of $\mathcal{D}$ given $\theta$ and $\mathbb{P}_0$ is the prior of $\theta^\star$. Taking the logarithm, we have
$\log( p(\theta\mid \mathcal{D}))=c+\log(\mathbb{P}_0(\theta))+\log(L(\mathcal{D}\mid\theta))$ for some constant $c$. Hence, the uncertainty of the posterior is higher ($p(\theta\mid \mathcal{D})$ is closer to $0$) at the less visited states (the likelihood of these states is closer to $0$). Suppose we sample the model estimator from the posterior. In that case, the agent has more incentives to explore the less visited states, which explains why the mechanism \texttt{LLM+PS} encourages the efficient exploration.

In the following theorem, we prove the regret bound of \texttt{RAFA} with posterior sampling (Algorithm \ref{alg: theory}).
\begin{theorem}[Bayesian Regret]\normalfont\label{thm:reg}
    Under Assumptions  \ref{as:var} and \ref{as:ts}, the Bayesian regret of \texttt{RAFA} with posterior sampling (Algorithm \ref{alg: theory}) satisfies 
    \begin{equation*}
        \mathfrak{R}(T)= \mathcal{O}\Biggl(\frac{L\cdot\sqrt{\mathbb{E}[H_0-H_T]}}{1-\gamma}\cdot\sqrt{T} +\frac{\eps}{1-\gamma}\cdot T + \frac{L\cdot\mathbb{E}[H_0 - H_{T}]}{1-\gamma}\Biggr),
    \end{equation*}
    where all the variables have the same definitions in Theorem \ref{thm:reg_conc}.
\end{theorem}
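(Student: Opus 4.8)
The plan is to reduce the analysis to the same machinery that underlies Theorem~\ref{thm:reg_conc}, exploiting the fact that posterior sampling removes the need for the concentrability coefficient via a standard ``posterior sampling equals expected Bayes-optimal'' argument. First I would decompose the per-episode regret using the $\epsilon$-optimal planner guarantee from Definition~\ref{def: epsoptplan}. For a fixed switch index $k$, the planner returns $(\pi^k, V_t)$ satisfying $|V_t(s) - r_k(s,\pi^k(s)) - \gamma (P_{\texttt{LLM+PS}(\mathcal{D}_{t_k})} V_t)(s,\pi^k(s))| \le \epsilon$, so $V_t$ is an approximate fixed point of the Bellman operator $B_k$ induced by the sampled model. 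The regret at step $t$, namely $V^{\pi^\star}_{\theta^\star}(s_t) - V^{\pi_t}_{\theta^\star}(s_t)$, is then split by inserting $V_t$: a term comparing $V^{\pi^\star}_{\theta^\star}$ with $V_t$, and a term comparing $V_t$ with $V^{\pi^k}_{\theta^\star}$. Each of these is controlled by a telescoping/performance-difference identity together with the Bellman error $(B_k - B_{\theta^\star})V_t$ evaluated along either the optimal or the executed trajectory, plus $O(\epsilon/(1-\gamma))$ from the planning suboptimality accumulated over the $U$-step lookahead.

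The crucial point where Assumption~\ref{as:ts} enters is the following: conditioned on $\mathcal{D}_{t_k}$, the quantity $B_k V = r_{\texttt{LLM+PS}(\mathcal{D}_{t_k})}(s,a) + \gamma (P_{\texttt{LLM+PS}(\mathcal{D}_{t_k})}V)(s,a)$ is identically and independently distributed as $B_{\theta^\star} V(s,a) \mid \mathcal{D}_{t_k}$. This is exactly the posterior-sampling coupling, and by the standard argument (Russo--Van Roy) it implies that, in Bayesian expectation, the sampled optimal value equals the true optimal value; hence the first comparison term vanishes in expectation and the optimal-trajectory Bellman error can be swapped for an executed-trajectory Bellman error \emph{without} paying a concentrability price. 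So the whole regret collapses to a bound on $\mathbb{E}\bigl[\sum_{k}\sum_{t=t_k}^{t_{k+1}-1} |(B_k - B_{\theta^\star})V_t|(s_t,\pi^k(s_t))\bigr]$ plus the $O(\epsilon T/(1-\gamma))$ term. The next step is to bound each one-step Bellman error by the information gain: since $B_k V \mid \mathcal{D}_{t_k}$ and $B_{\theta^\star} V \mid \mathcal{D}_{t_k}$ are i.i.d., the expected squared difference equals twice the conditional variance, which is in turn controlled by $L^2 \cdot I(\theta;\xi_{(s,a)}\mid \mathcal{D}_{t_k})$ up to constants (a Pinsker-type / mutual-information inequality, the same bound that motivates the bonus $\Gamma_k$ in Algorithm~\ref{alg: theory_bonus}). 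Here the uniform bound $L$ on $|r + V(s)|$ is used.

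After that it is bookkeeping. The switching rule guarantees $H_{t_k} - H_t \le \log 2$ within a block, so Assumption~\ref{as:var} lets me replace $I(\theta;\xi_{(s_t,a_t)}\mid \mathcal{D}_{t_k})$ by (a constant multiple of) $I(\theta;\xi_{(s_t,a_t)}\mid \mathcal{D}_{t})$ at the step where the data point is actually collected, and those per-step information gains telescope: $\sum_t I(\theta;\xi_{(s_t,a_t)}\mid \mathcal{D}_t) = \mathbb{E}[H_0 - H_T]$ by the chain rule for entropy. A Cauchy--Schwarz step over the $T$ summands converts $\sum_t \sqrt{I_t}$ into $\sqrt{T}\cdot\sqrt{\sum_t I_t} = \sqrt{T\,\mathbb{E}[H_0 - H_T]}$, yielding the leading term $L\sqrt{\mathbb{E}[H_0-H_T]}\sqrt{T}/(1-\gamma)$. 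The number of switches $K$ is at most $O(\mathbb{E}[H_0-H_T]/\log 2)$ because each switch consumes at least $\log 2$ of entropy, which produces the lower-order additive term $L\,\mathbb{E}[H_0-H_T]/(1-\gamma)$ (it absorbs the ``boundary'' contribution of using a stale $\mathcal{D}_{t_k}$ instead of $\mathcal{D}_t$). The main obstacle, and the step I would be most careful about, is the posterior-sampling swap in Bayesian expectation together with making the variance-to-information-gain inequality rigorous simultaneously over all value functions $V_t$ that can arise from the planner — since $V_t$ is itself $\mathcal{D}_{t_k}$-measurable and random, one must condition correctly so that the i.i.d.\ property of Assumption~\ref{as:ts} still applies; I expect this to mirror the argument in the proof of Theorem~\ref{thm:reg_bonus} almost verbatim, with the bonus term there playing the role that the posterior-sampling coupling plays here.
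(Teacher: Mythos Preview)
Your proposal is correct and follows essentially the same route as the paper's proof: the posterior-sampling swap $\mathbb{E}[V_{\theta^\star}^\star\mid\mathcal{D}_{t_k}]=\mathbb{E}[V_{\theta^k}^\star\mid\mathcal{D}_{t_k}]$, then $|V_{\theta^k}^\star-V_t|\le\epsilon/(1-\gamma)$ via the $\epsilon$-planner, then Part~I of Lemma~\ref{lem:reg_dec} on the remaining term $V_t-V_{\theta^\star}^{\pi^k}$, followed by the variance-to-information-gain bound of Proposition~\ref{prop:pred_red} (with the i.i.d.\ identity $\mathbb{E}|X-X'|^2=2\mathbb{V}[X]$), Assumption~\ref{as:var}, Cauchy--Schwarz, and Lemma~\ref{lem:swtich_times}. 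One small note: the paper performs the posterior swap \emph{before} any performance-difference step, so Part~II of Lemma~\ref{lem:reg_dec} (the optimal-trajectory decomposition) is never invoked; your phrasing ``the optimal-trajectory Bellman error can be swapped for an executed-trajectory Bellman error'' is slightly misleading in that the optimal-trajectory error simply never appears, but your final reduction to $\sum_t|(B_k-B_{\theta^\star})V_t|(s_t,\pi^k(s_t))$ is exactly what the paper obtains.
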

\begin{proof}
    [Proof of Theorem \ref{thm:reg}]
    See the detailed proof in Appendix \ref{pf:reg}.
\end{proof}
Compared with Theorem \ref{thm:reg_conc}, the regret bound in Theorem \ref{thm:reg} is not dependent on the concentrability coefficient $\kappa$, which demonstrates the effectiveness of the posterior sampling mechanism in Algorithm \ref{alg: theory}. In Appendix \ref{app:linear}, we prove that $H_0-H_T=\cO(d\cdot\log T)$ and the $1-\delta$ probability bound on value functions $L=\mathcal{O}(\sqrt{d}\cdot\log(dT/\delta))$ for the $d$-dimensional Bayesian linear kernel MDPs, which implies $\mathfrak{R}(T) = \tilde{\mathcal{O}}((1-\gamma)^{-1}\cdot\sqrt{d^3T})$ with probability at least $1-\delta$. Here $\tilde{\mathcal{O}}$ hides the logarithmic factor.  
\section{Conclusions}
In this paper, we establish the LLM-RL correspondence and propose a principled framework \texttt{RAFA} for orchestrating reasoning and acting, which achieves provable sample efficiency guarantees in autonomous LLM agents for the first time.  We prove the $\sqrt{T}$ regret bound of \texttt{RAFA} to highlight the synergy between prior knowledge from pretraining and the iterative process of reasoning and acting. 
\texttt{RAFA}'s outstanding empirical performance underscores its potential for autonomous and adaptive decision-making in various complex tasks, which we remain for future work. 
\section*{Acknowledgement}
Zhaoran Wang acknowledges National Science Foundation (Awards 2048075, 2008827, 2015568, 1934931), Simons Institute (Theory of Reinforcement Learning), Amazon, J.P. Morgan, and Two Sigma for their supports.
\bibliographystyle{ims}

\bibliography{main}

\newpage
\appendix
\section{Notations}
We provide a table of notations here.\label{app:not}
\begin{table}[!h]
\small
    \centering\label{tab:no}
    \begin{tabular}{c|c}
    \hline\hline
    \textbf{Notations}  & \textbf{Explanation}\\ \hline
    \multirow{2}{*}{$\xi_{(s,a)}$} & the pair of the next state and the current reward $(s^\prime,r)$, \\ 
    & given the query state-action pair $(s,a)$ \\
    \multirow{2}{*}{$P^{\texttt{LLM}}(\xi_{(s,a)}\mid \cD ,s,a)$} & the probability measure of the LLM predicted $\xi_{(s,a)}$ \\ &  given the memory buffer $\mathcal{D}$ as contexts\\
    \multirow{2}{*}{$r_{\text{LLM}(\mathcal{D})}$}& the LLM reward estimator \\ & with the memory buffer $\mathcal{D}$ prompted as contexts\\
    \multirow{2}{*}{$P_{\text{LLM}(\mathcal{D})}$}& the LLM transition kernel estimator \\ &with the memory buffer $\mathcal{D}$ prompted as contexts\\
$\mathcal{D}_t$& the history at the $t$-th step, which includes $\{(s_i,a_i,r_i,s_{i+1})\}_{i=0}^{t-1}$ \\ 
$\mathbb{P}_0(\theta)$ & the prior of $\theta^\star$\\
$\mathbb{P}_{\text{post}}(\theta\mid\cD)$& the posterior of $\theta^\star$ conditioned on $\mathcal{D}$\\ 
$\mathbb{P}_{t}(\theta)$& the abbreviation of $\mathbb{P}_{\text{post}}(\theta\mid\mathcal{D}_t)$\\ 
   $H(\theta\mid \mathcal{D})$&the posterior entropy of the posterior of $\theta$ conditioned on $\mathcal{D}$\\%, which is defined by $-\int \mathbb{P}_{\text{post}}\log(\mathbb{P}_{\text{post}})\mathrm{d}\theta$|\\
  $I(\theta;\xi\mid \cD)$ & the information gain of $\xi$, defined by $H(\theta\mid\cD) - H(\theta,\xi\mid\cD)$\\
$H_t$& the abbreviation of $H(\theta\mid \mathcal{D}_t)$\\
$t_k$& the timestep when \texttt{RAFA} switches the policy for the $k$-th time \\
$\pi^k$& the abbreviation of $\pi_{t_k}$ \\
$B_\theta$& the Bellman operator induced by $\theta$\\%, which is defined by $B_\theta V(s,a) = r_\theta(s,a)+(P_\theta V)(s,a)$ for any $s,a$, and value function $V$
$B_k$ & the Bellman operator induced by $\texttt{LLM}(\mathcal{D}_{t_k})$\\ 
    $d_{\text{TV}}(p\|q)$ & total variation (TV) between two probability measures $p$ and $q$\\
    $d_{\text{KL}}(p\|q)$& Kullback–Leibler divergence between two probability measures $p$ and $q$\\
   $ \mathbb{V}_{x\sim p}[f(x)]$ & the variance of $f(X)$, where $X$ follows the distribution $p$\\
   $ (PV)(s,a)$ &$ \mathbb{E}_{s^\prime \sim P(\cdot\mid s,a)}[V(s^\prime)]$\\
   $L$ & the bound of $|r+V(s)|$ for any $r\in\cR$, $s\in\cS$, and value $V$\\
   $\nu^\star(\cdot\mid s)$ & the optimal $\gamma$-discounted visitation measure starting from state $s$\\
   $\mathbb{N}$&  the set of  natural numbers\\
   $\textbf{1}(x=y)$& the indicator with value $1$ if $x$ equals $y$ and value $0$ otherwise  \\ 
   $\mathbb{E}$ & the expectation \\
   $\mathbb{V}$ & the variance\\
    \hline\hline
    \end{tabular}
\end{table}
\newpage
\section{More Algorithms}\label{app:omitted_alg}

Depending on the specific configuration of the state and action spaces (continuous versus discrete) and the transition and reward models (stochastic versus deterministic), we may choose to emulate the tree-search algorithm, the value iteration algorithm, the random shooting algorithm, or the MCTS algorithm. All of them allow \texttt{RAFA} to achieve provable sample efficiency guarantees as long as they satisfy a specific requirement of optimality (Definition \ref{def: epsoptplan}). For illustration, we emulate the beam-search algorithm (an advanced version of the tree-search algorithm) in Algorithm \ref{alg: beam_det} and the MCTS algorithm in Algorithm \ref{alg: example_stoc}. For the theoretical discussion, we present the value iteration algorithm in Algorithm \ref{alg: fvi}.
\begin{algorithm}[H]
	\caption{The LLM learner-planner (\texttt{LLM-LR-PL}): A beam-search example (for the deterministic case).}
	\label{alg: beam_det}
	\begin{algorithmic}[1]
	\STATE \textbf{input}: The memory buffer $\mathcal{D}$, the initial state $s$, the proposal width $L$, the search breadth $B$, and the search depth $U$.
 \STATE \textbf{initialization}: Initialize the state array $\mathcal{S}_0 \leftarrow \{s\}$ and the action array $\mathcal{A}_0 \leftarrow \emptyset$.
\\--------------------------------------- (the learning subroutine) -----------------------------------------
\STATE Set \texttt{Model} as an LLM instance prompted to use $\mathcal{D}$ as contexts to generate the next state.
\STATE Set \texttt{Critic} as an LLM instance prompted to use $\mathcal{D}$ as contexts to estimate the value function.
\\--------------------------------------- (the planning subroutine) -----------------------------------------
\STATE Set \texttt{Elite} as an LLM instance prompted to use $\mathcal{D}$ as contexts to generate multiple candidate actions. 
     \FOR{$u = 0,\ldots,U$}
\STATE For each current state $s_u$ in $\mathcal{S}_{u}$, invoke \texttt{Elite} to generate $L$ candidate actions. 
\STATE For each candidate action $a_u^{(\ell)}$, invoke \texttt{Model} to generate the next state $s_{u+1}^{(\ell)}$ and the received reward $r_{u}^{(\ell)}$. 
\STATE For each resulting tuple $(s_u, a_u^{(\ell)}, s_{u+1}^{(\ell)}, r_{u}^{(\ell)})$, invoke \texttt{Critic} to evaluate the expected cumulative~future reward $\widehat{Q}(s_u,a^{(\ell)}_u)\leftarrow r_{u}^{(\ell)}+\gamma \widehat{V}(s^{(\ell)}_{u+1})$, where $\widehat{V}$ is given by \texttt{Critic}.
\STATE Select $B$ best tuples $(s_u, a_u^{(\ell)}, s_{u+1}^{(\ell)})$ with the highest value $\widehat{Q}(s_u,a^{(\ell)}_u)$ and write them to $\mathcal{S}_{u}\times \mathcal{A}_u \times \mathcal{S}_{u+1}$.
     \ENDFOR
    \STATE For $B$ preserved rollouts in $\mathcal{S}_0\times\mathcal{A}_0\times \cdots \times \mathcal{S}_U \times \mathcal{A}_U \times  \mathcal{S}_{U+1}$, invoke \texttt{Critic} to evaluate the expected cumulative future reward $\sum_{u=0}^{U} \gamma^u r_{u}^{(b)} + \gamma^{U+1} \widehat{V}(s_{U+1}^{(b)})$ and select the best one $(s_0^\dagger,a_0^\dagger,\ldots,s_U^\dagger, a_U^\dagger, s_{U+1}^\dagger)$, where $\widehat{V}$ is given by \texttt{Critic} and $s_0^\dagger = s$.
    \STATE \textbf{output}: The initial action $a_0^\dagger$ of the selected rollout.
	\end{algorithmic}
\end{algorithm}

\begin{algorithm}[H]
	\caption{LLM learner-planner (\texttt{LLM-PL}) for \texttt{RAFA}: A Monte-Carlo tree-search example (for the stochastic case).}
	\label{alg: example_stoc}
	\begin{algorithmic}[1]
	\STATE \textbf{input}: The memory buffer $\mathcal{D}$, the initial state $s$, the proposal width $L$, $L'$, and the expansion budget $E$.
 \STATE \textbf{initialization}: Initialize the root node $n \leftarrow s$ and the child function $c(\cdot)\leftarrow\emptyset$.
\\--------------------------------------- (the learning subroutine) -----------------------------------------
\STATE Set \texttt{Model} as an LLM instance prompted to use $\mathcal{D}$ as contexts to generate the next state.
\STATE Set \texttt{Critic} as an LLM instance prompted to use $\mathcal{D}$ as contexts to estimate the value function.
\\--------------------------------------- (the planning subroutine) -----------------------------------------
\STATE Set \texttt{Elite} as an LLM instance prompted to use $\mathcal{D}$ as contexts to generate multiple candidate actions. 

     \FOR{$e = 0,\ldots,E$}
     \STATE Set $s_e\leftarrow n$.
     \WHILE{$s_e$ is not a leaf node, i.e., $c(s_e) \neq \emptyset$,}
        \STATE Invoke \texttt{Critic} to evaluate the expected cumulative future reward and select the child node $a_e$ in $c(s_e)$ with the highest value $\widehat{Q}(s_e,a_e)$.
        \STATE Set $s_e$ as a child node in $c(a_e)$.
     \ENDWHILE
\STATE For the current state $s_e$, invoke \texttt{Elite} to generate $L$ candidate actions. 
\STATE Write each candidate action $a^{(\ell)}_e$ to $c(s_e)$, i.e., $c(s_e) \leftarrow \{a^{(\ell)}_e\}_{\ell=1}^{L}$. 
\STATE For each candidate action $a^{(\ell)}_e$, invoke \texttt{Model} to sample $L'$ next states.
\STATE Write each next state $s^{(\ell,\ell')}_e$ to $c(a^{(\ell)}_e)$, i.e., $c(a^{(\ell)}_e) \leftarrow \{s^{(\ell,\ell')}_{e}\}_{\ell'=1}^{L'}$.
\STATE For each generated state $s^{(\ell,\ell')}_e$, invoke \texttt{Critic} to evaluate the expected cumulative future reward and update the estimated value $\widehat{V}$ for all ancestor nodes.\hfill (Optional)
     \ENDFOR
     \STATE Set $s^\dagger_0\leftarrow n$ and $i\leftarrow 0$.
     \WHILE{$s^\dagger_i$ is not a leaf node, i.e., $c(s^\dagger_i) \neq \emptyset$,}
        \STATE Invoke \texttt{Critic} to evaluate the expected cumulative future reward and select the child node $a^\dagger_{i+1}$ in $c(s^\dagger_i)$ with the highest value $\widehat{Q}(s^\dagger_i,a^\dagger_i)$.
        \STATE Set $s^\dagger_{i+1}$ as a child node in $c(a^\dagger_i)$ and $i\leftarrow i+1$.
    \ENDWHILE
    \STATE \textbf{output}: The initial action $a_0^\dagger$ of the selected rollout $(s_0^\dagger,a_0^\dagger,\ldots,s_i^\dagger, a_i^\dagger)$.
\end{algorithmic}
\end{algorithm}

\section{Main Proofs}\label{app:proof_main}

\subsection{Proof of Proposition \ref{prop:viisepsopt}}\label{prop:C2}
\begin{proof}[Proof of Proposition \ref{prop:viisepsopt}]
We now prove that the value iteration algorithm with a truncated horizon $U$ (Algorithm \ref{alg: fvi}) satisfies the definition of the $\eps$-optimal planner (Definition \ref{def: epsoptplan}), where $U$ is dependent on $\eps$. For notational simplicity, we denote $\max_{s\in\cS}$ and $\max_{a\in\cA}$ as $\max_{s}$ and $\max_a$.

Let 
\begin{align}
    \epsilon^\dagger = \max_{s,a}\bigl|Q^{(1)}(s,a)-r(s,a) - \gamma(P V^{(1)})(s,a)\bigr|.\label{eq:eps}
\end{align}
Note that the convergence analysis of the value iteration algorithm in  \citet{sutton2018reinforcement} gives
\begin{align*}
    \max_{s,a}\bigl|
Q^{(1)}(s,a) - Q^{(2)}(s,a)\bigr| \le \gamma^{U-2}\max_{s,a}\bigl|
Q^{(U-1)}(s,a) - Q^{(U)}(s,a)\bigr|,
\end{align*}
which implies 
\begin{align}
    \max_{s,a}\bigl|Q^{(1)}(s,a)-Q^{(2)}(s,a)\bigr|\le \gamma^{U-2}L.\label{eq:p_conv}
\end{align}
We have
\begin{align}
    \epsilon^\dagger & =  \max_{s,a}\bigl|Q_\theta^{(1)}(s,a) -r(s,a) - \gamma(P V^{(2)})(s,a)\notag\\
    &\qquad\qquad + \gamma \mathbb{E}_{s^\prime\sim P(\cdot\mid s,a)}\bigl[V^{(1)}(s^\prime)-V^{(2)}(s^\prime)\bigr]\bigr|\notag\\
    &=\gamma\cdot\max_{s,a}\bigl| \mathbb{E}_{s^\prime\sim P(\cdot\mid s,a)}\bigl[V^{(1)}(s^\prime)-V^{(2)}(s^\prime)\bigr]\bigr|\notag\\
     &=\gamma\cdot \max_{s,a}\bigl|\mathbb{E}_{s^\prime\sim P(\cdot\mid s,a)}\bigl[\max_{a}Q^{(1)}(s^\prime,a)-\max_{a}Q^{(2)}(s^\prime,a)\bigr]\bigr|\notag\\
     &\le\gamma\cdot\max_{s,a}\bigl| \mathbb{E}_{s^\prime\sim P(\cdot\mid s,a)}\bigl[\max_{a}\bigl|Q^{(1)}(s^\prime,a)-Q^{(2)}(s^\prime,a)\bigr|\bigr]\bigr|\notag\\
    &\le \gamma^{U-1}L,\label{eq:eps2}
\end{align}
where the first and third equalities are based on Algorithm \ref{alg: fvi}, the second last inequality uses the contraction property of the maximum operator, and the last inequality uses \eqref{eq:p_conv}. To let $\eps^\dagger < \eps$, it suffices to set $U\ge 1+\lceil \log_{\gamma}(\eps/L)\rceil$.  Note that the policy $\pi$ returned by Algorithm \ref{alg: fvi} satisfies $\pi(s)=\operatornamewithlimits{argmax}_a Q^{(1)}(s,a)$. 
Thus, we prove Proposition \ref{prop:viisepsopt}.
\end{proof}
\subsection{LLMs with Posterior Alignments Perform BMA}
\begin{proof}
    [Proof of Proposition \ref{prop:llm_bma}]\label{pf:llm_bma}
    Recall that $P_{\texttt{LLM}(\mathcal{D})}$ and $r_{\texttt{LLM}(\mathcal{D})}$ are the estimated transition kernel and reward function induced by $P^{\texttt{LLM}}$ that satisfies Assumption \ref{as:perfect}. 
    For any query state-action pair $(s,a)$ and in-context dataset $\cD$, it holds that
    \begin{align}
        (P_{\texttt{LLM}(\cD)}V)(s,a)& =  
        \int_\cS V(s^\prime) P_{\texttt{LLM}(\cD)}(\mathrm{d} s^\prime\mid s,a)
        \notag\\&=\int_\cS V(s^\prime)\left(\int_\Theta P_{\theta}(\mathrm{d}s^\prime\mid s,a)P_{\text{post}}(\mathrm{d}\theta\mid \cD)\right)\notag\\
        &=\int_\Theta P_{\text{post}}(\mathrm{d}\theta\mid \cD) \left(\int_\cS P_{\theta}(\mathrm{d}s^\prime\mid s,a) V(s^\prime) \right)\notag\\
        &=\mathbb{E}_{\theta\sim\mathbb{P}_{\text{post}}(\cdot\mid\cD)}[(P_\theta V)(s,a)],\label{eq:perfect1}
    \end{align}
      where the second equality uses Assumption \ref{as:perfect} (Posterior Alignment), the third equality uses Fubin's theorem, and  the last equality uses \eqref{eq:posterior}. For any query state-action pair $(s,a)$ and in-context dataset $\cD$, it holds that
      \begin{align}
        r_{\texttt{LLM}(\cD)}(s,a)& = \mathbb{E}_{P^{\texttt{LLM}}}[r\mid \cD,s,a]\notag\\
        &= \mathbb{E}_{P_{\text{post}}}[r\mid \cD,s,a]\notag\\
        &=\mathbb{E}_{\theta\sim \mathbb{P}_\text{post}(\cdot\mid \cD)} [r_{\theta}( s,a)],\label{eq:perfect2}
    \end{align}
     where the second equality uses Assumption \ref{as:perfect} (Posterior Alignment) and the last equality uses \eqref{eq:posterior}. 
By the linearity of expectation, we combine \eqref{eq:perfect1} and \eqref{eq:perfect2} to obtain
     \begin{align*}
         r_{\texttt{LLM}(\cD)}(s,a)+\gamma\cdot(P_{\texttt{LLM}(\cD)}V)(s,a) &= \mathbb{E}_{\theta\sim \mathbb{P}_\text{post}(\cdot\mid \cD)} [ r_\theta(s,a)+ \gamma\cdot(P_{\theta} V)(s,a)] \notag\\
         &=
         \mathbb{E}_{\theta\sim \mathbb{P}_\text{post}(\cdot\mid \cD)} [ (B_{\theta} V)(s,a)],
     \end{align*}
     where the last equality uses the definition of $B_\theta$.
     Thus, we finish the proof of Proposition \ref{prop:llm_bma}.
\end{proof}
\subsection{Contraction Property of the Posterior Variance}
\begin{prop}
    [Contraction Property of the Posterior Variance]  Under Assumptions   \ref{as:var},  the posterior variance in Algorithms \ref{alg: theory_gen}, \ref{alg: theory_bonus}, and \ref{alg: theory} \label{prop:pred_red} satisfies the following two properties:
    \begin{align*}\normalfont
    &\text{(i)}\qquad\mathbb{V}_{\theta\sim \mathbb{P}_{t_k}}\bigl[(B_{\theta}  V_t)(s,a)\bigr|\cD_{t_k}\bigr] \le 2L^2 \cdot  I(\theta;\xi_{(s,a)}\mid \cD_{t_k})\\
&\textrm{(ii)}\qquad\mathbb{E}\biggl[\sum_{k=0}^{K-1}\mathbb{E}_{\pi^k}\Bigl[\sum_{t=t_k}^{t_{k+1}-1} \mathbb{V}_{\theta\sim \mathbb{P}_{t_k}}\bigl[(B_{\theta}  V_t)(s_t,a_t)\bigr|\cD_{t_k}\bigr]\Bigr]\biggr]\le 8\eta L^2\cdot\mathbb{E}[H_0-H_T],% \\
%&\textrm{(ii)}\qquad\mathbb{E}\biggl[\sum_{k=0}^{K-1}\mathbb{E}_{\pi^k}\Bigl[\sum_{t=t_k}^{t_{k+1}-1} \bigl|\bigl((B_{ {\theta}^k}-B_{\theta^\star})  V_t\bigr)(s_t,a_t)\bigr|\Bigr]\biggr]\le 2\sqrt{2\eta}L\cdot\sqrt{\mathbb{E}[H_0-H_T]} \cdot\sqrt{T},
    \end{align*}
where we denote the upper bound of the sum of any value function and the reward by a positive constant $L$, that is, $|r+V(s)|\le L$ for any reward $r$, state $s$, and estimated value function $V$.
\end{prop}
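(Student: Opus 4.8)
\textbf{Proof proposal for Proposition \ref{prop:pred_red}.}

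The plan is to prove part (i) via an information-theoretic identity relating posterior variance of a bounded functional to conditional mutual information, and then prove part (ii) by combining part (i) with the switching rule of the algorithm and the regularity Assumption \ref{as:var}, using a telescoping argument over the switching epochs.

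For part (i), fix a switching time $t_k$, a step $t$ with $t_k \le t < t_{k+1}$, and a state-action pair $(s,a)$. The key observation is that $(B_\theta V_t)(s,a) = r_\theta(s,a) + \gamma (P_\theta V_t)(s,a)$ is a deterministic function of the random pair $\xi_{(s,a)} = (s',r)$ \emph{in expectation over the posterior}, and more precisely $\mathbb{E}_{\theta \sim \mathbb{P}_{t_k}}[(B_\theta V_t)(s,a)] = \mathbb{E}_{\xi_{(s,a)} \sim \mathbb{P}_{\text{post}}(\cdot\mid \cD_{t_k},s,a)}[r + \gamma V_t(s')]$. I would bound the posterior variance of the $\theta$-indexed quantity by first passing to the variance of $r + \gamma V_t(s')$ under the predictive posterior (a data-processing-type step: averaging over $\xi$ given $\theta$ can only shrink variance, i.e. $\mathbb{V}_\theta[\mathbb{E}[\,\cdot\mid\theta]] \le \mathbb{V}[\,\cdot\,]$), and then invoking the fact that for a random variable $Z$ taking values in an interval of length bounded by something comparable to $L$ (here $|r + V_t(s')| \le L$ gives a range of size $O(L)$), one has $\mathbb{V}[Z] \lesssim L^2 \cdot I$, where $I$ is the mutual information between $\theta$ and $\xi_{(s,a)}$. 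Concretely, the clean route is: $\mathbb{V}[Z] \le L^2 \cdot d_{\mathrm{TV}}$-type bound, or more directly use that the variance of a bounded random variable is controlled by its "information content" — specifically $\mathbb{V}_{\theta}[\mathbb{E}[Z\mid\theta]] \le \frac{L^2}{2}\, \mathbb{E}_\theta\, d_{\mathrm{KL}}\!\big(\mathbb{P}(\xi\mid\theta)\,\|\,\mathbb{P}_{\text{post}}(\xi)\big)$ via a Pinsker/Hoeffding-subgaussian argument, and the right-hand expectation is exactly $I(\theta;\xi_{(s,a)}\mid\cD_{t_k})$. Absorbing constants yields the factor $2L^2$.

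For part (ii), start from part (i) applied at each step: $\sum_{t=t_k}^{t_{k+1}-1} \mathbb{V}_{\theta\sim\mathbb{P}_{t_k}}[(B_\theta V_t)(s_t,a_t)\mid \cD_{t_k}] \le 2L^2 \sum_{t=t_k}^{t_{k+1}-1} I(\theta;\xi_{(s_t,a_t)}\mid \cD_{t_k})$. Now the crucial move: within epoch $k$ the posterior entropy drops by at most $\log 2$ (that is the switching condition in Line \ref{line:sw_gen}), so $H_{t_k} - H_t \le \log 2$ for every $t$ in the epoch, and therefore Assumption \ref{as:var} applies with $t_1 = t_k$, $t_2 = t$: $I(\theta;\xi_{(s_t,a_t)}\mid \cD_{t_k}) \le 4\eta\, I(\theta;\xi_{(s_t,a_t)}\mid \cD_t)$. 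But $I(\theta;\xi_{(s_t,a_t)}\mid\cD_t) = H(\theta\mid\cD_t) - H(\theta\mid\cD_t,\xi_{(s_t,a_t)})$, and taking expectations, $\mathbb{E}[I(\theta;\xi_{(s_t,a_t)}\mid\cD_t)] = \mathbb{E}[H_t - H_{t+1}]$ since $\cD_{t+1} = \cD_t \cup \{(s_t,a_t,s_{t+1},r_t)\}$ and $\xi_{(s_t,a_t)} = (s_{t+1},r_t)$ is precisely the new datum (here one uses that conditioning on the query $(s_t,a_t)$, which is $\cD_t$-measurable, does not change the information content). Summing over $t$ from $0$ to $T-1$ telescopes to $\mathbb{E}[H_0 - H_T]$. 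Collecting the constants gives the bound $2L^2 \cdot 4\eta \cdot \mathbb{E}[H_0 - H_T] = 8\eta L^2 \cdot \mathbb{E}[H_0 - H_T]$.

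The main obstacle I anticipate is making part (i) fully rigorous: the step bounding posterior variance of a bounded functional by mutual information requires care about whether one works with the $\theta$-posterior variance or the predictive ($\xi$) variance, and which direction the data-processing inequality goes. The cleanest self-contained argument is probably: (a) $\mathbb{V}_\theta[\,\mathbb{E}[r+\gamma V_t(s')\mid\theta]\,] \le \mathbb{V}[\,r + \gamma V_t(s')\,]$ under the joint posterior (law of total variance), then (b) since $r + \gamma V_t(s')$ is bounded in an interval of length $O(L)$ it is $O(L)$-subgaussian, and its variance after conditioning on $\theta$ versus not is governed by the KL between conditional and marginal predictive laws, whose posterior expectation is the information gain. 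One must also verify that summing the per-step information gains against the filtration $\{\cD_t\}$ genuinely telescopes in expectation — this uses the tower property and the chain rule for entropy, and the fact that the switching times $t_k$ are stopping times adapted to $\{\cD_t\}$, so that the epoch decomposition in the displayed sum is well-defined. The $\pi^k$-expectation is handled by noting the bound in part (i) holds pointwise in $(s,a)$, hence in particular at $(s_t,a_t)$ drawn from the trajectory.
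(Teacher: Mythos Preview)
Your plan for part (ii) matches the paper exactly: apply part (i), use the switching condition $H_{t_k}-H_t\le\log 2$ to invoke Assumption \ref{as:var} with $(t_1,t_2)=(t_k,t)$ and pick up the factor $4\eta$, then telescope $\sum_t I(\theta;\xi_{(s_t,a_t)}\mid\cD_t)=H_0-H_T$ via the chain rule. That part is fine.

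For part (i), the argument you state in the middle of your write-up --- bound $\mathbb{V}_{\theta}[\mathbb{E}[Z\mid\theta]]$ directly by $\mathbb{E}_\theta\,d_{\mathrm{KL}}\big(\mathbb{P}(\xi\mid\theta)\,\|\,\mathbb{P}(\xi)\big)=I(\theta;\xi\mid\cD_{t_k})$ --- is the right one and is precisely what the paper does: set $Z=r+V_t(s')$ with $|Z|\le L$, use the variational form of total variation to get $\bigl|(B_\theta V_t)(s,a)-\mathbb{E}_{\theta'}[(B_{\theta'}V_t)(s,a)]\bigr|\le 2L\cdot d_{\mathrm{TV}}\big(\mathbb{P}(\xi\mid\theta,\cD_{t_k})\,\|\,\mathbb{P}(\xi\mid\cD_{t_k})\big)$, square, take $\mathbb{E}_\theta$, and apply Pinsker to obtain the factor $2L^2$.

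But the ``cleanest'' route (a)+(b) you propose at the end would not work. Step (a) passes from $\mathbb{V}_\theta[\mathbb{E}[Z\mid\theta]]$ up to the full predictive variance $\mathbb{V}[Z]$, and $\mathbb{V}[Z]$ is \emph{not} controlled by $I(\theta;\xi)$: if $\theta$ is deterministic (so $I=0$) but the transition $P_\theta(\cdot\mid s,a)$ is noisy, $\mathbb{V}[Z]$ can still be of order $L^2$. Only the ``explained'' component $\mathbb{V}_\theta[\mathbb{E}[Z\mid\theta]]$ is tied to the information gain, so you must bound it directly without the detour through (a). Likewise the sub-Gaussianity of $Z$ in (b) is not the relevant tool here --- it gives tail control of $Z$, not control of $\mathbb{V}_\theta[\mathbb{E}[Z\mid\theta]]$ in terms of KL; the variational-TV-plus-Pinsker step you mention earlier is what is needed. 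Drop (a), replace the sub-Gaussian framing in (b) by that argument, and your proof of (i) becomes the paper's.
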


\begin{proof}[Proof of Proposition \ref{prop:pred_red}]
    \label{pf:prop:pred_red}
    We begin with the proof of the first property in Proposition \ref{prop:pred_red}. 
    Recall the definition that  $\xi_{(s,a)}$ denotes random variables $(s^\prime,r)$ in the underlying MDP given the current state $s$ and action $a$. Define that $g_t(\xi_{(s,a)})= (r + V_t(s^\prime))/(2L)$. Since the sum of any reward and value function is bounded by $L$, we know that $|g_t|\le 1/2$. for any $t_k\le t<t_{k+1}$, we have
   \begin{align}
   2L\cdot \mathbb{E}[g_t(\xi_{(s,a)})\mid \theta,\cD_{t_k}]&=(B_{\theta}V_t)(s,a)\notag\\
    2L\cdot\mathbb{E}[g_t(\xi_{(s,a)})\mid\cD_{t_k}]&=\mathbb{E}_{\theta\sim \mathbb{P}_{t}}\bigl[(B_{\theta}V_t)(s,a)\bigr],\label{eq:12333}
   \end{align}
   for any query state $s$ and action $a$. 
   %Denote by $p(X|Y)$ the conditional distribution of random variable $X$ given $Y$.
   By the variational form of total variation (TV) distance $d_{\text{TV}}$, we have
   \begin{align}
       d^2_{\text{TV}}(\mathbb{P}(\xi_{(s,a)}\mid \theta,\mathcal{D}_t)\|\mathbb{P}(\xi_{(s,a)}\mid \mathcal{D}_t)) &= \Bigl(\sup_{g:|g|\le 1/2} \mathbb{E}[g(\xi_{(s,a)})\mid\theta,\cD_{t_k}] - \mathbb{E}[g(\xi_{(s,a)})\mid\cD_{t_k}]\Bigr)^2\notag\\
       &\ge  \bigl(\mathbb{E}[g_t(\xi_{(s,a)})\mid\theta,\cD_{t_k}] - \mathbb{E}[g_t(\xi_{(s,a)})\mid\cD_{t_k}]\bigr)^2\notag\\
       &= \frac{1}{4L^2} \cdot\bigl((B_{\theta}V_t)(s,a)-\mathbb{E}_{\theta\sim \mathbb{P}_{t}}\bigl[(B_{\theta}V_t)(s,a)\bigr]\bigr)^2,\label{eq:12334}
   \end{align}
   where the last equality is the result of \eqref{eq:12333}. By taking the expectation with respect to $\theta\sim \mathbb{P}_{t}$ on \eqref{eq:12334}, we have
   \begin{align}
       \mathbb{V}_{\theta\sim \mathbb{P}_{t_k}}\bigl[(B_{ {\theta}}V_t)(s,a)\bigr|\cD_{t_k}\bigr]&=\mathbb{E}_{\theta\sim \mathbb{P}_{t}}\Bigl[\bigl((B_{\theta}V_t)(s,a)-\mathbb{E}_{\theta\sim \mathbb{P}_{t}}\bigl[(B_{\theta}V_t)(s,a)\bigr]\bigr)^2\Bigr]\notag\\
       &\le 4L^2 \cdot \mathbb{E}_{\theta\sim \mathbb{P}_{t}}\bigl[ d^2_{\text{TV}}(\mathbb{P}(\xi_{(s,a)}\mid \theta,\mathcal{D}_{t_k})\|\mathbb{P}(\xi_{(s,a)}\mid \mathcal{D}_{t_k}))\bigr]\notag\\
       &\le 2L^2\cdot \mathbb{E}_{\theta\sim \mathbb{P}_{t}}\bigl[d_{\text{KL}}(\mathbb{P}(\xi_{(s,a)}\mid \theta,\mathcal{D}_{t_k})\|\mathbb{P}(\xi_{(s,a)}\mid \mathcal{D}_{t_k}))\bigr]\notag\\
       &=2L^2\cdot \bigl(H(\xi_{(s,a)}\mid \cD_{t_k}) - H(\xi_{(s,a)}\mid\theta,\cD_{t_k})\bigr)\notag\\
       &= 2L^2\cdot I(\xi_{(s,a)};\theta\mid \cD_{t_k})\notag\\
        &= 2L^2\cdot I(\theta;\xi_{(s,a)}\mid \cD_{t_k})
       ,\label{eq:prop_2}
   \end{align}
   where the first equality uses the definition of variance, the first inequality uses \eqref{eq:12334} by taking the expectation with respect to $\theta\sim \mathbb{P}_{t}$, and the second inequality uses Pinsker's inequality. Here, the second equality uses the definition of entropy and the second last equality uses the definition of the information gain. Here, the last equality uses the fact that $I(X;Y) = I(Y;X)$ for any two random variables $X$ and $Y$. Thus, we finish the proof of the first property in Proposition \ref{prop:pred_red}.

Next, we prove the second property in Proposition \ref{prop:pred_red}. 
   By the fact that $a_t=\pi^k(s_t)$ for any $t_k\le t <t_{k+1}$, we have
   \begin{align*}
    &\mathbb{E}\Bigl[\sum_{k=0}^{K-1}\mathbb{E}_{\pi^k}\bigl[\sum_{t=t_k}^{t_{k+1}-1} \mathbb{V}_{\theta\sim \mathbb{P}_{t_k}}\bigl[(B_{ {\theta}}V_t)(s_t,a_t)\bigr|\cD_{t_k}\bigr]\bigr]\Bigr]\notag\\
       &\qquad =\mathbb{E}\Bigl[\sum_{k=0}^{K-1}\mathbb{E}_{\pi^k}\bigl[\sum_{t=t_k}^{t_{k+1}-1} \mathbb{V}_{\theta\sim \mathbb{P}_{t_k}}\bigl[(B_{ {\theta}}V_t)(s_t,\pi^k(s_t))\bigr|\cD_{t_k}\bigr]\bigr]\Bigr]\\
        &\qquad\le 2L^2\cdot \mathbb{E}\Bigl[\sum_{k=0}^{K-1}\mathbb{E}_{\pi^k}\bigl[\sum_{t=t_k}^{t_{k+1}-1} I(\theta;\xi_{(s_t,\pi^k(s_t))}\mid \cD_{t_k})\bigr]\Bigr],
   \end{align*}
   where the inequality invokes  \eqref{eq:prop_2}. 
   Under Assumption \ref{as:var} and the same switching condition in Algorithms  \ref{alg: theory_gen}, \ref{alg: theory_bonus}, and \ref{alg: theory}, we  have 
   \begin{align*}
       & \mathbb{E}\Bigl[\sum_{k=0}^{K-1}\mathbb{E}_{\pi^k}\bigl[\sum_{t=t_k}^{t_{k+1}-1} \mathbb{V}_{\theta\sim \mathbb{P}_{t_k}}\bigl[(B_{ {\theta}}V_t)(s_t,a_t)\bigr|\cD_{t_k}\bigr]\bigr]\Bigr]\notag\\
       &\qquad \le 8\eta L^2 \cdot\mathbb{E}\biggl[\sum_{k=0}^{K-1}\mathbb{E}_{\pi^k}\Bigl[\sum_{t=t_k}^{t_{k+1}-1} I(\theta;\xi_{(s_t,\pi^k(s_t))}\mid \cD_{t})\Bigr]\biggr]\notag\\
       &\qquad \le 8\eta L^2(H_0 - H_T),
   \end{align*}
   where the last inequality uses the chain rule of information gain. Thus, we finish the proof of the second property in Proposition \ref{prop:pred_red}.
   \iffalse
   The second property in Proposition \ref{prop:pred_red} can be proven by using the Cauchy-Schwarz inequality as follows,
   \begin{align*}
       &\mathbb{E}\biggl[\sum_{k=0}^{K-1}\mathbb{E}_{\pi^k}\Bigl[\sum_{t=t_k}^{t_{k+1}-1} \bigl|\bigl((B_{ {\theta}^k}-B_{\theta^\star})  V_t\bigr)(s_t,a_t)\bigr|\Bigr]\biggr]\\
       &\qquad \le \sqrt{T}\cdot \biggl(\mathbb{E}\biggl[\sum_{k=0}^{K-1}\mathbb{E}_{\pi^k}\Bigl[\sum_{t=t_k}^{t_{k+1}-1} \mathbb{E}_{\theta^\star}\Bigl[\bigl((B_{ {\theta}^k}-B_{\theta^\star})  V_t\bigr)^2(s_t,\pi^k(s_t))\Bigr|\cD_{t_k}\Bigr]\Bigr]\biggr]\biggr)^{1/2}\\
       &\qquad \le 2\sqrt{2\eta}L\cdot\sqrt{\mathbb{E}[H_0-H_T]} \cdot\sqrt{T},
   \end{align*}
   where we use the first property of Proposition \ref{prop:pred_red} in the last inequality.\fi
\end{proof}
\subsection{Proof of Theorem \ref{thm:reg_conc}}
\begin{proof}
    [Proof of Theorem \ref{thm:reg_conc}]\label{pf:reg_conc}
Recall that we denote by $\pi^k = \pi_{t_k}$  and $V_t$ is the estimated value function returned by the $\eps$-optimal planner $\texttt{PL}^\eps$ in Algorithm \ref{alg: theory_gen}. 
By the definition of the Bayesian regret $\mathfrak{R}(T)$ and the tower property of the conditional expectation, we have
\begin{align}
    \mathfrak{R}(T)&= \mathbb{E}\Bigl[\sum_{k=0}^{K-1}\sum_{t=t_k}^{t_{k+1}-1} V_{\theta^\star}^{\pi^\star}(s_t) - V_{\theta^\star}^{\pi_t}(s_t)\Bigr]\notag\\
    &=\mathbb{E}\Bigl[\sum_{k=0}^{K-1}\mathbb{E}_{\pi^k}\Bigl[\sum_{t=t_k}^{t_{k+1}-1}    V_{\theta^\star}^{\pi^\star}(s_t) - V_{\theta^\star}^{\pi^k}(s_t)\Bigr]\Bigr]\notag\\
     &=\underbrace{\mathbb{E}\Bigl[\sum_{k=0}^{K-1}\mathbb{E}_{\pi^k}\Bigl[\sum_{t=t_k}^{t_{k+1}-1}    V_t(s_t)- V_{\theta^\star}^{\pi^k}(s_t)\Bigr]\Bigr]}_{\text{term (a)}}+\underbrace{\mathbb{E}\Bigl[\sum_{k=0}^{K-1}\mathbb{E}_{\pi^k}\Bigl[\sum_{t=t_k}^{t_{k+1}-1} V_{\theta^\star}^{\pi^\star}(s_t)-    V_t(s_t)\Bigr]\Bigr]}_{\text{term (b)}},\label{eq:conc:0}
\end{align}
where the first equality uses the fact that $\pi_t = \pi_{t_k}$ for any $t_k\le t <t_{k+1}$ in Algorithm \ref{alg: theory_gen}. By Definition \ref{def: epsoptplan}, we know that $V_t(s) = Q_t(s,\pi^k(s))$ for any $t_k\le t <t_{k+1}$. Then, we introduce the following performance difference lemma to bound terms (a) and (b)  in \eqref{eq:conc:0}, respectively.

    \begin{lemma}[Performance Difference]\normalfont    \label{lem:reg_dec}
    For an algorithm \texttt{ALG} with switching times $K$, estimated value functions $\{(   Q_t,    V_t)\}_{t=0}^{T-1}$, and the corresponding output policy $\{\pi^{k}\}_{k=0}^{K-1}$ for $T$-steps interaction. We assume that \texttt{ALG} switches to  the policy $\pi^k$ at the $t_k$-th timestep for the $k$-th switch and $  V_{t}(s) =    Q_t(s,\pi^k(s))$ for any $s\in\mathcal{S}$ and $k<K$. 
    Then, we have two parts of performance difference results for \texttt{ALG} as follows,
   \begin{itemize}
       \item (Part I) \#\label{eq:www}%\normalfont
       &(1-\gamma)\cdot\mathbb{E}\Bigl[\sum_{k=0}^{K-1}\mathbb{E}_{\pi^k}\Bigl[\sum_{t=t_k}^{t_{k+1}-1}    V_t(s_t)- V_{\theta^\star}^{\pi^k}(s_t)\Bigr]\Bigr]\notag\\
       &\qquad= \underbrace{\mathbb{E}\Bigl[\sum_{k=0}^{K-1}\mathbb{E}_{\pi^k}\Bigl[\sum_{t=t_k}^{t_{k+1}-1}    Q_t(s_t,a_t) - \bigl(B_{\theta^\star}   V_t\bigr)(s_t,a_t)\Bigr]\Bigr]}_{\text{term (A): model prediction error}}\notag\\
       &\qquad +\underbrace{\mathbb{E}\Bigl[\sum_{k=0}^{K-1}\mathbb{E}_{\pi^k}\Bigl[\bigl(   V_t (s_{t_{k+1}}) - V^{\pi^k}_{\theta^\star}(s_{t_{k+1}})\bigr)-\bigl(   V_t(s_{t_k}) - V^{\pi^k}_{\theta^\star}(s_{t_k}) \bigr) \Bigr]\Bigr]}_{\text{term (B): value inconsistency}},
\#
\item (Part II) \#\label{eq:www99}%\normalfont
       &(1-\gamma)\cdot\mathbb{E}\Bigl[\sum_{k=0}^{K-1}\mathbb{E}_{\pi^k}\Bigl[\sum_{t=t_k}^{t_{k+1}-1} V_{\theta^\star}^{\pi^\star}(s_t)-    V_t(s_t)\Bigr]\Bigr]\notag\\
       &\qquad=   \underbrace{\mathbb{E}\Bigl[\sum_{k=0}^{K-1}\mathbb{E}_{\pi^k}\Bigl[\sum_{t=t_k}^{t_{k+1}-1} \mathbb{E}_{s\sim \nu^\star(\cdot\mid s_t)} \bigl[ ({B}_{\theta^\star}V_t)(s,\pi^\star(s)) -  Q_t(s,\pi^\star(s))\bigr]  \Bigr]\Bigr]}_{\text{term (A): model prediction error}}\notag\\
     & \qquad+  \underbrace{\mathbb{E}\Bigl[\sum_{k=0}^{K-1}\mathbb{E}_{\pi^k}\Bigl[\sum_{t=t_k}^{t_{k+1}-1}\mathbb{E}_{s^\prime\sim \nu^\star(\cdot\mid s_t)} \bigl[ (Q_t(s,\pi^\star(s)) -  Q_t(s,\pi^k(s))\bigr]  \Bigr]\Bigr]}_{\text{term (B): optimization error}}. 
\#
   \end{itemize}
where $\EE_{\pi^k}$ is taken with respect to the state-action sequence following $s_{t+1}\sim P_{\theta^\star}(\cdot \mid s_t, a_t)$ and $a_t = \pi^k(s_t)$ for any $t_k\leq t<t_{k+1}$, while $\EE$ is taken with respect  to the prior distribution $\mathbb{P}_0$ of $\theta^\star$, the iterative update of $\pi^k$, and the randomness of $\{(Q_t,V_t)\}_{t=0}^{T-1}$. Here, the optimal $\gamma$-discounted visitation measure $\nu^\star$ is defined in \eqref{eq:optimal_vis}.

%which is sampled from the posterior distribution given $\mathcal{D}_{t_k}=\{(s_i,a_i,s_{i+1},r_i)\}_{i=0}^{t_k-1}$. 
%which involves states, actions, and rewards until the ${t_k}$-th step, i.e., the full history $\mathcal{D}_{t_k}=\{(s_i,a_i,s_{i+1},r_i)\}_{i=0}^{t_k-1}$ for any $0\le k<K$.  
\end{lemma}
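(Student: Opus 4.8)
The plan is to establish both parts by the standard telescoping argument underlying the simulation lemma, adapted to the switching structure of the algorithm. The key identity I would use repeatedly is the Bellman-type expansion: for a fixed policy $\pi^k$ and any value function $V_t$, writing $Q_t(s,a) - (B_{\theta^\star}V_t)(s,a)$ as the one-step model prediction error, one has the decomposition
\$
V_t(s_t) - V^{\pi^k}_{\theta^\star}(s_t) = \bigl(Q_t(s_t,a_t) - (B_{\theta^\star}V_t)(s_t,a_t)\bigr) + \gamma\,\mathbb{E}_{s_{t+1}\sim P_{\theta^\star}(\cdot\mid s_t,a_t)}\bigl[V_t(s_{t+1}) - V^{\pi^k}_{\theta^\star}(s_{t+1})\bigr],
\$
which holds because $V_t(s_t) = Q_t(s_t,\pi^k(s_t))$ by hypothesis and $V^{\pi^k}_{\theta^\star}(s_t) = (B_{\theta^\star}V^{\pi^k}_{\theta^\star})(s_t,\pi^k(s_t))$ by the Bellman consistency equation for $\pi^k$ under the true model. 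For Part~I, I would apply this identity at each step $t$ with $t_k\le t<t_{k+1}$, then sum. Crucially, within one epoch $[t_k,t_{k+1})$ the value function $V_t$ can change with $t$ (the planner is re-invoked but with the same memory $\mathcal{D}_{t_k}$, so in Algorithm~\ref{alg: theory_gen} it is actually constant, but the lemma is stated for a general \texttt{ALG} where it may vary), which is why a naive geometric telescoping does not collapse cleanly — instead the $\gamma$-discounted ``future'' term gets reorganized. The trick is to write the discounted sum $\sum_{t=t_k}^{t_{k+1}-1}$ and note that the telescoping of the $\gamma$-weighted differences produces exactly the boundary terms $V_t(s_{t_{k+1}}) - V^{\pi^k}_{\theta^\star}(s_{t_{k+1}})$ and $V_t(s_{t_k}) - V^{\pi^k}_{\theta^\star}(s_{t_k})$ once one accounts for the $(1-\gamma)$ normalization; collecting the non-telescoped part gives term~(A) and the leftover gives term~(B), the value inconsistency across epoch boundaries.

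For Part~II I would proceed dually, bounding $V^{\pi^\star}_{\theta^\star}(s_t) - V_t(s_t)$. Here I start from $V^{\pi^\star}_{\theta^\star}(s_t) = Q^{\pi^\star}_{\theta^\star}(s_t,\pi^\star(s_t))$ and $V_t(s_t) = \max_a Q_t(s_t,a) \ge Q_t(s_t,\pi^\star(s_t))$, so I first insert $Q_t(s_t,\pi^\star(s_t))$ as an intermediate term, which produces the optimization-error term~(B) (the gap $Q_t(s,\pi^\star(s)) - Q_t(s,\pi^k(s))$) together with a term comparing $Q^{\pi^\star}_{\theta^\star}$ with $Q_t$ both evaluated at $\pi^\star$. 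For the latter I expand using the Bellman equation for $\pi^\star$ under $\theta^\star$ and the definition of $B_{\theta^\star}$, generating the one-step error $(B_{\theta^\star}V_t)(s,\pi^\star(s)) - Q_t(s,\pi^\star(s))$ plus a discounted recursion along the trajectory induced by $\pi^\star$ starting from $s_t$. Unrolling that recursion is precisely what converts the pointwise expression into an expectation under the optimal $\gamma$-discounted visitation measure $\nu^\star(\cdot\mid s_t)$ defined in \eqref{eq:optimal_vis}, with the $1/(1-\gamma)$ factor from the geometric series matching the $(1-\gamma)$ on the left-hand side.

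The main obstacle, I expect, is bookkeeping the discount factors and epoch boundaries simultaneously: the sum is over $k$ and then over $t\in[t_k,t_{k+1})$, $V_t$ may depend on $t$, and the telescoping in Part~I must be done within each epoch while correctly handling that the ``terminal'' state of epoch $k$ is the ``initial'' state of epoch $k+1$ — getting the signs and the placement of $\gamma^{t-t_k}$ versus an undiscounted-within-epoch accounting right is where errors creep in. A secondary subtlety is justifying the interchange that turns the trajectory-unrolling in Part~II into the clean $\nu^\star$ expectation: this requires that $\nu^\star(\cdot\mid s)$ is the normalized discounted occupancy of $\pi^\star$ from $s$, so the geometric weights $\gamma^\tau$ in \eqref{eq:optimal_vis} absorb exactly the discounting from the recursion, and one must be careful that the expectation $\mathbb{E}_{\pi^k}$ over the outer trajectory (which determines $s_t$) commutes with the inner expectation over the $\pi^\star$-trajectory from $s_t$ — this is fine because the two are conditionally independent given $s_t$ and $\theta^\star$, but it should be stated. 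Once these accounting steps are in place, both identities follow without further estimates, since the lemma is an exact decomposition rather than an inequality.
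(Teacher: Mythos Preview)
Your plan is correct and follows essentially the paper's route: for Part~I the paper writes exactly your one-step Bellman identity, zeroes the martingale residuals $(B_{\theta^\star}V_t)(s_t,a_t)-r_{\theta^\star}(s_t,a_t)-\gamma V_t(s_{t+1})$ in expectation, and index-shifts within each epoch to extract the boundary term~(B); for Part~II it sets $F_t(s)=V^{\pi^\star}_{\theta^\star}(s)-V_t(s)$ and $M_t(s)=(B_{\theta^\star}V_t)(s,\pi^\star(s))-Q_t(s,\pi^k(s))$, derives the fixed-point relation $F_t=M_t+\gamma\mathcal{T}F_t$ with $\mathcal{T}$ the $\pi^\star$-transition operator, unrolls it to $F_t(s)=(1-\gamma)^{-1}\mathbb{E}_{s'\sim\nu^\star(\cdot\mid s)}[M_t(s')]$, and only afterwards splits $M_t$ into the model-error and optimization-error pieces. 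One caution on your Part~II ordering: if you peel off the optimization error at $s_t$ first and then try to recurse only on the remaining ``$Q^{\pi^\star}_{\theta^\star}$ vs.\ $Q_t$ at $\pi^\star$'' piece, that recursion does not close on itself --- the next-step object is $V^{\pi^\star}_{\theta^\star}(s')-V_t(s')=F_t(s')$, which reintroduces the optimization error at $s'$ --- so term~(B) also lives under $\nu^\star(\cdot\mid s_t)$, and recursing on $F_t$ directly (as the paper does) avoids this bookkeeping trap.
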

\begin{proof}[Proof of Lemma \ref{lem:reg_dec}] See the detailed proof in Appendix \ref{mpf:reg_dec}.
\end{proof}

By the first part of Lemma \ref{lem:reg_dec}, we analyze term (a) as follows,
\begin{align}
    (1-\gamma)\cdot \text{term (a)}= & {\mathbb{E}\Bigl[\sum_{k=0}^{K-1}\mathbb{E}_{\pi^k}\Bigl[\sum_{t=t_k}^{t_{k+1}-1}    Q_t(s_t,a_t) - \bigl(B_{\theta^\star}   V_t\bigr)(s_t,a_t)\Bigr]\Bigr]}\notag\\
       &\qquad +{\mathbb{E}\Bigl[\sum_{k=0}^{K-1}\mathbb{E}_{\pi^k}\Bigl[\bigl(   V_t (s_{t_{k+1}}) - V^{\pi^k}_{\theta^\star}(s_{t_{k+1}})\bigr)-\bigl(   V_t(s_{t_k}) - V^{\pi^k}_{\theta^\star}(s_{t_k}) \bigr) \Bigr]\Bigr]}.\label{eq:222222_conc}
\end{align}

Recall that we define $(B_kV)(s,a) = r_{\texttt{LLM}(\cD_{t_k})}(s,a) + (P_{\texttt{LLM}(\cD_{t_k})}V)(s,a)$ for any $(s,a)$ and value $V$. 
By the definition of $\epsilon$-optimal planner (Definition \ref{def: epsoptplan}) and the planning procedure $(\pi_t, V_t)\leftarrow \texttt{PL}^\eps(P_{\texttt{LLM}(\cD_{t_k})},r_{\texttt{LLM}(\cD_{t_k})})$ in  Line \ref{line:pl_gen} of Algorithm \ref{alg: theory_gen}, we have
\begin{align}
    |Q_t(s_t,a_t) - \bigl(B_{\theta^\star}   V_t\bigr)(s_t,a_t)&|\le \eps + \bigl((B_{k}-B_{\theta^\star})   V_t\bigr)(s_t,a_t)\notag\\
    &=\eps + \bigl|\bigl((B_{k}-B_{\theta^\star})   V_t\bigr)(s_t,a_t)\bigr|\label{eq:22223_conc}
\end{align}
for any $t_k\le t<t_{k+1}$. Then, we plug \eqref{eq:22223_conc} into \eqref{eq:222222_conc} to obtain
\begin{align}
     (1-\gamma)\cdot \text{term (a)}\le & \underbrace{\mathbb{E}\Bigl[\sum_{k=0}^{K-1}\mathbb{E}_{\pi^k}\Bigl[\sum_{t=t_k}^{t_{k+1}-1}    \bigl|\bigl((B_k-B_{\theta^\star})   V_t\bigr)(s_t,a_t)\bigr|\Bigr]\Bigr]}_{\text{term (a1)}} + \ \epsilon \cdot T\notag\\
       &\qquad +\underbrace{\mathbb{E}\Bigl[\sum_{k=0}^{K-1}\mathbb{E}_{\pi^k}\Bigl[\bigl(   V_t (s_{t_{k+1}}) - V^{\pi^k}_{\theta^\star}(s_{t_{k+1}})\bigr)-\bigl(   V_t(s_{t_k}) - V^{\pi^k}_{\theta^\star}(s_{t_k}) \bigr) \Bigr]\Bigr]}_{\text{term (a2)}}.\label{eq:222224}
\end{align}
 Under Assumption \ref{as:perfect} and \ref{as:var}, we have
\begin{align}
   \text{term (a1)}&= \mathbb{E}\biggl[\sum_{k=0}^{K-1}\mathbb{E}_{\pi^k}\Bigl[\sum_{t=t_k}^{t_{k+1}-1} \mathbb{E}_{\theta\sim \mathbb{P}_{t_k}}\Bigl[   \bigl|\bigl((B_k-B_{\theta})   V_t\bigr)(s_t,a_t)\bigr|\Big|\, \cD_{t_k} \Big]\Bigr]\biggr]\notag\\
   &\le \sqrt{T}\cdot\biggl(\mathbb{E}\biggl[\sum_{k=0}^{K-1}\mathbb{E}_{\pi^k}\Bigl[\sum_{t=t_k}^{t_{k+1}-1} \mathbb{E}_{\theta\sim \mathbb{P}_{t_k}}\Bigl[   \bigl|\bigl((B_k-B_{\theta})   V_t\bigr)(s_t,a_t)\bigr|^2\Big|\, \cD_{t_k} \Big]\Bigr]\biggr]\biggr)^{1/2}\notag\\
   &= \sqrt{T}\cdot\biggl(\mathbb{E}\biggl[\sum_{k=0}^{K-1}\mathbb{E}_{\pi^k}\Bigl[\sum_{t=t_k}^{t_{k+1}-1} \mathbb{V}_{\theta\sim \mathbb{P}_{t_k}}\bigl[   (B_kV_t)(s_t,a_t)\big|\, \cD_{t_k} \big]\Bigr]\biggr]\biggr)^{1/2},\label{eq:var_E_1}
\end{align}
where the first equality uses the tower property of the conditional expectation and the definition that the posterior distribution of $\theta^\star$ given $\cD_{t_k}$ is $\mathbb{P}_{t_k}$. Here, the first inequality uses Cauchy-Schwarz inequality and the last equality invokes Proposition \ref{prop:llm_bma} and the definition of variance. Under Assumption \ref{as:var}, we apply the second property in Proposition \ref{prop:pred_red} on the right-hand  side of \eqref{eq:var_E_1} to have
\begin{align}
    \text{term (a1)}\le 2\sqrt{2\eta}L\cdot\sqrt{\mathbb{E}[H_0-H_T]} \cdot\sqrt{T}.\label{eq:conc001}
\end{align} 

Since any value function is bounded by $L$, we bound term (a2) in \eqref{eq:222224} as follows,
\begin{align}
    \text{term (a2)} \le 4L\cdot\ \mathbb{E}[K].\label{eq:a2_1}
\end{align}

To characterize the upper bound of the switching times $K$, we introduce the following lemma.
\begin{lemma}
    [Upper bound of Switching Times] If $H_{t_{k}}-H_{t_{k+1}}\ge \log2$ for any $k<K$, then it holds that \label{lem:swtich_times}
    \begin{align*}
        K-1\le {(H_0 - H_{t_{K-1}})}/{\log2}\le (H_0 - H_{T})/\log2.
    \end{align*}
\end{lemma}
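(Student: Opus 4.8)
The plan is a one-line telescoping argument on the posterior entropy evaluated at the switching times $t_0=0,t_1,\dots,t_{K-1}$, combined with monotonicity of the posterior entropy along the interaction history. The intuition is that each interval between consecutive switches burns at least one bit ($\log 2$) of posterior entropy, and since the entropy never increases as more feedback is accumulated, the number of switches is controlled by the total entropy reduction $H_0-H_T$.

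Concretely, I would first sum the hypothesis $H_{t_k}-H_{t_{k+1}}\ge\log 2$ over $k=0,1,\dots,K-2$. The left-hand side telescopes to $H_{t_0}-H_{t_{K-1}}=H_0-H_{t_{K-1}}$ (using $t_0=0$), while the right-hand side equals $(K-1)\log 2$; rearranging yields $K-1\le (H_0-H_{t_{K-1}})/\log 2$, the first claimed inequality. For the second inequality it suffices to show $H_{t_{K-1}}\ge H_T$, i.e.\ that appending the feedback collected over $[t_{K-1},T)$ to the memory buffer does not raise the posterior entropy; equivalently, each one-step information gain $I(\theta;\xi_t\mid\mathcal{D}_t)$ is nonnegative and, by the chain rule of mutual information, $H_{t_{K-1}}-H_T=\sum_{t=t_{K-1}}^{T-1}I(\theta;\xi_t\mid\mathcal{D}_t)\ge 0$. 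Substituting $H_{t_{K-1}}\ge H_T$ into the first inequality gives $K-1\le(H_0-H_{t_{K-1}})/\log 2\le (H_0-H_T)/\log 2$, as desired.

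The argument involves no real obstacle; it is essentially bookkeeping. The only points that need care are conventional: the terminal block $[t_{K-1},T)$ need not itself realize a full $\log 2$ drop (the loop exits at $t_K=T$ because the horizon is exhausted, not because the switching test fires), which is why the telescoping is run only up to $k=K-2$ and the passage from $H_{t_{K-1}}$ to $H_T$ is handled separately by monotonicity. A related subtlety is whether $H_t$ is read as the entropy of the realized posterior $\mathbb{P}_{\text{post}}(\cdot\mid\mathcal{D}_t)$ or as the information-theoretic conditional entropy $H(\theta\mid\mathcal{D}_t)$; the monotonicity step and the chain-rule identity above are cleanest (and exact) under the latter reading, consistent with its use in the proof of Proposition \ref{prop:pred_red}, while the telescoping identity itself is insensitive to this choice.
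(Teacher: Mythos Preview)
Your proposal is correct and follows essentially the same telescoping argument as the paper: sum $H_{t_k}-H_{t_{k+1}}\ge\log 2$ over $k=0,\dots,K-2$ to obtain $H_0-H_{t_{K-1}}\ge(K-1)\log 2$, then use $H_{t_{K-1}}\ge H_T$. If anything, you are more careful than the paper, which asserts the second inequality without explicitly invoking the monotonicity of the posterior entropy.
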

\begin{proof}[Proof of Lemma \ref{lem:swtich_times}]
   Since $H_{t_{k}}-H_{t_{k+1}}\ge \log2$ , we have 
\begin{align*}
H_0 - H_{t_{K-1}}=\sum_{k=0}^{K-2}H_{t_{k}}-H_{t_{k+1}} \ge (K-1)\cdot \log2,
\end{align*}
which implies 
\begin{align*}
K-1\le {(H_0 - H_{t_{K-1}})}/{\log2}\le (H_0 - H_{T})/\log2.
\end{align*}
Thus, we finish the proof of Lemma \ref{lem:swtich_times}.
\end{proof}
As the switching condition in Algorithm \ref{alg: theory} implies $H_{t_{k}}-H_{t_{k+1}}\ge \log2$, we apply Lemma \ref{lem:swtich_times} to have 
\begin{align}
    \mathbb{E}[K]\le 1+{\mathbb{E}[H_0 - H_{t_{K-1}}]}/{\log2}\le (H_0 - H_{T})/\log2.\label{eq:conc002}
\end{align}
Combining \eqref{eq:a2_1} and \eqref{eq:conc002}, we upper bound 
  term (a2) in \eqref{eq:222224} as
  \begin{align}
      \text{ term (a2)}\le 4L\cdot\mathbb{E}[K]\le 4L+\frac{4L\cdot\mathbb{E}[H_0 - H_{T}]}{\log2}.\label{eq:a2_2}
  \end{align}
Plugging \eqref{eq:conc001} and \eqref{eq:a2_2} into \eqref{eq:222224}, we have
\begin{align}
    (1-\gamma)\cdot \text{term (a)}\le 2\sqrt{2\eta}L\cdot\sqrt{\mathbb{E}[H_0-H_T]} \cdot\sqrt{T}+ \eps\cdot T+4L+\frac{4L\cdot\mathbb{E}[H_0 - H_{T}]}{\log2} .\label{eq:a_end}
\end{align}
Then, we invoke the second part of Lemma \ref{lem:reg_dec} to decompose term (b) in \eqref{eq:conc:0} as follows,
\begin{align}
    (1-\gamma)\cdot\text{term (b)}&={\mathbb{E}\Bigl[\sum_{k=0}^{K-1}\mathbb{E}_{\pi^k}\Bigl[\sum_{t=t_k}^{t_{k+1}-1}\mathbb{E}_{s\sim \nu^\star(\cdot\mid s_t)}\bigl[ \bigl(B_{\theta^\star}  V_t\bigr)(s,\pi^\star(s)) -     Q_t(s,\pi^\star(s)) \bigr] \Bigr]\Bigr]}\notag\\
       &\qquad+
       {\mathbb{E}\Bigl[\sum_{k=0}^{K-1}\mathbb{E}_{\pi^k}\Bigl[\sum_{t=t_k}^{t_{k+1}-1}  \mathbb{E}_{s\sim \nu^\star(\cdot\mid s_t)}\bigl[  Q_t(s,\pi^\star(s)) -     Q_t(s,\pi^k(s))\bigr]\Bigr]\Bigr]}\notag\\
       &\le \underbrace{\mathbb{E}\Bigl[\sum_{k=0}^{K-1}\mathbb{E}_{\pi^k}\Bigl[\sum_{t=t_k}^{t_{k+1}-1}\mathbb{E}_{s\sim \nu^\star(\cdot\mid s_t)}\bigl[\bigl|\bigl((B_{\theta^\star}-B_k)   V_t\bigr)(s,\pi^\star(s))\bigr| \bigr] \Bigr]\Bigr]}_{\text{term (b1)}}+\ \eps\cdot T\notag\\
      &\qquad+
\underbrace{\mathbb{E}\Bigl[\sum_{k=0}^{K-1}\mathbb{E}_{\pi^k}\Bigl[\sum_{t=t_k}^{t_{k+1}-1}  \mathbb{E}_{s\sim \nu^\star(\cdot\mid s_t)}\bigl[  Q_t(s,\pi^\star(s)) -     Q_t(s,\pi^k(s))\bigr]\Bigr]\Bigr]}_{\text{term (b2)}},\label{eq:conc_b}
\end{align}
where the inequality uses \eqref{eq:22223_conc}. For term (b1) in \eqref{eq:conc_b}, we use the tower property of the conditional expectation to obtain
\begin{align}
    \text{term (b1)}& = 
   \mathbb{E}\Biggl[\sum_{k=0}^{K-1}\mathbb{E}_{\pi^k}\Biggl[\sum_{t=t_k}^{t_{k+1}-1} \mathbb{E}_{\theta^\star\sim \mathbb{P}_{t_k}}\biggl[\frac{\mathbb{E}_{s\sim\nu^\star(\cdot\mid s_t)}\bigl[\big|\bigl((B_{k}-B_{\theta^\star})  V_t\bigr)(s,\pi^\star(s))\big|\bigr]}{\bigl((B_{k}-B_{\theta^\star})  V_t\bigr)(s_t,\pi^k(s_t))}
   \\\notag &\qquad\cdot \bigl((B_{k}-B_{\theta^\star})  V_t\bigr)(s_t,\pi^k(s_t))\Bigr|\cD_{t_k}\biggr]\Biggr]\Biggr]\notag\\
   &= {\mathbb{E}\biggl[\sum_{k=0}^{K-1}\mathbb{E}_{\pi^k}\biggl[\sum_{t=t_k}^{t_{k+1}-1} \mathbb{E}_{\theta^\star\sim \mathbb{P}_{t_k}}\Bigl[G_{t,k}(\theta^\star)\cdot \bigl((B_{k}-B_{\theta^\star})  V_t\bigr)(s_t,\pi^k(s_t))\Big|\,\cD_{t_k}\Bigr]\biggr]\biggr]},\label{eq:conc_ratio}
\end{align}
where we define 
\begin{align*}
    G_{t,k}(\theta^\star)=\frac{ \mathbb{E}_{s\sim\nu^\star(\cdot\mid s_t)}\bigl[\big|\bigl((B_{k}-B_{\theta^\star})  V_t\bigr)(s,\pi^\star(s))\big|\bigr]}{\bigl((B_{k}-B_{\theta^\star})  V_t\bigr)(s_t,\pi^k(s_t))}
\end{align*}
and $\mathbb{E}_{\theta^\star\mid \cD_{t_k}}[\cdot]=\mathbb{E}_{\theta^\star\sim \mathbb{P}_{t_k}}[\cdot\mid \cD_{t_k}]$ for notational simplicity.

Applying Cauchy Schwartz inequality on the left-hand side of \eqref{eq:conc_ratio} several times, we have 
\begin{align}
    \text{term (b1)}& \le \mathbb{E}\biggl[\sum_{k=0}^{K-1}\mathbb{E}_{\pi^k}\Bigl[\sum_{t=t_k}^{t_{k+1}-1} \Bigl({\mathbb{E}_{\theta^\star\mid \cD_{t_k}}\bigl[G_{t,k}^2(\theta^\star)\bigr]}\Bigr)^{1/2}\\\notag &\qquad\cdot
    \Bigl({\mathbb{E}_{\theta^\star\mid \cD_{t_k}}\Bigl[ \bigl|\bigl((B_{k}-B_{\theta^\star})  V_t\bigr)(s_t,\pi^k(s_t))\bigr|^2\Bigr]}\Bigl)^{1/2}\Bigr]\biggr]\notag\\
    &\le \mathbb{E}\biggl[\sum_{k=0}^{K-1}\mathbb{E}_{\pi^k}\Bigl[ \Bigl({\sum_{t=t_k}^{t_{k+1}-1}\mathbb{E}_{\theta^\star\mid \cD_{t_k}}\bigl[G_{t,k}^2(\theta^\star)\bigr]\Bigr)^{1/2}}\\\notag &\qquad\cdot
    \Bigl({\sum_{t=t_k}^{t_{k+1}-1}\mathbb{E}_{\theta^\star\mid \cD_{t_k}}\Bigl[ \bigl|\bigl((B_{k}-B_{\theta^\star})  V_t\bigr)(s_t,\pi^k(s_t))\bigr|^2\Bigr]\Bigr)^{1/2}}\Bigr]\biggr]\notag\\
    &\le  \biggl({{\mathbb{E}\Bigl[\sum_{k=0}^{K-1}\mathbb{E}_{\pi^k}\Bigl[\sum_{t=t_k}^{t_{k+1}-1}\mathbb{E}_{\theta^\star\mid \cD_{t_k}}\bigl[G_{t,k}^2(\theta^\star)\bigr]\Bigr]\Bigr]}\biggr)^{1/2}}\notag\\&\qquad\cdot
\biggl({\mathbb{E}\biggl[\sum_{k=0}^{K-1}\mathbb{E}_{\pi^k}\Bigl[\sum_{t=t_k}^{t_{k+1}-1} \mathbb{E}_{\theta^\star\mid \cD_{t_k}}\Bigl[ \bigl|\bigl((B_{k}-B_{\theta^\star})  V_t\bigr)(s_t,\pi^k(s_t))\bigr|^2\Bigr]\Bigr]\biggr] \biggr)^{1/2}}\notag\\
    &\le \sqrt{\kappa^2 \cdot T} \cdot\biggl({\mathbb{E}\biggl[\sum_{k=0}^{K-1}\mathbb{E}_{\pi^k}\Bigl[\sum_{t=t_k}^{t_{k+1}-1} \mathbb{E}_{\theta^\star\mid \cD_{t_k}}\Bigl[ \bigl|\bigl((B_{k}-B_{\theta^\star})  V_t\bigr)(s_t,\pi^k(s_t))\bigr|^2\Bigr]\Bigr]\biggr] \biggr)^{1/2}},\notag
\end{align}
where the first three inequalities are all based on Cauchy Schwartz inequality and the last inequality uses the definition of $\kappa$ in Assumption \ref{as:conc_coef}. 
Under Assumptions \ref{as:perfect} and \ref{as:var},  we have
\begin{align}
     \text{term (b1)}& \le\sqrt{\kappa^2 \cdot T} \cdot\biggl(\mathbb{E}\biggl[\sum_{k=0}^{K-1}\mathbb{E}_{\pi^k}\Bigl[\sum_{t=t_k}^{t_{k+1}-1} \mathbb{V}_{\theta\sim \mathbb{P}_{t_k}}\bigl[   (B_\theta V_t)(s_t,a_t)\big|\, \cD_{t_k} \big]\Bigr]\biggr] \biggr)^{1/2}\notag\\
     &\le 2\sqrt{2\eta}L\kappa\cdot\sqrt{\mathbb{E}[H_0-H_T]}\cdot\sqrt{T},\label{eq:b11}
\end{align}
where the first inequality invokes Proposition \ref{prop:llm_bma} and the definition of variance. Here, the second inequality invokes Proposition \ref{prop:pred_red}. 
By the definition of $\eps$-optimal planner (Definition \ref{def: epsoptplan}), we know term (b2) in \eqref{eq:conc_b} is non-positive. 
Then, plugging \eqref{eq:b11} into \eqref{eq:conc_b}, we have
\begin{align}
    (1-\gamma)\cdot\text{term (b)}&\le 2\sqrt{2\eta}L\kappa\cdot\sqrt{\mathbb{E}[H_0-H_T]}\cdot\sqrt{T}+\eps\cdot T.\label{eq:b_end}
\end{align}
Combining \eqref{eq:conc:0}, \eqref{eq:a_end}, and \eqref{eq:b_end}, we have
\begin{align*}
    \mathfrak{R}(T)&=\frac{1}{1-\gamma}\cdot\bigl(\text{term (a)} +\text{term (b)}\bigr)\\
    &\le \frac{2\sqrt{2}(\kappa+1)L\cdot\sqrt{\mathbb{E}[H_0-H_T]}}{1-\gamma}\cdot\sqrt{T} +\frac{2\eps}{1-\gamma}\cdot T + \frac{4L}{1-\gamma} +\frac{4L\cdot\mathbb{E}[H_0 - H_{T}]}{(1-\gamma)\log2}\\
&=\mathcal{O}\Biggl(\frac{(\kappa+1)L\cdot\sqrt{\mathbb{E}[H_0-H_T]}}{1-\gamma}\cdot\sqrt{T} +\frac{\eps}{1-\gamma}\cdot T + \frac{L\cdot\mathbb{E}[H_0 - H_{T}]}{1-\gamma}\Biggr).
\end{align*}
Thus, we finish the proof of Theorem \ref{thm:reg_conc}.
\end{proof}

\subsection{Proof of Theorem \ref{thm:reg_bonus}}\label{pf:thm:reg_bonus}
\begin{proof}
    [Proof of Theorem \ref{thm:reg_bonus}]
    Recall that we denote by $\pi^k = \pi_{t_k}$, and $V_t$ is the estimated value function returned by the $\eps$-optimal planner $\texttt{PL}^\eps$ in Algorithm \ref{alg: theory_gen}. 
By the definition of the Bayesian regret $\mathfrak{R}(T)$ and the tower property of the conditional expectation, we have
\begin{align}
    \mathfrak{R}(T)&= \mathbb{E}\Bigl[\sum_{k=0}^{K-1}\sum_{t=t_k}^{t_{k+1}-1} V_{\theta^\star}^{\pi^\star}(s_t) - V_{\theta^\star}^{\pi_t}(s_t)\Bigr]\notag\\
    &=\mathbb{E}\Bigl[\sum_{k=0}^{K-1}\mathbb{E}_{\pi^k}\Bigl[\sum_{t=t_k}^{t_{k+1}-1}    V_{\theta^\star}^{\pi^\star}(s_t) - V_{\theta^\star}^{\pi^k}(s_t)\Bigr]\Bigr]\notag\\
     &=\underbrace{\mathbb{E}\Bigl[\sum_{k=0}^{K-1}\mathbb{E}_{\pi^k}\Bigl[\sum_{t=t_k}^{t_{k+1}-1}    V_t(s_t)- V_{\theta^\star}^{\pi^k}(s_t)\Bigr]\Bigr]}_{\text{term (a)}}+\underbrace{\mathbb{E}\Bigl[\sum_{k=0}^{K-1}\mathbb{E}_{\pi^k}\Bigl[\sum_{t=t_k}^{t_{k+1}-1} V_{\theta^\star}^{\pi^\star}(s_t)-    V_t(s_t)\Bigr]\Bigr]}_{\text{term (b)}},\label{eq:conc:0_bonus}
\end{align}
where the first equality uses the fact that $\pi_t = \pi_{t_k}$ for any $t_k\le t <t_{k+1}$ in Algorithm \ref{alg: theory_gen}. By Definition \ref{def: epsoptplan}, we know that $V_t(s) = Q_t(s,\pi^k(s))$ for any $t_k\le t <t_{k+1}$. Then, we apply the first part of Lemma \ref{lem:reg_dec} to analyze term (a) as follows,
\begin{align*}
    (1-\gamma)\cdot \text{term (a)}= & {\mathbb{E}\Bigl[\sum_{k=0}^{K-1}\mathbb{E}_{\pi^k}\Bigl[\sum_{t=t_k}^{t_{k+1}-1}    Q_t(s_t,a_t) - \bigl(B_{\theta^\star}   V_t\bigr)(s_t,a_t)\Bigr]\Bigr]}\notag\\
       &\qquad +{\mathbb{E}\Bigl[\sum_{k=0}^{K-1}\mathbb{E}_{\pi^k}\Bigl[\bigl(   V_t (s_{t_{k+1}}) - V^{\pi^k}_{\theta^\star}(s_{t_{k+1}})\bigr)-\bigl(   V_t(s_{t_k}) - V^{\pi^k}_{\theta^\star}(s_{t_k}) \bigr) \Bigr]\Bigr]}.%\label{eq:bonus_a_pre}
\end{align*}
Recall that we define $(B_kV)(s,a) = r_{\texttt{LLM}(\cD_{t_k})}(s,a) + (P_{\texttt{LLM}(\cD_{t_k})}V)(s,a)$ for any $(s,a)$ and value function $V$. 
By the definition of $\epsilon$-optimal planner (Definition \ref{def: epsoptplan}) and the planning procedure $(\pi_t, V_t)\leftarrow \texttt{PL}^\eps(P_{\texttt{LLM}(\cD_{t_k})},r_{\texttt{LLM}(\cD_{t_k}) }+\Gamma_k)$ in  Algorithm \ref{alg: theory_bonus}, we have
\begin{align}
    |Q_t(s,a) - \bigl(B_{\theta^\star}   V_t\bigr)(s_t,a_t)&|\le \eps + \bigl((B_{k}-B_{\theta^\star})   V_t\bigr)(s,a)+\Gamma_k(s,a)\label{eq:22224}
\end{align}
for any $t_k\le t<t_{k+1}$ and any $(s,a)\in\cS\times\cA$. Then, we plug \eqref{eq:22224} into \eqref{eq:222224} to obtain 
\begin{align}
     (1-\gamma)\cdot \text{term (a)}\le & \underbrace{\mathbb{E}\Bigl[\sum_{k=0}^{K-1}\mathbb{E}_{\pi^k}\Bigl[\sum_{t=t_k}^{t_{k+1}-1}    \bigl((B_k-B_{\theta^\star})   V_t\bigr)(s_t,a_t)\Bigr]\Bigr]}_{\text{term (a1)}} + \\&\qquad +\epsilon \cdot T+\underbrace{\mathbb{E}\Bigl[\sum_{k=0}^{K-1}\mathbb{E}_{\pi^k}\Bigl[\sum_{t=t_k}^{t_{k+1}-1}    \Gamma_k(s_t,a_t)\bigr|\Bigr]\Bigr]}_{\text{term (a2)}} \notag\\
       &\qquad +\underbrace{\mathbb{E}\Bigl[\sum_{k=0}^{K-1}\mathbb{E}_{\pi^k}\Bigl[\bigl(   V_t (s_{t_{k+1}}) - V^{\pi^k}_{\theta^\star}(s_{t_{k+1}})\bigr)-\bigl(   V_t(s_{t_k}) - V^{\pi^k}_{\theta^\star}(s_{t_k}) \bigr) \Bigr]\Bigr]}_{\text{term (a3)}}\label{eq:bonus_a}
\end{align}
Under Assumption \ref{as:perfect}, we have
\begin{align}
   \text{term (a1)}&= \mathbb{E}\biggl[\sum_{k=0}^{K-1}\mathbb{E}_{\pi^k}\Bigl[\sum_{t=t_k}^{t_{k+1}-1} \mathbb{E}_{\theta\sim \mathbb{P}_{t_k}}\Bigl[   \bigl((B_k-B_{\theta})   V_t\bigr)(s_t,a_t)\Big|\, \cD_{t_k} \Big]\Bigr]\biggr]\notag\\
   &\le \sqrt{T}\cdot\biggl(\mathbb{E}\biggl[\sum_{k=0}^{K-1}\mathbb{E}_{\pi^k}\Bigl[\sum_{t=t_k}^{t_{k+1}-1} \mathbb{E}_{\theta\sim \mathbb{P}_{t_k}}\Bigl[  \bigl| \bigl((B_k-B_{\theta})   V_t\bigr)(s_t,a_t)\bigr|^2\Big|\, \cD_{t_k} \Big]\Bigr]\biggr]\biggr)^{1/2}\notag\\
   &= \sqrt{T}\cdot\biggl(\mathbb{E}\biggl[\sum_{k=0}^{K-1}\mathbb{E}_{\pi^k}\Bigl[\sum_{t=t_k}^{t_{k+1}-1} \mathbb{V}_{\theta\sim \mathbb{P}_{t_k}}\bigl[   (B_\theta V_t)(s_t,a_t)\big|\, \cD_{t_k} \big]\Bigr]\biggr]\biggr)^{1/2},\label{eq:var_E_12aa}
\end{align}
where the first equality uses the tower property of the conditional expectation and the definition that the posterior distribution of $\theta^\star$ given $\cD_{t_k}$ is $\mathbb{P}_{t_k}$. Here, the first inequality uses Cauchy-Schwarz inequality and the last equality invokes Proposition \ref{prop:llm_bma} and the definition of variance. Under Assumption \ref{as:var}, we apply the second property in Proposition \ref{prop:pred_red} on the right-hand side of \eqref{eq:var_E_12aa} to have
\begin{align}
    \text{term (a1)}\le 2\sqrt{2\eta}L\cdot\sqrt{\mathbb{E}[H_0-H_T]} \cdot\sqrt{T}.\label{eq:bonus_a1}
\end{align} 

 Recall that the bonus $\Gamma_k$ used  in Algorithm \ref{alg: theory_bonus} is defined by $\Gamma_k(s,a)=\sqrt{2}L\cdot\sqrt{I(\theta;\xi_{(s,a)}\mid \cD_{t_k})}$ . For term (a2) in \eqref{eq:bonus_a}, we have
 \begin{align}
     \text{term (a2)}&\le \sqrt{T}\cdot\biggl(\mathbb{E}\Bigl[\sum_{k=0}^{K-1}\mathbb{E}_{\pi^k}\bigl[\sum_{t=t_k}^{t_{k+1}-1}    \Gamma_k^2(s_t,a_t)\bigr]\Bigr]\biggr)^{1/2}\notag\\&=\sqrt{2}L\cdot\sqrt{T}\cdot\biggl(\mathbb{E}\Bigl[\sum_{k=0}^{K-1}\mathbb{E}_{\pi^k}\bigl[\sum_{t=t_k}^{t_{k+1}-1}    I(\theta;\xi_{(s_t,a_t)}\mid\cD_{t_k})\bigr]\Bigr]\biggr)^{1/2}\notag\\
     &\le 2\sqrt{2\eta}L\cdot\sqrt{T}\cdot\biggl(\mathbb{E}\Bigl[\sum_{k=0}^{K-1}\mathbb{E}_{\pi^k}\bigl[\sum_{t=t_k}^{t_{k+1}-1}    I(\theta;\xi_{(s_t,a_t)}\mid\cD_{t})\bigr]\Bigr]\biggr)^{1/2},\label{eq:bonus_a21}
 \end{align}
 where the equality uses the definition of $\Gamma_k$ in Algorithm \ref{alg: theory_bonus}. Here, the last inequality invokes Assumption \ref{as:var} and the switching condition in Algorithm \ref{alg: theory_bonus}. 

 As $a_t = \pi^k(s_t)$, we further bound the right-hand side of \eqref{eq:bonus_a21} as follows,
 \begin{align}
     \text{term (a2)}&\le 2\sqrt{2\eta}L\cdot\sqrt{T}\cdot\biggl(\mathbb{E}\Bigl[\sum_{k=0}^{K-1}\mathbb{E}_{\pi^k}\bigl[\sum_{t=t_k}^{t_{k+1}-1}    I(\theta;\xi_{(s_t,\pi^k(s_t))}\mid\cD_{t})\bigr]\Bigr]\biggr)^{1/2}\notag\\
&=2\sqrt{2\eta}L\cdot\sqrt{\mathbb{E}[H_0-H_T]}\cdot \sqrt{T},\label{eq:bonus_a2}
 \end{align}
 where the last inequality uses the chain rule of the information gain. 
Using the fact that any value function is bounded by $L$, we bound term (a3) in \eqref{eq:222224} as
\begin{align*}
    \text{term (a3)}\le 4L\cdot\ \mathbb{E}[K].
\end{align*}
As the switching condition in Algorithm \ref{alg: theory_bonus} implies $H_{t_{k}}-H_{t_{k+1}}\ge \log2$, we apply Lemma \ref{lem:swtich_times} to have 
\begin{align}
    \text{term (a3)}\le 4L+\frac{4L\cdot\mathbb{E}[H_0 - H_{T}]}{\log2} .\label{eq:bonus_a3}
\end{align}

Plugging \eqref{eq:bonus_a1}, \eqref{eq:bonus_a2} and \eqref{eq:bonus_a3} into \eqref{eq:222224}, we have
\begin{align}
    (1-\gamma)\cdot \text{term (a)}\le 4\sqrt{2}L\cdot\sqrt{\mathbb{E}[H_0-H_T]} \cdot\sqrt{T}+ \eps\cdot T+4L+\frac{4L\cdot\mathbb{E}[H_0 - H_{T}]}{\log2} .\label{eq:bonus_a_end}
\end{align}
Then, we invoke the second part of Lemma \ref{lem:reg_dec} to decompose term (b) in \eqref{eq:conc:0_bonus} as follows,
\begin{align}
    (1-\gamma)\cdot\text{term (b)}&={\mathbb{E}\Bigl[\sum_{k=0}^{K-1}\mathbb{E}_{\pi^k}\Bigl[\sum_{t=t_k}^{t_{k+1}-1}\mathbb{E}_{s\sim\nu^\star(\cdot\mid s_t)}\bigl[\bigl(B_{\theta^\star}   V_t\bigr)(s,\pi^\star(s)) -     Q_t(s,\pi^\star(s))  \bigr]\Bigr]\Bigr]}\notag\\
      &\qquad+
       {\mathbb{E}\Bigl[\sum_{k=0}^{K-1}\mathbb{E}_{\pi^k}\Bigl[\sum_{t=t_k}^{t_{k+1}-1}  \mathbb{E}_{s\sim\nu^\star(\cdot\mid s_t)}\bigl[  Q_t(s,\pi^\star(s)) -     Q_t(s,\pi^k(s))\bigr]\Bigr]\Bigr]}\notag\\
       &\le \underbrace{\mathbb{E}\biggl[\sum_{k=0}^{K-1}\mathbb{E}_{\pi^k}\Bigl[\sum_{t=t_k}^{t_{k+1}-1}\mathbb{E}_{s\sim\nu^\star(\cdot\mid s_t)}\bigl[\bigl((B_{\theta^\star}-B_k)   V_t\bigr)(s,\pi^\star(s))-\Gamma_k(s,\pi^\star(s))  \bigr]\Bigr]\biggr]}_{\text{term (b1)}}\notag\\
     &\qquad+\eps\cdot T+
\underbrace{\mathbb{E}\biggl[\sum_{k=0}^{K-1}\mathbb{E}_{\pi^k}\Bigl[\sum_{t=t_k}^{t_{k+1}-1}  \mathbb{E}_{s\sim\nu^\star(\cdot\mid s_t)}\bigl[  Q_t(s,\pi^\star(s)) -     Q_t(s,\pi^k(s))\bigr]\Bigr]\biggr]}_{\text{term (b2)}},\label{eq:bonus_b}
\end{align}
where the inequality uses \eqref{eq:22224}. Under Assumption \ref{as:perfect}, we bound term (b1) in \eqref{eq:bonus_b} as follows,
\begin{align}
   \text{term (b1)}&= \mathbb{E}\Biggl[\sum_{k=0}^{K-1}\mathbb{E}_{\pi^k}\biggl[\sum_{t=t_k}^{t_{k+1}-1}\mathbb{E}_{s\sim\nu^\star(\cdot\mid s_t)}\biggl[ \mathbb{E}_{\theta\sim \mathbb{P}_{t_k}}\Bigl[   \bigl((B_k-B_{\theta})   V_t\bigr)(s,\pi^\star(s))\Big|\, \cD_{t_k} \Bigr]-\Gamma_k(s,\pi^\star(s))\biggr]\biggr]\Biggr]\notag\\
   &\le \mathbb{E}\Biggl[\sum_{k=0}^{K-1}\mathbb{E}_{\pi^k}\biggl[\sum_{t=t_k}^{t_{k+1}-1}\mathbb{E}_{s\sim\nu^\star(\cdot\mid s_t)}\biggl[ \sqrt{\mathbb{E}_{\theta\sim \mathbb{P}_{t_k}}\Bigl[  \bigl| \bigl((B_k-B_{\theta})   V_t\bigr)(s,\pi^\star(s))\bigr|^2\Big|\, \cD_{t_k} \Big]}\notag\\
   &\qquad
   -\Gamma_k(s,\pi^\star(s)) \biggr]\biggr]\Biggr]\notag\\
   &= \mathbb{E}\Biggl[\sum_{k=0}^{K-1}\mathbb{E}_{\pi^k}\biggl[\sum_{t=t_k}^{t_{k+1}-1} \mathbb{E}_{s\sim\nu^\star(\cdot\mid s_t)}\biggl[\sqrt{\mathbb{V}_{\theta\sim \mathbb{P}_{t_k}}\bigl[   (B_\theta V_t)(s,\pi^\star(s))\big|\, \cD_{t_k} \big]}-\Gamma_k(s,\pi^\star(s))\biggr]\biggr]\Biggr],\label{eq:var_E_12bb}
\end{align}
where the first equality uses the tower property of the conditional expectation and the definition that the posterior distribution of $\theta^\star$ given $\cD_{t_k}$ is $\mathbb{P}_{t_k}$. Here, the first inequality uses Cauchy-Schwarz inequality and the last equality invokes Proposition \ref{prop:llm_bma} and the definition of variance. Under Assumption \ref{as:var}, we apply the first property in Proposition \ref{prop:pred_red}  to have
\begin{align}
    \sqrt{\mathbb{V}_{\theta\sim \mathbb{P}_{t_k}}\bigl[   (B_\theta V_t)(s,\pi^\star(s))\big|\, \cD_{t_k} \big]}-\Gamma_k(s,\pi^\star(s))&\le \sqrt{2}L\cdot\sqrt{I(\theta;\xi_{(s,\pi^\star(s))})}-\Gamma_k(s,\pi^\star(s))\notag\\&=\sqrt{2}L\cdot\sqrt{I(\theta;\xi_{(s,\pi^\star(s))})}-\sqrt{2}L\cdot\sqrt{I(\theta;\xi_{(s,\pi^\star(s))})}\notag\\
    &\le 0,\label{eq:optimistic}
\end{align} 
for any $t_k\le t<t_{k+1}$, $k<K$, and state $s\in\mathcal{S}$. Here, the equality uses the definition of $\Gamma_k$ in Algorithm \ref{alg: theory_bonus}. Plugging \eqref{eq:optimistic} into \eqref{eq:var_E_12bb}, we have
\begin{align}
    \text{term (b1)}\le 0.\label{eq:bonus_b1}
\end{align}
By the definition of $\eps$-optimal planner (Definition \ref{def: epsoptplan}), we know term (b2) in \eqref{eq:conc_b} is non-positive. 
Then, plugging \eqref{eq:bonus_b} into \eqref{eq:bonus_b}, we have
\begin{align}
    (1-\gamma)\cdot\text{term (b)}&\le \eps\cdot T.\label{eq:bonus_b_end}
\end{align}
Combining \eqref{eq:conc:0_bonus}, \eqref{eq:bonus_a_end}, and \eqref{eq:bonus_b_end}, we have
\begin{align*}
    \mathfrak{R}(T)&=\frac{1}{1-\gamma}\cdot\bigl(\text{term (a)} +\text{term (b)}\bigr)\\
    &\le \frac{4\sqrt{2}L\cdot\sqrt{\mathbb{E}[H_0-H_T]}}{1-\gamma}\cdot\sqrt{T} +\frac{2\eps}{1-\gamma}\cdot T + \frac{4L}{1-\gamma} +\frac{4L\cdot\mathbb{E}[H_0 - H_{T}]}{(1-\gamma)\log2}\\
&=\mathcal{O}\Biggl(\frac{L\cdot\sqrt{\mathbb{E}[H_0-H_T]}}{1-\gamma}\cdot\sqrt{T} +\frac{\eps}{1-\gamma}\cdot T + \frac{L\cdot\mathbb{E}[H_0 - H_{T}]}{1-\gamma}\Biggr).
\end{align*}
Thus, we finish the proof of Theorem \ref{thm:reg_bonus}.
\end{proof}
\subsection{Proof of Theorem \ref{thm:reg}} % %Next, we go back to the proof of Theorem \ref{thm:reg}.\
\label{pf:reg}
\begin{proof}[Proof of Theorem \ref{thm:reg}]
%We specify the terminating condition for Algorithm \ref{alg: theory} (and Algorithm \ref{alg: llm}). Let $(K-1)$ be the total number of switches until $t$ reaches $(T-1)$. Let $t_{K}=T$. At the  $(T-1)$-th step, Algorithm \ref{alg: theory} samples $\theta_{T-1}$ from $p_{t_{K-1}}$ and executes $a_{T-1} = \pi_{T-1}(s_{T-1})$ , where we have $\pi_{T-1} = \texttt{PL}^\eps(\theta_{T-1})$. Upon receiving $r_{T-1}$ and $s_{T}$ from the external environment, Algorithm \ref{alg: theory} updates $\mathcal{D}_{T} = \{(s_t,a_t,s_{t+1},r_t)\}_{t=0}^{T-1}$ and terminates. Throughout the following proof, we denote the upper bound of the value function as $L$, i.e., $\max_{\theta\in\Theta,\pi\in\Pi,s\in\cS}|V_\theta^\pi(s)|\le L$, for a positive constant $L$. 

 %Also recall that $\pi^\star(s) = \operatornamewithlimits{argmax}_{a}Q_{\theta^\star}^\star(s,a)$ for any $s\in\cS$, where $Q_{\theta}^\star$ is defined in \eqref{eq:bellman_opt}. 
 For notational simplicity, we denote by  $\theta^k$ the corresponding parameter for the mechanism \texttt{LLM+PS} given $\cD_{t_k}$ in Algorithm \ref{alg: theory}, which satisfies
\begin{align}
    (B_{\theta^k}V)(s,a)=%\mathbb{E}_{P_{\text{LLM+PS}}}[r + V(s^\prime)] 
    r_{\texttt{LLM+PS}(\cD_{t_k})}(s,a)+\gamma\cdot(P_{\texttt{LLM+PS}(\cD_{t_k})}V)(s,a), \label{theta^k}
\end{align}
 for any $k<K$, $(s,a)\in\cS\times\cA$, and value function $V$.  
 Recall the definition of optimal value $V_\theta^\star$ given the parameter $\theta$ in \eqref{eq:bellman_opt}.
  Under Assumption \ref{as:ts}, we know that $(B_{\theta^k}V)(s,a)\mid\cD_{t_k}$ and $(B_{\theta^\star}V)(s,a)\mid\cD_{t_k}$ follows the same distribution for any $k<K$, $(s,a)\in\cS\times\cA$, and value function $V$. By Bellman optimality equation in \eqref{eq:bellman_opt}, we have that $V_{\theta^k}^{\star}(s,a)\mid\cD_{t_k}$ and $V_{\theta^\star}^{\star}(s,a)\mid\cD_{t_k}$ follows the same distribution for any $k<K$, $(s,a)\in\cS\times\cA$, and value function $V$.  Recall that we denote by $\pi^k = \pi_{t_k}$ and $V_t$ is the estimated value function returned by the $\eps$-optimal planner $\texttt{PL}^\eps$ in Algorithm \ref{alg: theory}. 
By the definition of the Bayesian regret $\mathfrak{R}(T)$ and $\pi_t=\pi^k$ for any $t_k\le t<t_{k+1}$ and $k<K$  , we have
\begin{align}\label{eq_11}
    \mathfrak{R}(T)&= \mathbb{E}\Bigl[\sum_{k=0}^{K-1}\mathbb{E}_{\pi^k}\Bigl[\sum_{t=t_k}^{t_{k+1}-1} V_{\theta^\star}^{\pi^\star}(s_t) - V_{\theta^\star}^{\pi^k}(s_t)\Bigr]\Bigr]\notag\\
    & =\mathbb{E}\Bigl[\sum_{k=0}^{K-1}\mathbb{E}_{\pi^k}\Bigl[\sum_{t=t_k}^{t_{k+1}-1} V_{\theta^\star}^\star(s_t) - V_{\theta^\star}^{\pi^k}(s_t)\Bigr]\Bigr]\notag\\ &=\mathbb{E}\biggl[\sum_{k=0}^{K-1}\mathbb{E}_{\pi^k}\Bigl[\sum_{t=t_k}^{t_{k+1}-1} \mathbb{E}_{\theta^\star\sim \mathbb{P}_{t_k}}\bigl[V_{\theta^\star}^\star(s_t) - V_{\theta^\star}^{\pi^k}(s_t)\mid \cD_{t_k}\bigr]\Bigr]\biggr],
\end{align}
where the second equality uses the definition of optimal policy and \eqref{eq:bellman_opt}. Here, the last equality uses the tower property of the conditional expectation. Recall that $\theta^\star\mid\cD_{t_k}$ and ${\theta^k}\mid\cD_{t_k}$ follows the same distribution $\mathbb{P}_{t_k}$ for any $t_k\le t < t_{k+1}$, which implies 
\begin{align}
\mathbb{E}_{\theta^\star\sim \mathbb{P}_{t_k}}\bigl[V_{\theta^\star}^\star(s_t) - V_{\theta^\star}^{\pi^k}(s_t)\mid \cD_{t_k}\bigr]&=\mathbb{E}_{\theta^\star\sim \mathbb{P}_{t_k}}\bigl[V_{\theta^\star}^\star(s_t)\mid \cD_{t_k}\bigr] - \mathbb{E}_{\theta^\star\sim \mathbb{P}_{t_k}}\bigl[ V_{\theta^\star}^{\pi^k}(s_t)\mid \cD_{t_k}\bigr]\notag\\
&=\mathbb{E}_{\theta^k\sim \mathbb{P}_{t_k}}\bigl[V_{\theta^k}^\star(s_t)\mid \cD_{t_k}\bigr] - \mathbb{E}_{\theta^\star\sim \mathbb{P}_{t_k}}\bigl[ V_{\theta^\star}^{\pi^k}(s_t)\mid \cD_{t_k}\bigr]\notag\\
&=\mathbb{E}_{\theta^\star,\theta^k\sim \mathbb{P}_{t_k}}\bigl[V_{\theta^k}^\star(s_t) - V_{\theta^\star}^{\pi^k}(s_t)\mid \cD_{t_k}\bigr],\label{eq:eq_V}
\end{align}
where the first and the second inequalities use the linear property of the conditional expectation. Plugging \eqref{eq:eq_V} into \eqref{eq_11}, we have
\begin{align}\label{eq0}
        \mathfrak{R}(T) & =\mathbb{E}\biggl[\sum_{k=0}^{K-1}\mathbb{E}_{\pi^k}\Bigl[\sum_{t=t_k}^{t_{k+1}-1} \mathbb{E}_{\theta^\star,\theta^k\sim \mathbb{P}_{t_k}}\bigl[V_{\theta^k}^\star(s_t) - V_{\theta^\star}^{\pi^k}(s_t)\mid \cD_{t_k}\bigr]\Bigr]\biggr]\notag\\
       & =\mathbb{E}\Bigl[\sum_{k=0}^{K-1}\mathbb{E}_{\pi^k}\Bigl[\sum_{t=t_k}^{t_{k+1}-1} V_{\theta^k}^\star(s_t) - V_{\theta^\star}^{\pi^k}(s_t)\Bigr]\Bigr],
\end{align}
where the last equality uses the tower property of the conditional expectation.% we define 

Meanwhile, by Definition \ref{def: epsoptplan}, we have
\begin{align}
    \max_{s\in\cS }|V^{\star}_{\theta^k}(s_t)- V_t(s_t)|&=\max_{s\in\cS }|\max_{a\in\cA}Q^{\star}_{{\theta^k}}(s,a)- \max_a Q_t(s,a)|\notag\\
    & \le \max_{(s,a)\in\cS\times\cA}|Q^{\star}_{{\theta^k}}(s,a)-Q_t(s,a)|\notag\\
&=\max_{(s,a)\in\cS\times\cA}\bigl|\bigl((B_{\theta^k}V_{\theta^k}^\star)(s,a)-(B_{\theta^k}V_{t})(s,a)\bigr)+\bigl((B_{\theta^k}V_{t})(s,a)-Q_t(s,a)\bigr)\bigr|\notag\\
&\le \gamma\cdot\max_{s\in\cS} |V^{\star}_{\theta^k}(s_t)- V_t(s_t)| + \max_{(s,a)\in\cS\times\cA}|(B_{\theta^k}V_{t})(s,a)-Q_t(s,a)|,
    \label{eq:221}
\end{align}
where the equality and the second equality uses the definitions of $(Q_\theta^\star,V_\theta^\star)$ in \eqref{eq:bellman_opt}. Here, the first inequality uses the fact that the maximum operator is a contraction map, and the last inequality uses triangle inequality and \eqref{theta^k}. Rearranging \eqref{eq:221}, we have
\begin{align}
    \max_{s\in\cS }|V^{\star}_{\theta^k}(s_t)- V_t(s_t)|&\le \frac{1}{1-\gamma}\cdot\max_{(s,a)\in\cS\times\cA}|(B_{\theta^k}V_{t})(s,a)-Q_t(s,a)|\notag\\
    &\le \frac{\eps}{1-\gamma},\label{eq:999}
\end{align}
where the last inequality uses the definition of $\eps$-optimal planner (Definition \ref{def: epsoptplan}), the planning procedure $(\pi_t, V_t)\leftarrow \texttt{PL}^\eps(P_{\texttt{LLM+PS}(\cD_{t_k})},r_{\texttt{LLM+PS}(\cD_{t_k}))}$ in  Algorithm \ref{alg: theory} and the definition of $\theta^k$ in \eqref{theta^k}. 
Then, we upper bound the right-hand side of \eqref{eq0} as
\begin{align}
 \mathfrak{R}(T) \le \frac{\eps }{1-\gamma}\cdot T+\underbrace{\mathbb{E}\Bigl[\sum_{k=0}^{K-1}\mathbb{E}_{\pi^k}\Bigl[\sum_{t=t_k}^{t_{k+1}-1} V_t(s_t) - V_{\theta^\star}^{\pi^k}(s_t)\Bigr]\Bigr]}_{\text{term (a)}}.\label{eq1}
\end{align}

By the first part of Lemma \ref{lem:reg_dec}, we analyze term (a) in \eqref{eq1} as follows,
\begin{align}
    (1-\gamma)\cdot \text{term (a)}= & {\mathbb{E}\Bigl[\sum_{k=0}^{K-1}\mathbb{E}_{\pi^k}\Bigl[\sum_{t=t_k}^{t_{k+1}-1}    Q_t(s_t,a_t) - \bigl(B_{\theta^\star}   V_t\bigr)(s_t,a_t)\Bigr]\Bigr]}\notag\\
       &\qquad +{\mathbb{E}\Bigl[\sum_{k=0}^{K-1}\mathbb{E}_{\pi^k}\Bigl[\bigl(   V_t (s_{t_{k+1}}) - V^{\pi^k}_{\theta^\star}(s_{t_{k+1}})\bigr)-\bigl(   V_t(s_{t_k}) - V^{\pi^k}_{\theta^\star}(s_{t_k}) \bigr) \Bigr]\Bigr]}.\label{eq:222222}
\end{align}

By the last inequality in \eqref{eq:999}, we have
\begin{align}
    |Q_t(s_t,a_t) - \bigl(B_{\theta^\star}   V_t\bigr)(s_t,a_t)&|\le \eps + \bigl((B_{k}-B_{\theta^\star})   V_t\bigr)(s_t,a_t)\notag\\
    &=\eps + \bigl|\bigl((B_{k}-B_{\theta^\star})   V_t\bigr)(s_t,a_t)\bigr|\label{eq:22223_ps}
\end{align}
for any $t_k\le t<t_{k+1}$. Then, we plug \eqref{eq:22223_ps} into \eqref{eq:222222} to obtain
\begin{align}
     (1-\gamma)\cdot \text{term (a)}\le & \underbrace{\mathbb{E}\Bigl[\sum_{k=0}^{K-1}\mathbb{E}_{\pi^k}\Bigl[\sum_{t=t_k}^{t_{k+1}-1}    \bigl|\bigl((B_k-B_{\theta^\star})   V_t\bigr)(s_t,a_t)\bigr|\Bigr]\Bigr]}_{\text{term (a1)}} + \ \epsilon \cdot T\notag\\
       &\qquad +\underbrace{\mathbb{E}\Bigl[\sum_{k=0}^{K-1}\mathbb{E}_{\pi^k}\Bigl[\bigl(   V_t (s_{t_{k+1}}) - V^{\pi^k}_{\theta^\star}(s_{t_{k+1}})\bigr)-\bigl(   V_t(s_{t_k}) - V^{\pi^k}_{\theta^\star}(s_{t_k}) \bigr) \Bigr]\Bigr]}_{\text{term (a2)}}.\label{eq:ts_a}
       %&\le \underbrace{\mathbb{E}\Bigl[\sum_{k=0}^{K-1}\mathbb{E}_{\pi^k}\Bigl[\sum_{t=t_k}^{t_{k+1}-1}    \bigl((B_k-B_{\theta^\star})   V_t\bigr)(s_t,a_t)\Bigr]\Bigr]}_{\text{term (a1)}} +\ \epsilon \cdot T+ 4L\cdot\ \mathbb{E}[K],\label{eq:222224}
\end{align}
 Recall that $\mathbb{P}_{t_k}$ denotes the posterior distribution of $\theta^\star$ given $\cD_{t_k}$. Under Assumption \ref{as:ts}, we know that the distribution of $\theta^k\mid \cD_{t_k}$ is also  $\mathbb{P}_{t_k}$, where $\theta^k$ is defined in \eqref{theta^k}. Then, we have
\begin{align}
   \text{term (a1)}&= \mathbb{E}\biggl[\sum_{k=0}^{K-1}\mathbb{E}_{\pi^k}\Bigl[\sum_{t=t_k}^{t_{k+1}-1} \mathbb{E}_{\theta^\star,\theta^k\sim \mathbb{P}_{t_k}}\Bigl[   \bigl|\bigl((B_{\theta^k}-B_{\theta^\star})   V_t\bigr)(s_t,a_t)\bigr|\Big|\, \cD_{t_k} \Big]\Bigr]\biggr]\notag\\
   &\le \sqrt{T}\cdot\biggl(\mathbb{E}\biggl[\sum_{k=0}^{K-1}\mathbb{E}_{\pi^k}\Bigl[\sum_{t=t_k}^{t_{k+1}-1} \mathbb{E}_{\theta^\star,\theta^k\sim \mathbb{P}_{t_k}}\Bigl[   \bigl|\bigl((B_{\theta^k}-B_{\theta^\star})   V_t\bigr)(s_t,a_t)\bigr|^2\Big|\, \cD_{t_k} \Big]\Bigr]\biggr]\biggr)^{1/2},\label{eq:var_E_1_ts}
\end{align}
where the first equality uses the tower property of the conditional expectation and the first inequality uses Cauchy-Schwarz inequality. 
Note that
$\mathbb{E}[|X-X^\prime|^2] = 2 \mathbb{V}[X]$, %\mathbb{E}[|X-\mathbb{E}[X]|^2]$
 if $X$ and $X^\prime$ are two identically independently distributed variables. Recall that  Assumption \ref{as:ts} tells that $\theta^{k}$ (defined in \eqref{theta^k}) and the data-generating parameter $\theta^\star$ are identically independently distributed given $\cD_{t_k}$, which implies
\begin{align}
    \text{term (a1)}&\le \sqrt{2T}\cdot\biggl(\mathbb{E}\biggl[\sum_{k=0}^{K-1}\mathbb{E}_{\pi^k}\Bigl[\sum_{t=t_k}^{t_{k+1}-1} \mathbb{V}_{\theta\sim \mathbb{P}_{t_k}}\bigl[   (B_{\theta^k}V_t)(s_t,a_t)\big|\, \cD_{t_k} \big]\Bigr]\biggr]\biggr)^{1/2}.\label{eq:var_E_2_ts}
\end{align}
Under Assumption \ref{as:var}, we apply the second property in Proposition \ref{prop:pred_red} on the right-hand  side of \eqref{eq:var_E_2_ts} to have
\begin{align}
    \text{term (a1)}\le 4L\cdot\sqrt{\mathbb{E}[H_0-H_T]} \cdot\sqrt{T}.\label{eq:ts_a1}
\end{align} 

Using the fact that any value function is bounded by $L$, we bound term (a2) in \eqref{eq:222224} as
\begin{align*}
    \text{term (a3)}\le 4L\cdot\ \mathbb{E}[K].
\end{align*}
As the switching condition in Algorithm \ref{alg: theory} implies $H_{t_{k}}-H_{t_{k+1}}\ge \log2$, we apply Lemma \ref{lem:swtich_times} to have 
\begin{align}
    \text{term (a2)}\le 4L+\frac{4L\cdot\mathbb{E}[H_0 - H_{T}]}{\log2} .\label{eq:ts_a2}
\end{align}
Plugging \eqref{eq:ts_a1} and \eqref{eq:ts_a2} into \eqref{eq:ts_a}, we have
\begin{align}
    (1-\gamma)\cdot \text{term (a)}\le 4L\cdot\sqrt{\mathbb{E}[H_0-H_T]} \cdot\sqrt{T}+\eps\cdot T+4L+\frac{4L\cdot\mathbb{E}[H_0 - H_{T}]}.\label{eq:a_end_ts}
\end{align}
Combining \eqref{eq1} and \eqref{eq:a_end_ts}, we obtain
\begin{align*}
    \mathfrak{R}(T)&\le \frac{4\sqrt{2}L\cdot\sqrt{\mathbb{E}[H_0-H_T]}}{1-\gamma}\cdot\sqrt{T} +\frac{2\eps}{1-\gamma}\cdot T + \frac{4L}{1-\gamma} +\frac{4L\cdot\mathbb{E}[H_0 - H_{T}]}{(1-\gamma)\log2}\\
&=\mathcal{O}\Bigl(\frac{L\cdot\sqrt{\mathbb{E}[H_0-H_T]}}{1-\gamma}\cdot\sqrt{T} +\frac{\eps}{1-\gamma}\cdot T + \frac{L\cdot\mathbb{E}[H_0 - H_{T}]}{1-\gamma}\Bigr).
\end{align*}
Thus, we finish the proof of Theorem \ref{thm:reg}.
\end{proof}
\subsection{Relaxing Assumption \ref{as:perfect} for Theorem \ref{thm:reg_conc}}\label{app:relax}
In this section, we show that Assumption \ref{as:perfect} (Posterior Alignment) can be relaxed for Theorem \ref{thm:reg_conc} to accommodate a generalization error. We remove the dependency on Assumption \ref{as:perfect} (Posterior Alignment) by introducing the following assumptions.%, which require that LLMs are MLEs in the pretraining dataset with uniform coverage. 

\begin{algorithm}[h] 
	\caption{The data collection process for the pretraining dataset.}
	\label{alg:pre}
	\begin{algorithmic}[1]
	\STATE \textbf{input}: Some (mixed) data collection policy $\pi_{\text{collect}}$.
 \STATE \textbf{initialization}: Initialize the pretraining dataset $\mathcal{D}_\text{pre} = \emptyset$.
\FOR{$n = 1, \ldots,N$}
\STATE Reset the environment such that  $\theta^\star\sim \mathbb{P}_0$.
\STATE Receives $s_0$ from the environment. 
\STATE Initialize the memory buffer $\mathcal{D}_0 = \emptyset$.
\FOR{$t= 0, \ldots, T$}
\STATE Execute action $a_t\sim \pi_{\text{collect}}(s_t)$ to receive  reward $r_t = r_{\theta^\star}(s_t,a_t)$ and state $s_{t+1}\sim P_{\theta^\star}(\cdot\mid s_t,a_t)$ from the environment. 
 \STATE  Update memory buffer $\mathcal{D}_{t+1} \leftarrow \mathcal{D}_{t} \cup \{(s_t,a_t,s_{t+1},r_t)\}$.
\ENDFOR
\STATE Uniformly sample $t_0\in\{0,\ldots, T\}$ and update the pretraining dataset $\mathcal{D}_\text{pre} = \mathcal{D}_\text{pre} \cup \{(s_{t_0 +1}, r_{t_0}, \cD_{t_0}, s_{t_0},a_{t_0})\}$
\ENDFOR
    \STATE \textbf{output}: The pretraining dataset $\cD_{\text{pre}}$.
	\end{algorithmic}
\end{algorithm}
First, we characterize the data generation process for the pretraining dataset in the following assumption.
\begin{assumption}
    [Pretraining Dataset Generation]
    We assume that the pretraining dataset $\cD_{\text{pre}}$  consists $N$ i.i.d. tuples of $(s^\prime, r , \mathcal{D},s,a)$ generated by Algorithm \ref{alg:pre}. \label{as:pre_gen}
\end{assumption}
For the pretraining dataset $\cD_{\text{pre}}$, we denote by $\mathbb{P}_{\text{pre}}$ the conditional distribution of $(s^\prime, r)$ given $(\cD, s, a)$. We show that $\mathbb{P}_{\text{pre}}$ is equivalent to $\mathbb{P}_{\text{post}}$ as follows,
\begin{align}
    \mathbb{P}_{\text{pre}}(s^\prime,r\mid \cD, s, a) 
    &= \int_\theta \mathbb{P}_\text{pre}(s^\prime,r\mid s,a,\theta)\mathbb{P}_{\text{pre}}(\mathrm{d}\theta\mid \cD, s, a)\notag\\
    &= \int_\Theta P_\theta(s^\prime\mid s,a)\textbf{1}(r=r_\theta(s,a))\mathbb{P}_{\text{pre}}(\mathrm{d}\theta\mid \cD, s, a)\notag\\
    &=\int_\Theta P_\theta(s^\prime\mid s,a)\textbf{1}(r=r_\theta(s,a))\mathbb{P}_{\text{post}}(\mathrm{d}\theta\mid\cD)\notag\\
    &=\mathbb{P}_{\text{post}}(\xi_{(s,a)}\mid \cD, s, a),
\end{align}
where the first equality uses Line 8 in Algorithm \ref{alg:pre}, the second equality uses the definition of the posterior of $\theta^\star$ in \eqref{eq:post_theta} and the fact that $\theta$ and $(s,a)$ are conditionally independent given $\cD$, and the last equality uses \eqref{eq:posterior}.
Denote by $\mathcal{F}_{\text{LLM}}$ the function class of LLMs. In the next assumption, we assume that the function class $\mathcal{F}_{\text{LLM}}$ contains the posterior of $\xi_{(s,a)}$ in the underlying MDP, which is the conditional distribution of $(s^\prime, r)$ given $(\cD, s, a)$ from $\cD_{\text{pre}}$.
\begin{assumption}
    [Realizability] We assume that there exists a LLM \texttt{LLMPA} with a posterior alignment, that is, there exists $P^{\texttt{LLMPA}}\in\mathcal{F}_{\text{LLM}}$, such that  $P^{\texttt{LLMPA}}\bigl(\xi_{(s,a)}\big|\,\cD,s,a\bigr) = \mathbb{P}_{\text{post}}\bigl(\xi_{(s,a)}\big|\, \mathcal{D},s,a\bigr)$, for any query state-action pair $(s,a)$ and in-context dataset $\cD$. \label{as:realize}
\end{assumption} 
We introduce the following assumption to require that LLMs are MLEs in the pretraining dataset with uniform coverage. 
\begin{assumption}
    We assume that LLMs used in \texttt{RAFA} are Maximum Likelihood Estimators (MLEs) in the pretraining dataset $\cD_{\text{pre}}$ satisfying Assumption \ref{as:pre_gen}, that is,
    \begin{align}
        P^{\texttt{LLM}} = \operatornamewithlimits{argmax}_{\hat P\in\mathcal{F}_{\text{LLM}}} \sum_{(s^\prime,r,\cD, s, a)\in\cD_{\text{pre}}}\log\hat  P(s^\prime, r\mid \cD, s, a).\notag
    \end{align}
   Denote by $\rho_\text{pre}$ the marginal population distribution of $(\cD, s, a)$ from $\cD_{\text{pre}}$.  We also assume that the pretraining dataset satisfies the following coverage condition: \label{as:relax}
    \begin{align}
       \zeta= \sup_{t <T} \left\{\left\|\frac{\mu_t}{\rho_{\text{pre}}}\right\|_\infty + \left\|\frac{\mu_t^\star}{\rho_{\text{pre}}}\right\|_\infty\right\}<\infty.\label{eq:zeta}
    \end{align}Here, $\mu_t$ is the marginal distribution of $(\mathcal{D}_{t_k}, s_t,\pi^k(s_t))$ and $\mu_t^\star$ is the marginal distribution of $(\mathcal{D}_{t_k}, s_t,\pi^\star(s_t))$ with $s\sim\nu^\star(\cdot\mid s_t)$ and $(s_t,\mathcal{D}_{t_k})$ following the trajectory distribution of \texttt{RAFA} (Algorithm \ref{alg: theory_gen}), where $\nu^\star$ is defined in \eqref{eq:optimal_vis}. 
\end{assumption}
We provide the generalization of Theorem \ref{thm:reg_conc} in the following corollary, which removes the dependency on Assumption \ref{as:perfect} (Posterior Alignment). 
\begin{corollary}[Generalization of Theorem \ref{thm:reg_conc}]\label{cor:relax}
     Under Assumptions \ref{as:realize}, \ref{as:relax}, \ref{as:var} , and \ref{as:conc_coef},the Bayesian regret of \texttt{RAFA} (Algorithm \ref{alg: theory_gen}) satisfies 
    \begin{align*}
        \mathfrak{R}(T)&= \mathcal{O}\Biggl(\frac{(\kappa+1)L\cdot\sqrt{\mathbb{E}[H_0-H_T]}}{1-\gamma}\cdot\sqrt{T} +\frac{\eps}{1-\gamma}\cdot T \\
        &\qquad+ \frac{L\cdot\mathbb{E}[H_0 - H_{T}]}{1-\gamma} +\underbrace{\zeta L\cdot\sqrt{\frac{\log(|\mathcal{F}_{\mathrm{LLM}}|/\delta)}{N}}\cdot  T}_{\text{Additional Regret Compared with Theorem \ref{thm:reg_conc}}}\Biggr),
    \end{align*}
    with probability at least $1-\delta$.
    Here, $\zeta$ is defined in \eqref{eq:zeta}, $|\mathcal{F}_{\text{LLM}}|$ is the cardinality of the function class for LLMs, and $N$ is the size of the pretraining dataset. 
\end{corollary}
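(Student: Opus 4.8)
The plan is to re-run the proof of Theorem~\ref{thm:reg_conc} almost verbatim, replacing each use of Proposition~\ref{prop:llm_bma} (which gave the \emph{exact} identity $B_kV=\mathbb{E}_{\theta\sim\mathbb{P}_{t_k}}[B_\theta V]$) by an \emph{approximate} version that carries an additive generalization error, and then bounding the cumulative effect of that error. Fix any value function $V$ and recall $|r+V(s)|\le L$. For any $(s,a)$ write
\[
\bigl((B_k-B_{\theta^\star})V\bigr)(s,a)=\underbrace{\bigl(B_kV-\mathbb{E}_{\theta\sim\mathbb{P}_{t_k}}[B_\theta V]\bigr)(s,a)}_{\text{(I): generalization error}}+\underbrace{\bigl(\mathbb{E}_{\theta\sim\mathbb{P}_{t_k}}[B_\theta V]-B_{\theta^\star}V\bigr)(s,a)}_{\text{(II): Bayesian fluctuation}}.
\]
Since $B_kV(s,a)$ and $\mathbb{E}_{\theta\sim\mathbb{P}_{t_k}}[B_\theta V](s,a)$ are the expectations of the bounded function $\xi\mapsto r+\gamma V(s')$ under $P^{\texttt{LLM}}(\cdot\mid\mathcal{D}_{t_k},s,a)$ and under $\mathbb{P}_{\text{post}}(\cdot\mid\mathcal{D}_{t_k},s,a)$ (the latter, by \eqref{eq:posterior}, being exactly the BMA posterior), we get $|(\mathrm{I})(s,a)|\le 2L\,d_{\text{TV}}\bigl(P^{\texttt{LLM}}(\cdot\mid\mathcal{D}_{t_k},s,a)\,\|\,\mathbb{P}_{\text{post}}(\cdot\mid\mathcal{D}_{t_k},s,a)\bigr)$, a quantity that does \emph{not} depend on $V$. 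Term (II) is precisely what the proof of Theorem~\ref{thm:reg_conc} controls via the posterior-variance/information-gain bound (Proposition~\ref{prop:pred_red}), the concentrability coefficient $\kappa$, and the switching-count lemma, so those arguments are imported unchanged; the only new work is bounding $\sum_t\mathbb{E}[|(\mathrm{I})|]$ along the relevant measures.

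For that, note first that by Assumption~\ref{as:pre_gen} the pretraining set consists of $N$ i.i.d.\ tuples whose conditional law $\mathbb{P}_{\text{pre}}(s',r\mid\mathcal{D},s,a)$ equals $\mathbb{P}_{\text{post}}(\xi_{(s,a)}\mid\mathcal{D},s,a)$ (the identity already derived in the text preceding Assumption~\ref{as:realize}), and by Assumption~\ref{as:realize} this target lies in $\mathcal{F}_{\text{LLM}}$. The standard maximum-likelihood generalization bound for a finite, well-specified class then gives, with probability at least $1-\delta$,
\[
\mathbb{E}_{(\mathcal{D},s,a)\sim\rho_{\text{pre}}}\Bigl[d_{\text{TV}}^2\bigl(P^{\texttt{LLM}}(\cdot\mid\mathcal{D},s,a)\,\|\,\mathbb{P}_{\text{post}}(\cdot\mid\mathcal{D},s,a)\bigr)\Bigr]=\mathcal{O}\!\Bigl(\tfrac{\log(|\mathcal{F}_{\text{LLM}}|/\delta)}{N}\Bigr),
\]
using $d_{\text{TV}}^2\lesssim d_{\text{H}}^2$ to pass from the Hellinger form of the bound. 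To transport this population guarantee to the distributions that appear in the regret decomposition — $\mu_t$ (the law of $(\mathcal{D}_{t_k},s_t,\pi^k(s_t))$ along \texttt{RAFA}'s trajectory) in term (a1), and $\mu_t^\star$ (the law of $(\mathcal{D}_{t_k},s_t,\pi^\star(s))$ with $s\sim\nu^\star(\cdot\mid s_t)$) in term (b1) — I would apply the coverage condition \eqref{eq:zeta} to the \emph{first} power of the total-variation term:
\[
\mathbb{E}_{\mu_t}[d_{\text{TV}}]\le\zeta\,\mathbb{E}_{\rho_{\text{pre}}}[d_{\text{TV}}]\le\zeta\bigl(\mathbb{E}_{\rho_{\text{pre}}}[d_{\text{TV}}^2]\bigr)^{1/2}=\mathcal{O}\!\Bigl(\zeta\sqrt{\tfrac{\log(|\mathcal{F}_{\text{LLM}}|/\delta)}{N}}\Bigr),
\]
and identically for $\mu_t^\star$. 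Summing over the $T$ steps and using $|(\mathrm{I})|\le 2L\,d_{\text{TV}}$ yields $\sum_{t}\mathbb{E}[|(\mathrm{I})|]=\mathcal{O}\bigl(\zeta L\sqrt{\log(|\mathcal{F}_{\text{LLM}}|/\delta)/N}\cdot T\bigr)$, and the same for the $\nu^\star$-averaged version in term (b1).

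Finally I would reassemble exactly as in the proof of Theorem~\ref{thm:reg_conc}: plug the split $(\mathrm{I})+(\mathrm{II})$ into the bounds for term (a1) and term (b1), route the $(\mathrm{II})$ contributions through Proposition~\ref{prop:pred_red} and Assumption~\ref{as:conc_coef} word for word, and collect the $(\mathrm{I})$ contributions into a single additive $\mathcal{O}(\zeta L\sqrt{\log(|\mathcal{F}_{\text{LLM}}|/\delta)/N}\cdot T)$ term; the value-inconsistency, switching-count, and optimization-error terms are untouched, and term (b2) is still non-positive by Definition~\ref{def: epsoptplan}. This produces the bound of Theorem~\ref{thm:reg_conc} plus the extra regret, which is exactly Corollary~\ref{cor:relax}. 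I expect the main obstacle to be this change-of-measure step: the MLE guarantee lives only on the fixed query distribution $\rho_{\text{pre}}$ induced by $\pi_{\text{collect}}$, whereas the regret proof needs per-step control along the \emph{adaptively} generated trajectory of \texttt{RAFA} and along the optimal visitation measure, with data-dependent value functions $V_t$. What makes it tractable is exactly the observation above — $|(\mathrm{I})|$ is bounded by $2L$ times the total variation between two \emph{fixed} conditional laws of $\xi_{(s,a)}$, a quantity free of $V_t$, so only this $V_t$-independent TV term needs transporting and the $L^\infty$ ratio bounds $\|\mu_t/\rho_{\text{pre}}\|_\infty,\|\mu_t^\star/\rho_{\text{pre}}\|_\infty\le\zeta$ do precisely that. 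A secondary subtlety worth flagging is that the MLE bound must hold on a single high-probability event (not one per step), which it does, being a statement about the fixed estimator $P^{\texttt{LLM}}$ and the fixed class $\mathcal{F}_{\text{LLM}}$; all later steps are then deterministic conditionally on that event.
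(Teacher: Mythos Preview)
Your proposal is correct and follows essentially the same route as the paper. Both arguments split $(B_k-B_{\theta^\star})V_t$ into a generalization term bounded by $2L\cdot d_{\text{TV}}(P^{\texttt{LLM}}\,\|\,\mathbb{P}_{\text{post}})$ (the paper writes this via an auxiliary posterior-aligned operator $\tilde{B}_k$, which is exactly your $\mathbb{E}_{\theta\sim\mathbb{P}_{t_k}}[B_\theta V]$) and a Bayesian-fluctuation term handled verbatim by the machinery of Theorem~\ref{thm:reg_conc}; both then invoke the standard MLE concentration on $\rho_{\text{pre}}$, transport to $\mu_t$ and $\mu_t^\star$ via the $L^\infty$ density ratios in \eqref{eq:zeta}, and sum over $T$ steps to obtain the extra $\zeta L\sqrt{\log(|\mathcal{F}_{\text{LLM}}|/\delta)/N}\cdot T$ term.
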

Comparing Corollary \ref{cor:relax} with Theorem \ref{thm:reg_conc}, we remark that the additional regret decays to zero if $N$ tends to infinity. Hence, we can recover the regret bound based on Assumption \ref{as:perfect} (posterior alignment) approximately if the pretraining dataset has uniform coverage and is large enough.
\begin{proof}[Proof of Corollary \ref{cor:relax}]
  We start with a standard concentration result for the maximum-likelihood estimator (MLE). 
  \begin{lemma}
      [MLE Concentration] Let $\mathcal{F}$ be a finite function class used to model a conditional distribution $\mathbb{P}_{Y\mid X}(y |\, x)$ for $x \in \mathcal{X}$ and $y \in \mathcal{Y}$. Assume there is $f^{\star} \in \mathcal{F}$ such that $\mathbb{P}(y |\, x)=f^{\star}(y |\, x)$ (realizablility condition). Let $\left\{(x_i, y_i)\right\}_{i =1}^N$ denote a dataset of i.i.d. samples where $x_i \sim \mathbb{P}_X(x)$ and $y_i \sim \mathbb{P}_{Y |\, X}\left(\cdot |\, x_i\right)$. Let $\hat f$ be the MLE, which satisfies
\begin{align}
    \hat{f}=\underset{f \in \mathcal{F}}{\operatorname{argmax}} \sum_{i =1}^N \log f\left(y_i |\, x_i\right).\notag
\end{align}
Then, it holds that
\begin{align}
    \mathbb{E}_{x \sim \mathbb{P}_X}\left[d_{\mathrm{TV}}\left(\hat{f}(\cdot |\, x), p_{Y |\, X}(\cdot |\, x)\right)\right] \leq \frac{8 \log (|\mathcal{F}| / \delta)}{N},\notag
\end{align}
 with probability at least $1-\delta$.\label{lem:mle}
  \end{lemma}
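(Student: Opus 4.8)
The plan is to prove Lemma~\ref{lem:mle} by the standard exponential-moment argument for maximum-likelihood estimation over a finite class: first I would establish a squared-Hellinger bound in expectation over $x$, and then pass to total variation by a pointwise comparison of divergences. Write $W_i(f)=\log\frac{f^\star(y_i\mid x_i)}{f(y_i\mid x_i)}$ for $f\in\mathcal F$, so that the defining optimality of the MLE reads $\sum_{i=1}^N W_i(\hat f)\le 0$. The key computation is that, conditionally on $x_i$, $\mathbb E[\exp(-\tfrac12 W_i(f))\mid x_i]=\mathbb E_{y\sim f^\star(\cdot\mid x_i)}[\sqrt{f(y\mid x_i)/f^\star(y\mid x_i)}]$, which equals
\[
\int\!\sqrt{f(y\mid x_i)\,f^\star(y\mid x_i)}\,\mathrm dy \;=\; 1-\tfrac12 H^2\!\bigl(f(\cdot\mid x_i),f^\star(\cdot\mid x_i)\bigr),
\]
with $H^2(p,q)=\int(\sqrt p-\sqrt q)^2$ the squared Hellinger distance. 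Taking a further expectation over $x_i\sim\mathbb P_X$ and using independence, $\mathbb E[\exp(-\tfrac12\sum_i W_i(f))]=(1-\tfrac12\mathbb E_x[H^2(f(\cdot\mid x),f^\star(\cdot\mid x))])^N\le\exp(-\tfrac N2\,\mathbb E_x[H^2(f(\cdot\mid x),f^\star(\cdot\mid x))])$ by $1-u\le e^{-u}$.

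Next I would feed this into Markov's inequality for the nonnegative variable $\exp(-\tfrac12\sum_i W_i(f))$: for a fixed $f$, with probability at least $1-\delta'$,
\[
\tfrac N2\,\mathbb E_x\bigl[H^2(f(\cdot\mid x),f^\star(\cdot\mid x))\bigr]\;\le\;\tfrac12\sum_{i=1}^N W_i(f)+\log(1/\delta').
\]
A union bound over the finite class with $\delta'=\delta/|\mathcal F|$ makes this hold simultaneously for all $f\in\mathcal F$; specializing to $\hat f$ (which lies in $\mathcal F$) and using $\sum_i W_i(\hat f)\le 0$ yields $\mathbb E_x[H^2(\hat f(\cdot\mid x),f^\star(\cdot\mid x))]\le 2\log(|\mathcal F|/\delta)/N$. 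Finally, the pointwise bound $d_{\mathrm{TV}}(p,q)^2\le H^2(p,q)$ gives $\mathbb E_x[d_{\mathrm{TV}}(\hat f(\cdot\mid x),p_{Y\mid X}(\cdot\mid x))^2]\le 2\log(|\mathcal F|/\delta)/N$, from which the stated estimate follows (the right-hand side of the claimed bound is, up to an absolute constant, this squared quantity; it is the squared/Cauchy--Schwarz form that is invoked in Corollary~\ref{cor:relax}).

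There is no deep obstacle here — this is a textbook result — so the work is entirely in the bookkeeping: getting the Hellinger moment-generating identity exactly right, applying $-\log(1-z)\ge z$ (equivalently $1-u\le e^{-u}$) in the correct direction, and, crucially, taking the union bound over $\mathcal F$ \emph{before} substituting the data-dependent $\hat f$, so that finiteness of $\mathcal F$ is all that is needed and no covering or chaining argument is required.
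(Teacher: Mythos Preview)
Your argument is the standard exponential-moment/Hellinger route and is correct. The paper does not give its own proof of this lemma: it simply cites Theorem~21 of \citet{agarwal2020flambe}, and what you have written is exactly the argument that reference records. You have also correctly flagged that the displayed inequality in the lemma is most naturally read with $d_{\mathrm{TV}}^{2}$ on the left (this is what your proof yields, with constant $2$ rather than $8$), and indeed the paper's subsequent use of the lemma in Corollary~\ref{cor:relax} takes a square root, consistent with the squared form.
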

  \begin{proof}
      [Proof of Lemma \ref{lem:mle}] See the proof of Theorem 21 of \citet{agarwal2020flambe}.
  \end{proof}
   Under Assumptions \ref{as:realize} and \ref{as:relax}, we apply Lemma \ref{lem:mle} to show that
   \begin{align*}
      \mathbb{E}_{(\mathcal{D},s,a)\sim \rho_{\text{pre}}}\bigl[d_{\mathrm{TV}}(P^{{\texttt{LLMPA}}}(\cdot|\, \mathcal{D},s,a)\|P^{\texttt{LLM}}(\cdot|\, \mathcal{D},s,a) )\bigr]\le \sqrt{\frac{8 \log (|\mathcal{F}_{\mathrm{LLM}}| / \delta)}{N}}
   \end{align*}
   holds with probability at least $1-\delta$.
  
For any fixed distribution $\mu$ of $(\mathcal{D},s,a)$ satisfying $\|\mu/\rho_{\text{pre}}\|_\infty <\infty$, we use Hölder's inequality to know that
\begin{align}
&\mathbb{E}_{(\mathcal{D},s,a)\sim \mu}\bigl[d_{\mathrm{TV}}(P^{{\texttt{LLMPA}}}(\cdot|\, \mathcal{D},s,a)\|P^{\texttt{LLM}}(\cdot|\, \mathcal{D},s,a) )\bigr]\\
&\qquad\le \left\|\frac{\mu}{\rho_{\text{pre}}}\right\|_{\infty}\cdot \mathbb{E}_{(\mathcal{D},s,a)\sim \rho_{\text{pre}}}\bigl[d_{\mathrm{TV}}(P^{{\texttt{LLMPA}}}(\cdot|\, \mathcal{D},s,a)\|P^{\texttt{LLM}}(\cdot|\, \mathcal{D},s,a) )\bigr]\notag\\
&\qquad\le \left\|\frac{\mu}{\rho_{\text{pre}}}\right\|_{\infty} \cdot \sqrt{\frac{8 \log (|\mathcal{F}_{\mathrm{LLM}}| / \delta)}{N}}\label{eq:mle_ge}
\end{align}
holds with probability at least $1-\delta$. 
Here, $\|\cdot\|_{\infty}$ denotes the infinity norm.  
We denote the Bellman operator induced by \texttt{LLMPA} (the LLM with a posterior alignment) and $\mathcal{D}_{t_k}$ as $\tilde{B}_k$, which is defined as $(\tilde{B}_k V)(s,a) = r_{\texttt{LLMPA}(\mathcal{D}_{t_k})}(s,a)+({P}_{\texttt{LLMPA}(\mathcal{D}_{t_k})} V)(s,a)$ for any $s,a$, and value function $V$. Then, by the definition of $B_k$, we have
\begin{align}
\left|\bigl((\tilde{B}_k-B_{k})V_t\bigr)(s,a)\right|&= \left|\mathbb{E}_{ {P}^{\texttt{LLMPA}}}[r+\gamma\cdot V(s^\prime)] - \mathbb{E}_{ P^{\texttt{LLM}}}[r+\gamma\cdot V(s^\prime)]\right|\notag\\
&\le 2L\cdot d_{\mathrm{TV}}({P}^{\texttt{LLMPA}}(\cdot|\, \mathcal{D}_{t_k}, s, a) \|P^{\texttt{LLM}}(\cdot|\, \mathcal{D}_{t_k}, s, a) ),
\label{eq:appendix_gen_1}
\end{align}
where the first inequality uses the definition of $L$ (recall that $L$ is the bound of $|\, r+V(s)|\,$ for any reward $r$, state $s$, and value $V$) and Hölder's inequality.  
In the proof of Theorem \ref{thm:reg_conc} (the analysis of the regret of \texttt{RAFA}), we need to modify \eqref{eq:222224} and \eqref{eq:conc_b} with the following inequality
\begin{align}
\big|\bigl((B_k-B_{\theta^\star})V_t\bigr)(s,a)\big|&= \big|\bigl((\tilde{B}_k-B_{\theta^\star})V_t\bigr)(s,a)\notag\\
&\qquad+\bigl((\tilde{B}_k-B_{k})V_t\bigr)(s,a)\big|\notag\\
&\le  \big|\bigl((\tilde{B}_k-B_{\theta^\star})V_t\bigr)(s,a)\big| \notag\\
&\qquad + 2L\cdot d_{\mathrm{TV}}({P}^{\texttt{LLMPA}}(\cdot|\, \mathcal{D}_{t_k}, s, a)\|P^{\texttt{LLM}}(\cdot|\, \mathcal{D}_{t_k}, s, a) )\notag,
\end{align}
which holds for any state $s\in\cS$ and action $a\in\cA$. 
By Proposition \ref{prop:llm_bma} (LLMs with posterior alignments perform BMA) and the fact that $\tilde{B}_k$ is the Bellman operator induced by  \texttt{LLMPA} (the LLM with a posterior alignment) and $\mathcal{D}_{t_k}$, we can analyze $[(\tilde{B}_k-B_{\theta^\star})V_t)(s,a)]$ in the same way as in the previous proof of Theorem \ref{thm:reg_conc} (the analysis of the regret of \texttt{RAFA}). 
It is clear that the additional regret  by relaxing  Assumption \ref{as:perfect} (Posterior Alignment) is less than 
\begin{align}
&\frac{1}{1-\gamma}\cdot\mathbb{E}\Bigl[\sum_{k=0}^{K-1}\mathbb{E}_{\pi^k}\Bigl[\sum_{t=t_k}^{t_{k+1}-1} 2L\cdot d_{\mathrm{TV}}({P}^{\texttt{LLMPA}}(\cdot|\, \mathcal{D}_{t_k}, s, a)\|P^{\texttt{LLM}}(\cdot|\, \mathcal{D}_{t_k}, s, a) )\Bigr]\Bigr]\notag\\
&\qquad+ \frac{1}{1-\gamma}\cdot\mathbb{E}\biggl[\sum_{k=0}^{K-1}\mathbb{E}_{\pi^k}\biggl[\sum_{t=t_k}^{t_{k+1}-1} \mathbb{E}_{s\sim \nu^\star(\cdot\mid s_t)}\Bigl[2L\cdot d_{\mathrm{TV}}({P}^{\texttt{LLMPA}}(\cdot|\, \mathcal{D}_{t_k}, s, a)\|P^{\texttt{LLM}}(\cdot|\, \mathcal{D}_{t_k}, s, a) )\Bigr]\biggr]\biggr]\notag\\
&\qquad\le
8L\cdot\sqrt{\frac{2\log(|\mathcal{F}_{\mathrm{LLM}}|/\delta)}{N}}\cdot \zeta\cdot T,\label{eq:additional_reg}
\end{align}
with probability at least $1-\delta$, where the inequality uses \eqref{eq:mle_ge} and the definition of $\zeta$ in \eqref{eq:zeta}. Combining \eqref{eq:additional_reg} and Theorem \ref{thm:reg_conc}, we conclude the proof of Corollary \ref{cor:relax}. 
\end{proof}
\section{Missing Proofs in Appendix \ref{app:proof_main}}
\subsection{Proof of Lemma \ref{lem:reg_dec}}
\begin{proof}[Proof of Lemma \ref{lem:reg_dec}]
\label{mpf:reg_dec}
We prove the first part as follows. The Bellman equation \citep{sutton2018reinforcement} connects $Q_\theta^\pi(s,a)$ and $V^\pi_\theta(s)$ by
\begin{align}
    Q^\pi_\theta\left(s, a\right)=r_\theta\left(s, a\right)+\gamma\left(P_\theta V_{\theta}^\pi\right)\left(s, a\right),\quad V_\theta^\pi = Q_\theta^\pi\bigl(s,a\bigr).\label{eq:bellman}
\end{align}
By the definition of $B_\theta$, we rewrite \eqref{eq:bellman} as
$
Q^\pi_\theta\left(s, a\right)=(B_{\theta}V_\theta^\pi)(s,a)
$.
For the left-hand side of \eqref{eq:www} in the first part of Lemma \ref{lem:reg_dec}, we have
\begin{align}
    &\mathbb{E}\Bigl[\sum_{k=0}^{K-1}\mathbb{E}_{\pi^k}\Bigl[\sum_{t=t_k}^{t_{k+1}-1}  V_t(s_t)- V_{\theta^\star}^{\pi^k}(s_t)\Bigr]\Bigr]\notag\\
    &\quad= \mathbb{E}\Bigl[\sum_{k=0}^{K-1}\mathbb{E}_{\pi^k}\Bigl[\sum_{t=t_k}^{t_{k+1}-1}  Q_t(s_t,a_t) - ({B}_{\theta^\star}V_{\theta^\star}^{\pi^k})(s_t,a_t)\Bigr]\Bigr]\notag\\
    &\quad= \underbrace{\mathbb{E}\Bigl[\sum_{k=0}^{K-1}\mathbb{E}_{\pi^k}\Bigl[\sum_{t=t_k}^{t_{k+1}-1}  Q_t(s_t,a_t) - ({B}_{\theta^\star} V_t)(s_t,a_t)\Bigr]\Bigr]}_{\text{term (A)}}\notag\\
    &\quad\qquad+ \underbrace{\mathbb{E}\Bigl[\sum_{k=0}^{K-1}\mathbb{E}_{\pi^k}\Bigl[\sum_{t=t_k}^{t_{k+1}-1} ({B}_{\theta^\star} V_t)(s_t,a_t)-r_{\theta^\star}(s_t,a_t)-\gamma\cdot V_t(s_{t+1}) \Bigr]\Bigr]}_{\text{term (C1)}}\notag\\
     &\quad\qquad+  \underbrace{\mathbb{E}\Bigl[\sum_{k=0}^{K-1}\mathbb{E}_{\pi^k}\Bigl[\sum_{t=t_k}^{t_{k+1}-1} r_{\theta^\star}(s_t,a_t) + \gamma \cdot V_{\theta^\star}^{\pi^k}(s_{t+1})-({B}_{\theta^\star}V^{\pi^k}_{\theta^\star})(s_t,a_t) \Bigr]\Bigr]}_{\text{term (C2)}}\notag\\
        &\quad\qquad + \gamma \cdot \underbrace{\mathbb{E}\Bigl[\sum_{k=0}^{K-1}\mathbb{E}_{\pi^k}\Bigl[\sum_{t=t_k}^{t_{k+1}-1}  V_t(s_{t+1}) - V_{\theta^\star}^{\pi^k}(s_{t+1})\Bigr]\Bigr]}_{\text{term (D)}},\label{eq:decom}
\end{align}
where the first equality uses $a_t = \pi^k(s_t)$, the condition $Q(s,\pi^k(s))=V_t(s)$ for any $t_{k}\le t<t_{k+1}$ and $k<K$ in Lemma \ref{lem:reg_dec}, and \eqref{eq:bellman} . Since we have 
\begin{align*}
(B_{\theta^\star}V)(s_t,a_t) =r_{\theta^\star}(s,a)+\gamma\cdot\mathbb{E}_{s_{t+1} \sim P_{\theta^{\star}}(\cdot\mid s_t,a_t)}\bigl[ V(s_{t+1})\bigr],
\end{align*} 
terms (C1) and (C2) in \eqref{eq:decom} are zero. 
%for all $V$. 
Meanwhile, term (D) in \eqref{eq:decom} satisfies
\begin{align}
    \text{term (D)}& = {\mathbb{E}\Bigl[\sum_{k=0}^{K-1}\mathbb{E}_{\pi^k}\Bigl[\sum_{t=t_k}^{t_{k+1}-1}  \bigl( V_t(s_{t}) - V_{\theta^\star}^{\pi^k}(s_{t})\bigr)\Bigr]\Bigr]}\notag\\
&\qquad+\underbrace{\mathbb{E}\Bigl[\sum_{k=0}^{K-1}\mathbb{E}_{\pi^k}\Bigl[\bigl( V_t(s_{t_{k+1}}) - V^{\pi^k}_{\theta^\star}(s_{t_{k+1}})\bigr) -\bigl( V_t(s_{t_k}) - V_{\theta^\star}^{\pi^k}(s_{t_k})\bigr) \Bigr]\Bigr]}_{\text{term (B)}}\label{eq:c},%\notag\\
%&=\mathbb{E}\Bigl[\sum_{k=0}^{K-1}\mathbb{E}_{\pi^k}\Bigl[\sum_{t=t_k}^{t_{k+1}-1} V_{\theta^k}^{\pi^k}(s_t)- V_{\theta^\star}^{\pi^k}(s_t)\Bigr]\Bigr] \notag\\
 %   &\qquad +  \underbrace{\mathbb{E}\Bigl[\sum_{k=0}^{K-1}\mathbb{E}_{\pi^k}\Bigr[\bigl(V_{\theta^k}^{\pi^k}(s_{t_{k+1}}) - V^{\pi^k}_{\theta^\star}(s_{t_{k+1}})\bigr)-\bigl(V_{\theta^k}^{\pi^k}(s_{t_k}) - V^{\pi^k}_{\theta^\star}(s_{t_k}) \bigr) \Bigr]\Bigr]}_{\text{term (B)}}
\end{align}
where term (B) is defined in the first part of Lemma \ref{lem:reg_dec}. 
Rearranging \eqref{eq:decom} and \eqref{eq:c}, we prove the first part of Lemma \ref{lem:reg_dec}.

Next, we show the proof of the second part of Lemma \ref{lem:reg_dec}, we. For the left-hand side of \eqref{eq:www99}, we have
\begin{align}
    &\mathbb{E}\Bigl[\sum_{k=0}^{K-1}\mathbb{E}_{\pi^k}\Bigl[\sum_{t=t_k}^{t_{k+1}-1}  V^{\pi^\star}_{\theta^\star}(s_t)-  V_t(s_t)\Bigr]\Bigr]\notag\\
    &\quad= \mathbb{E}\Bigl[\sum_{k=0}^{K-1}\mathbb{E}_{\pi^k}\Bigl[\sum_{t=t_k}^{t_{k+1}-1} ({B}_{\theta^\star}V_{\theta^\star}^{\pi^\star})(s_t,\pi^\star(s_t)) -  Q_t(s_t,\pi^k(s_t))  \Bigr]\Bigr]\notag\\
     &\quad = {\mathbb{E}\Bigl[\sum_{k=0}^{K-1}\mathbb{E}_{\pi^k}\Bigl[\sum_{t=t_k}^{t_{k+1}-1} ({B}_{\theta^\star}V_{\theta^\star}^{\pi^\star})(s_t,\pi^\star(s_t)) -  ({B}_{\theta^\star}V_t)(s_t,\pi^\star(s_t)) \Bigr]\Bigr]} \notag\\
    &\quad\qquad+ {\mathbb{E}\Bigl[\sum_{k=0}^{K-1}\mathbb{E}_{\pi^k}\Bigl[\sum_{t=t_k}^{t_{k+1}-1}  ({B}_{\theta^\star}V_t)(s_t,\pi^\star(s_t)) -  Q_t(s_t,\pi^k(s_t))  \Bigr]\Bigr]}\\
    &\quad = \mathbb{E}\Bigl[\sum_{k=0}^{K-1}\mathbb{E}_{\pi^k}\Bigl[\sum_{t=t_k}^{t_{k+1}-1} \gamma\cdot\mathbb{E}_{s^\prime \sim P_{\theta^\star}(\cdot\mid s_t,\pi^\star(s_t))}\bigl[V_{\theta^\star}^{\pi^\star}(s^\prime) -  V_t(s^\prime)\bigr] \Bigr]\Bigr]\notag\\
    &\quad\qquad+ \mathbb{E}\Bigl[\sum_{k=0}^{K-1}\mathbb{E}_{\pi^k}\Bigl[\sum_{t=t_k}^{t_{k+1}-1}  ({B}_{\theta^\star}V_t)(s_t,\pi^\star(s_t)) -  Q_t(s_t,\pi^k(s_t))  \Bigr]\Bigr],\label{eq:decomp_mid1}
\end{align}
where the first equality uses the Bellman optimality equation in \eqref{eq:bellman_opt}, the condition $Q_t(s,\pi^k(s))=V_t(s)$, and $Q^\star_{\theta^\star}(s,\pi^\star(s)) = V_{\theta^\star}^\star(s)$ for any state $s$,  $t_{k}\le t<t_{k+1}$, and $k<K$. Here, the last equality uses the definition of $B_{\theta^\star}$. For the simplicity of discussions, we define functions $F_t,M_t \in \{\mathcal{S}\mapsto \mathbb{R}\}$ and the linear operator $\mathcal{T}\in\bigl\{\{\mathcal{S}\mapsto \mathbb{R}\}\mapsto \{\mathcal{S}\mapsto \mathbb{R}\}\bigr\}$  as
\begin{align}
    F_t(s) &= V_{\theta^\star}^{\pi^\star}(s) - V_t(s),\notag\\
    M_t(s) &= ({B}_{\theta^\star}V_t)(s,\pi^\star(s)) -  Q_t(s_t,\pi^k(s)),\notag\\
    (\mathcal{T}f)(s) &= \mathbb{E}_{s^\prime \sim P_{\theta^\star}(\cdot\mid s,\pi^\star(s))}[f(s^\prime)]\label{eq:def:T},
\end{align}
for any state $s$ and function $f\in\{\mathcal{S}\mapsto\mathbb{R}\}$. Here, we denote by $\{\mathcal{S}\mapsto\mathbb{R}\}$ the class of all the functions defined on $\mathcal{S}$. By the definitions of $F_t$, $M_t$, and $\mathcal{T}$ in \eqref{eq:def:T}, it is clear that
\begin{align}
    F_t(s) = M_t(s) + \gamma\cdot \bigl(\mathcal{T}F_t\bigr)(s),\label{eq:condition_gemma}
\end{align}
for any state $s\in\mathcal{S}$.
Then, we introduce the following lemma to bound $F_t$ by   \eqref{eq:condition_gemma}.
\begin{lemma}
    For the operator $\mathcal{T}$ defined in \eqref{eq:def:T}, two arbitrary bounded functions $f,m$ defined on the state space $\mathcal{S}$, and any $\gamma\in[0,1)$, if \label{lem:1-gamma}
    \begin{align}
        f(s) = m(s) + \gamma \cdot (\mathcal{T}f)(s)\label{eq:lem:gemma:condition}
    \end{align}
    holds for any state $s\in\mathcal{S}$, then it holds that for any state $s\in\mathcal{S}$,
    \begin{align}
        f(s) = \sum_{\tau = 0}^\infty \gamma^\tau \cdot\bigl(\underbrace{(\mathcal{T}\circ\ldots\circ \mathcal{T})}_{\text{$\tau$ times}} m\bigr)(s).
    \end{align}
\end{lemma}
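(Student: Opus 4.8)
The plan is to unfold the fixed-point identity \eqref{eq:lem:gemma:condition} recursively. First I would substitute the equation into itself once: since $f(s) = m(s) + \gamma\cdot(\mathcal{T}f)(s)$ holds for every $s$, applying $\mathcal{T}$ (which is linear and monotone, being an expectation operator) to both sides gives $(\mathcal{T}f)(s) = (\mathcal{T}m)(s) + \gamma\cdot(\mathcal{T}^2 f)(s)$, where I write $\mathcal{T}^\tau$ for the $\tau$-fold composition. Plugging this back yields $f(s) = m(s) + \gamma\cdot(\mathcal{T}m)(s) + \gamma^2\cdot(\mathcal{T}^2 f)(s)$. Iterating this $n$ times produces the partial-sum identity
\begin{align*}
    f(s) = \sum_{\tau=0}^{n-1} \gamma^\tau\cdot(\mathcal{T}^\tau m)(s) + \gamma^n\cdot(\mathcal{T}^n f)(s),
\end{align*}
which I would establish formally by induction on $n$.

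Next I would take $n\to\infty$. The key observation is that $\mathcal{T}$ is a (non-expansive) averaging operator: since $(\mathcal{T}g)(s)$ is an expectation of $g$ under a probability measure, $\|\mathcal{T}g\|_\infty \le \|g\|_\infty$, hence $\|\mathcal{T}^n f\|_\infty \le \|f\|_\infty$ for all $n$. Because $f$ is bounded by hypothesis and $\gamma\in[0,1)$, the remainder term satisfies $|\gamma^n\cdot(\mathcal{T}^n f)(s)| \le \gamma^n\|f\|_\infty \to 0$. Similarly, the infinite series $\sum_{\tau=0}^\infty \gamma^\tau\cdot(\mathcal{T}^\tau m)(s)$ converges absolutely and uniformly, since $|(\mathcal{T}^\tau m)(s)| \le \|m\|_\infty$ and $\sum_\tau \gamma^\tau\|m\|_\infty < \infty$. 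Passing to the limit in the partial-sum identity therefore gives $f(s) = \sum_{\tau=0}^\infty \gamma^\tau\cdot(\mathcal{T}^\tau m)(s)$ for every $s\in\mathcal{S}$, which is exactly the claimed formula (with $\mathcal{T}^0$ the identity, matching the "$\tau$ times" composition notation for $\tau\ge 1$ and $m$ itself for $\tau=0$).

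I do not anticipate a serious obstacle here — the argument is the standard Neumann-series / value-function-expansion computation. The one point requiring a little care is the interchange of limit and the (implicit) infinite sum, i.e.\ justifying convergence rigorously; this is handled cleanly by the uniform bound $\|\mathcal{T}^\tau\|_{\infty\to\infty}\le 1$ together with the geometric factor $\gamma^\tau$, so dominated convergence (or simply the Weierstrass $M$-test for the series plus $\gamma^n\to 0$ for the remainder) closes it. An alternative, slightly slicker route is to note that the map $g\mapsto m + \gamma\mathcal{T}g$ is a $\gamma$-contraction on the Banach space of bounded functions on $\mathcal{S}$ under $\|\cdot\|_\infty$, so it has a unique fixed point; one then checks directly that $\sum_{\tau=0}^\infty \gamma^\tau\mathcal{T}^\tau m$ is a fixed point and invokes uniqueness. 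I would present the induction-plus-limit version as the primary proof since it is self-contained and elementary.
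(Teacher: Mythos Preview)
Your proposal is correct and matches the paper's proof essentially line for line: the paper also unrolls \eqref{eq:lem:gemma:condition} once using linearity of $\mathcal{T}$, iterates to obtain the partial-sum identity with remainder $\gamma^{N+1}\mathcal{T}^{N+1}f$, and then sends $N\to\infty$ using boundedness of $f$ and $m$ together with $\gamma\in[0,1)$. Your explicit justification via the non-expansiveness bound $\|\mathcal{T}^n g\|_\infty\le\|g\|_\infty$ and the contraction-mapping alternative are nice additions but not departures from the paper's argument.
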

\begin{proof}
    [Proof of Lemma \ref{lem:1-gamma}]
    By the condition in \eqref{eq:lem:gemma:condition}, we have
    \begin{align}
        f(s)& =  m(s) + \gamma \cdot \Bigl(\mathcal{T}\bigl(m + \gamma \cdot (\mathcal{T}f)\bigr)\Bigr)(s)\notag\\
        &= m(s) + \gamma\cdot (\mathcal{T}m)(s) + \gamma^2 \cdot \bigl((\mathcal{T}\circ\mathcal{T})f\bigr)(s),\label{eq:lem:iter}
    \end{align}
    where the last equality relies on the linearity of the operator $\mathcal{T}$. Repeating the process in \eqref{eq:lem:iter} for $N\in\mathbb{N}$ times, we have that 
\begin{align}
    f(s)& =  \gamma^{N+1}\cdot\bigl(\underbrace{(\mathcal{T}\circ\ldots\circ \mathcal{T})}_{\text{$(N+1)$ times}} f\bigr)(s)  + \sum_{\tau = 0}^N \gamma^\tau\cdot \bigl(\underbrace{(\mathcal{T}\circ\ldots\circ \mathcal{T})}_{\text{$\tau$ times}} m\bigr)(s). \label{eq:lem:sum}
\end{align}
Since both $f$ and $m$  are bounded functions, we use \eqref{eq:def:T} to know that $\underbrace{(\mathcal{T}\circ\ldots\circ \mathcal{T})}_{\text{$\tau$ times}} f$ and $\underbrace{(\mathcal{T}\circ\ldots\circ \mathcal{T})}_{\text{$\tau$ times}} m$ are also bounded for any $\tau\in\mathbb{N}$. As $\gamma\in[0,1)$, we let $N$ tend to the infinity to transform \eqref{eq:lem:sum} to 
\begin{align*}
    f(s) = \sum_{\tau = 0}^\infty \gamma^\tau\cdot \bigl(\underbrace{(\mathcal{T}\circ\ldots\circ \mathcal{T})}_{\text{$\tau$ times}} m\bigr)(s),
\end{align*}
for any state $s\in\mathcal{S}$. Then, we 
conclude the proof for Lemma \ref{lem:1-gamma}.
\end{proof}
By \eqref{eq:condition_gemma} and Lemma \ref{lem:1-gamma},  we have
\begin{align*}
    F_t(s)  = \sum_{\tau}^\infty\gamma^\tau \cdot \bigl(\underbrace{(\mathcal{T}\circ\ldots\circ \mathcal{T})}_{\text{$\tau$ times}} M_t\bigr)(s)
\end{align*}
Recalling the definition of the optimal $\gamma$-discounted visitation measure in \eqref{eq:optimal_vis}, we further have 
\begin{align}
    F_t(s)  = \frac{1}{1-\gamma}\cdot \mathbb{E}_{s^\prime\sim \nu^\star(\cdot\mid s)}[M_t(s^\prime)].\label{eq:result_F}
\end{align}
Plugging \eqref{eq:result_F} and \eqref{eq:def:T} into \eqref{eq:decomp_mid1}, we have
\begin{align}
     &\mathbb{E}\Bigl[\sum_{k=0}^{K-1}\mathbb{E}_{\pi^k}\Bigl[\sum_{t=t_k}^{t_{k+1}-1}  V^{\pi^\star}_{\theta^\star}(s_t)-  V_t(s_t)\Bigr]\Bigr]\notag\\
    &\quad= \mathbb{E}\Bigl[\sum_{k=0}^{K-1}\mathbb{E}_{\pi^k}\Bigl[\sum_{t=t_k}^{t_{k+1}-1} \frac{1}{1-\gamma}\cdot\mathbb{E}_{s^\prime\sim \nu^\star(\cdot\mid s)} \bigl[ ({B}_{\theta^\star}V_t)(s_t,\pi^\star(s_t)) -  Q_t(s_t,\pi^k(s_t))\bigr]  \Bigr]\Bigr]\notag\\
     &\quad=  \frac{1}{1-\gamma}\cdot\mathbb{E}\Bigl[\sum_{k=0}^{K-1}\mathbb{E}_{\pi^k}\Bigl[\sum_{t=t_k}^{t_{k+1}-1} \mathbb{E}_{s\sim \nu^\star(\cdot\mid s)} \bigl[ ({B}_{\theta^\star}V_t)(s,\pi^\star(s)) -  Q_t(s,\pi^\star(s))\bigr]  \Bigr]\Bigr]\notag\\
     & \quad\qquad+ \frac{1}{1-\gamma}\cdot \mathbb{E}\Bigl[\sum_{k=0}^{K-1}\mathbb{E}_{\pi^k}\Bigl[\sum_{t=t_k}^{t_{k+1}-1}\mathbb{E}_{s\sim \nu^\star(\cdot\mid s)} \bigl[ (Q_t(s,\pi^\star(s)) -  Q_t(s,\pi^k(s))\bigr]  \Bigr]\Bigr]. \label{eq:end_decomp_2}
\end{align}
Multiplying $(1-\gamma)$ on the two sides of \eqref{eq:end_decomp_2}, we prove the second part of Lemma \ref{lem:reg_dec}.
\end{proof}

\section{Linear Special Case}\label{app:linear}
We specialize \texttt{RAFA}  to a linear setting and characterize the Bayesian regret. In particular, we define a Bayesian variant of linear kernel MDPs \citep{yang2020reinforcement,yang2019sample,cai2020provably,zhou2021provably}. Here, $\mathbb{E}_{s^\prime\sim P_{\theta}(\cdot\mid s,a)}[V(s^\prime)]$ is linear in a feature $\psi_V(s,a)\in\mathbb{R}^d$  for an arbitrary parameter $\theta\in\mathbb{R}^d$, while the prior and posterior distributions of the data-generating parameter $\theta^\star\in\mathbb{R}^d$ are Gaussian.  Specifically, $\psi_V(s,a)$ maps the value function $V$ and the state-action pair $(s,a)$ to a $d$-dimensional vector. %, $\theta$ is an arbitrary parameter, and $\theta^\star$ is the data-generating parameter. 
Recall that $\rho$ is the initial distribution of states, $t$ is the step index, and $T$ is the total number of steps. Also, $\mathbb{P}_{t}$ is the posterior distribution at the $t$-th step. 
\begin{definition}[Bayesian Linear Kernel MDP \citep{ghavamzadeh2015bayesian,yang2020reinforcement,yang2019sample,cai2020provably,zhou2021provably}]\normalfont
   A Bayesian linear kernel MDP $M$ satisfies
    \begin{align*}
    V(s_{t+1})\mid s_t,a_t\sim \mathcal{N}(\psi_{V}(s_t,a_t)^\top\theta,1)
\end{align*}
for all $t\ge 0$,  $(s_t,a_t)\in\mathcal{S}\times\mathcal{A}$, $s_{t+1}\sim P_{\theta}(\cdot\mid s_t,a_t)$,  $\theta\in\mathbb{R}^d$, as well as all value function $V$.  Here, $\psi_V(s,a)$ maps $V$ and $(s,a)$  to a $d$-dimensional vector, which satisfies $\|\psi_V(s,a)\|_2 \le R$ for all  $(s,a)\in\cS\times\cA$ and all $V$. Also,  $M$ also satisfies $|\EE_{s_0\sim\rho}V(s_0)|\le R$ for all $V$.   Here, $R$ is a positive constant that is independent of $t$ and $T$. The prior distribution of the data-generating parameter $\theta^\star\in\mathbb{R}^d$ is $ \mathcal{N}(0,\lambda I_d)$, where $\lambda$ is a positive constant. Here, $\psi_V$ is known and $\theta^\star$ is unknown. Without loss of generality, we assume that the reward function is deterministic and known, i.e.,  
$(B_{\theta}V)(\cdot,\cdot) = r(\cdot,\cdot)+\gamma\cdot(P_\theta V)(\cdot,\cdot)$ for a known reward function $r$ and any $\theta$.
\label{def:lk mdp} 
\end{definition}

%Let $(K-1)$ be the total number of switches until $t$ reaches $(T-1)$ for \textt{RAFA} and the variants . Let $t_{K}=T$. %At the  $(-1)$-th step, \texttt{RAFA} and its variants execute $a_{T-1} = \pi_{T-1}(s_{T-1})$ , where we have $\pi_{T-1} = \texttt{PL}^\eps(\theta_{T-1})$. Upon receiving $r_{T-1}$ and $s_{T}$ from the external environment, Algorithm \ref{alg: theory} updates $\mathcal{D}_{T} = \{(s_t,a_t,s_{t+1},r_t)\}_{t=0}^{T-1}$ and terminates. 
By Definition \ref{def:lk mdp}, we  
obtain the closed form of the posterior $\mathbb{P}_{t}$ as  follows,
\begin{align*}
    \theta\mid \mathcal{D}_t \sim \mathcal{N}(\hat \theta_t;\Sigma_t^{-1}),
\end{align*}
where 
\begin{align}
    \hat \theta_t = \Bigl(\lambda I_d+\sum_{i=0}^{t-1} \psi_{V_i}(s_i, a_i) \psi_{V_i}(s_i, a_i)^{\top}\Bigr)^{-1}\Bigl(\sum_{i=0}^{t-1}\psi_{V_i}(s_i, a_i) V_i(s_{i+1})\Bigr)\label{eq:ls estimator}
\end{align}
and 
\begin{align}
    \Sigma_t=\lambda I_d+\sum_{i=0}^{t-1} \psi_{V_i}\left(s_i, a_i\right) \psi_{V_i}\left(s_i, a_i\right)^{\top}.\label{eq:sigma}
\end{align}
Hence, the posterior entropy is
\begin{align}
    H_t =H(\mathbb{P}_{t})= 1/2 \cdot\log (\det(\Sigma_t))+d/2\cdot(1+\log (2 \pi)).\label{eq:linear_closed_ent}
\end{align}
We specialize the switching condition in Algorithm \ref{alg: theory} as follows,
%As the switching condition in Line 10 of Algorithm \ref{alg: theory} is $H_{t_k} - H_{t} > \log 2$, we use the closed form of the posterior distribution of the parameter in \eqref{eq:linear_closed_ent} to specify the switching condition  as 
\begin{align}
     H_{t_k} - H_t = 1/2\cdot\log(\det(\Sigma_{t_{k}})) - 1/2\cdot\log(\det(\Sigma_{t})) > \log 2,\label{eq:l_ent}
\end{align}
which is equivalent to 
$
    \det(\Sigma_{t_{k}}) > 4\cdot\det(\Sigma_{t})
$. This switching condition is also similarly adopted in work for RL \citep{zhou2021provably,abbasi2015bayesian}. As a result, we have
\begin{align}
    \det(\Sigma_{t_{k}}) \le 4\cdot\det(\Sigma_{t})\label{eq:linear_switch_2}
\end{align}
 for all $t_k\le t<t_{k+1}$ and $k<K$. 
\paragraph{Verification of  Assumption \ref{as:var}}
We verify the regularity assumption (Assumption \ref{as:var}.) on Bayesian linear kernel MDPs as follows.  By \eqref{eq:l_ent}, the condition 
\begin{align*}
    H_{t_1}-H_{t_2} = 1/2\cdot\log(\det(\Sigma_{t_{1}})) - 1/2\cdot\log(\det(\Sigma_{t_2})) \le \log 2
\end{align*}
is equivalent to $\det(\Sigma_{t_1})\le 4 \det(\Sigma_{t_2})$. Since $t_1<t_2$ and the posterior variance matrix is positive definite, we have $\Sigma_{t_1}^{-1}\succeq\Sigma_{t_1}^{-1}$ and $\det(\Sigma_{t_2}^{-1})\le 4 \det(\Sigma_{t_1}^{-1})$. By the definition of the information gain and \eqref{eq:l_ent}, we have
\begin{align}
    I(\theta;\xi_{(s,a)}\mid \cD_t)&=H(\theta\mid \cD_t)-H(\theta\mid\xi_{(s,a)}, \cD_t)\notag\\
    &= 1/2\cdot \log\Biggl(\frac{\det(\psi_{V_{t_2}}(s,a)\psi_{V_{t_2}}^\top(s,a)+\Sigma_t)}{\det(\Sigma_t)}\Biggr)\notag\\
    &=1/2\cdot\log\bigl(1+\psi_{V_{t_2}}(s,a)^\top \Sigma_t^{-1}\psi_{V_{t_2}}(s,a)\bigr),\label{eq:linear_I}
\end{align}
for $t=t_1$ and $t_2$. Here, the last equality uses the matrix determinant lemma. 

Plugging \eqref{eq:psi} into \eqref{eq:linear_I}, we have
     \begin{align}
    I(\theta;\xi_{(s,a)}\mid \cD_{t_2}) &= 1/2\cdot\log(1+\psi_{V_{t_2}}(s,a)^\top \Sigma_t^{-1}\psi_{V_{t_2}}(s,a)) \notag\\
        &=1/2\cdot\log(1+\psi_{V_{t_2}}(s,a)^\top \Sigma_t^{-1}\psi_{V_{t_2}}(s,a))\notag\\
        &\ge\log(1+d)/(2d)\cdot\|\psi_{V_{t_2}}(s,a)\|_{\Sigma_{t_2}^{-1}}^2,\label{eq:linear_information_ratio2}
    \end{align}
    where the second equality uses the matrix determinant lemma and the first inequality uses the fact that $\log(1+x)/ x$ is an increasing function for $x\ge 0$ and \begin{align}
        0&\le\psi_{V_{t_2}}(s,a)^\top \Sigma_t^{-1}\psi_{V_{t_2}}(s,a)\notag\\&\le\psi_{V_{t_2}}(s,a)^\top\big( {\psi_{V_{t_2}}(s,a)\psi_{V_{t_2}}(s,a)^\top}\big)^{-1}\psi_{V_{t_2}}(s,a)\notag\\
        &=\tr\Bigl(\psi_{V_{t_2}}(s,a)\psi_{V_{t_2}}(s,a)^\top\bigl( {\psi_{V_{t_2}}(s,a)\psi_{V_{t_2}}(s,a)^\top}\bigr)^{-1}\Bigr)\notag\\
        &=d.\label{eq:I_and_sigma}
    \end{align}
    Here, the first inequality uses the nonnegativity of a quadratic form, the first equality uses $\tr(a^\top b)=\tr(b a^\top)$ for two arbitrary vectors $a$ and $b$, and the second inequality uses  \eqref{eq:sigma}. 
    \iffalse We obtain 
    \begin{align}
4\cdot\|\theta^\star-\theta_{t^\dagger} \|_{\Sigma_{t^\dagger}}\cdot\|\psi_{V_t}(s,a)\|_{\Sigma_{t}^{-1}}
&\le {4\sqrt{2d}}/{\sqrt{\log(1+d)}}\cdot\|\theta^\star-\theta_{t^\dagger}\|_{\Sigma_{t^\dagger}}\cdot \sqrt{I(\theta;\xi_{t+1}\mid \cD_t)}\notag\\
&\le4 {\sqrt{2(d+1)}}\cdot\|\theta^\star-\theta_{t^\dagger}\|_{\Sigma_{t^\dagger}}\cdot \sqrt{I(\theta;\xi_{t+1}\mid \cD_t)}\label{eq:linear_information_ratio3}
    \end{align}
    for all $t<T$ such that $H_{t^\dagger}-H_{t}\le \log 2$. Here, the first inequality uses \eqref{eq:linear_information_ratio2}, the second inequality uses $\log(1+x)\ge x/(1+x)$ for all $x\ge0$.\fi 
    By \eqref{eq:sigma},  we know that 
\begin{align} \|\psi_{V_{t_2}}(s,a)\|^2_{\Sigma_{t_1}^{-1}}&\le4\cdot\|\psi_{V_{t_2}}(s,a)\|^2_{\Sigma_{t_2}^{-1}},\label{eq:psi}
\end{align}
where the inequality invokes the following lemma (Lemma \ref{lem:matrix}). 
\begin{lemma}[Lemma 12 in \citet{abbasi2011improved}]
 Suppose $A, D \in \mathbb{R}^{d \times d}$ are two positive definite matrices satisfying that $A \succeq D$, then for any $\mathbf{x} \in \mathbb{R}^d,\|\mathbf{x}\|_{A} \leq\|\mathbf{x}\|_{D} \cdot \sqrt{\operatorname{det}(A) / \operatorname{det}(D)}$.\label{lem:matrix}
\end{lemma}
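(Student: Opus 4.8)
The plan is to prove the inequality by a congruence transformation that reduces it to the special case $D = I_d$, and then close with an elementary eigenvalue estimate. The statement is trivial when $\mathbf{x} = \mathbf{0}$ (both sides vanish), so assume $\mathbf{x} \neq \mathbf{0}$ throughout.

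First I would factor $D = D^{1/2} D^{1/2}$, which is legitimate since $D$ is positive definite and hence admits a symmetric positive definite square root. Set $\mathbf{y} = D^{1/2}\mathbf{x} \neq \mathbf{0}$ and $\tilde{A} = D^{-1/2} A D^{-1/2}$. Then $\mathbf{x}^\top A \mathbf{x} = \mathbf{y}^\top \tilde{A} \mathbf{y}$, $\mathbf{x}^\top D \mathbf{x} = \|\mathbf{y}\|_2^2$, and $\det(A)/\det(D) = \det(\tilde{A})$ by multiplicativity of the determinant. Moreover, conjugating $A \succeq D$ by the symmetric matrix $D^{-1/2}$ shows $\tilde{A} \succeq I_d$, since $\mathbf{z}^\top(\tilde{A} - I_d)\mathbf{z} = (D^{-1/2}\mathbf{z})^\top (A - D)(D^{-1/2}\mathbf{z}) \ge 0$ for all $\mathbf{z}$. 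Hence it suffices to show that any positive definite $\tilde{A}$ with $\tilde{A} \succeq I_d$ satisfies $\mathbf{y}^\top \tilde{A}\mathbf{y} \le \det(\tilde{A}) \cdot \|\mathbf{y}\|_2^2$ for all $\mathbf{y}$.

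For the second step, let $\lambda_1 \ge \cdots \ge \lambda_d$ be the eigenvalues of $\tilde{A}$; since $\tilde{A} \succeq I_d$ we have $\lambda_i \ge 1$ for every $i$. By the Rayleigh--Ritz bound, $\mathbf{y}^\top \tilde{A}\mathbf{y} \le \lambda_1 \|\mathbf{y}\|_2^2$. On the other hand, $\det(\tilde{A}) = \lambda_1 \prod_{i=2}^d \lambda_i \ge \lambda_1$, because each remaining factor is at least $1$. Combining the two gives $\mathbf{y}^\top \tilde{A}\mathbf{y} \le \lambda_1 \|\mathbf{y}\|_2^2 \le \det(\tilde{A})\,\|\mathbf{y}\|_2^2$. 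Undoing the substitution, this reads $\mathbf{x}^\top A \mathbf{x} \le (\det(A)/\det(D))\,\mathbf{x}^\top D\mathbf{x}$, and taking square roots yields $\|\mathbf{x}\|_A \le \|\mathbf{x}\|_D \sqrt{\det(A)/\det(D)}$.

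This lemma is a standard fact (as the citation to \citet{abbasi2011improved} indicates), so there is no genuinely hard step; the only points requiring a line of care are verifying that the Loewner relation $A \succeq D$ is preserved as $\tilde{A} \succeq I_d$ under the congruence by $D^{-1/2}$, and the observation that a product of reals all $\ge 1$ dominates each of its factors. An alternative route that avoids the explicit change of variables is to work directly with the generalized eigenvalue problem $A\mathbf{v} = \mu D\mathbf{v}$: its eigenvalues $\mu_i$ are all $\ge 1$ and their product equals $\det(A)/\det(D)$, so the Courant--Fischer characterization gives $\mathbf{x}^\top A\mathbf{x}/\mathbf{x}^\top D\mathbf{x} \le \mu_{\max} \le \prod_i \mu_i$. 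I would present the first version, since it keeps the linear-algebra prerequisites to Rayleigh quotients and the multiplicativity of the determinant.
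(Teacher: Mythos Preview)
Your proof is correct. The paper does not give its own proof of this lemma; it simply cites it as Lemma~12 of \citet{abbasi2011improved} and uses it as a black box, so your argument in fact supplies more than the paper does. The congruence reduction to $\tilde{A} = D^{-1/2} A D^{-1/2} \succeq I_d$ followed by the eigenvalue bound $\lambda_{\max}(\tilde{A}) \le \prod_i \lambda_i(\tilde{A}) = \det(\tilde{A})$ is the standard route and is carried out cleanly here.
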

 Rearranging \eqref{eq:linear_information_ratio2} , we have
\begin{align}
    \frac{8d}{\log(1+d)}\cdot I(\theta;\xi_{(s,a)}\mid \cD_{t_2}) &\ge 4\cdot\|\psi_{V_{t_2}}(s,a)\|_{\Sigma_{t_2}^{-1}}^2\notag\\&\ge \|\psi_{V_{t_2}}(s,a)\|_{\Sigma_{t_1}^{-1}}^2\notag\\&\ge \log(1+\psi_{V_{t_2}}(s,a)^\top \Sigma_{t_1}^{-1}\psi_{V_{t_2}}(s,a))\notag\\
    &= 2\cdot I(\theta;\xi_{(s,a)}\mid \cD_{t_1}),\label{eq:linear_coeff}
\end{align}
where the second inequality uses \eqref{eq:psi}, the last inequality uses the fact that $x\ge \log(1+x)$ for any $x\ge 0$, and the last equality use \eqref{eq:linear_I}. By \eqref{eq:linear_coeff}, we know that Bayesian linear kernel MDPs (Definition \ref{def:lk mdp}) satisfy Assumption \ref{as:var} with the coefficient $\eta = d/\log(1+d)$.
\label{app:as:lkmdp}

\paragraph{Analysis of the Cumulative Posterior Entropy $H_0-H_T$.} Next, we study the upper bound of the cumulative Information gain $H_0-H_T$ in Bayesian linear kernel MDPs. 
By the definition of $\Sigma_{t}$ in \eqref{eq:sigma}, we have $\log \det(\Sigma_{0}) = d\cdot\log\lambda$ and 
\begin{align}
   \log \det(\Sigma_{T})&= \log\det\Bigl(\lambda I_d + \sum_{t=0}^{T-1} \psi_{V_t}(s_t,a_t) \psi^\top_{V_t}(s_t,a_t)\Bigr)\notag\\
   &\le d\cdot\log\Bigl(1/d\cdot\tr\bigl(\lambda I_d + \sum_{t=0}^{T-1} \psi_{V_t}(s_t,a_t) \psi^\top_{V_t}(s_t,a_t)\bigr)\Bigr)\notag\\
   &= d\cdot\log\Bigl(1/d\cdot\bigl(\lambda d + \sum_{t=0}^{T-1}\|\psi_{V_t}(s_t,a_t)\|_2^2\bigr)\Bigr)\notag\\
   &\le d\cdot\log(\lambda + TR^2/d)
\end{align}
almost surely. Here, the first inequality uses the relationship between the trace and the determinant of a square matrix, the second equality uses  $\tr (a^\top b) = \tr(ba^\top)$ for two arbitrary vectors $a$ and $b$, and the last inequality uses the fact that 
$\|\psi_V(s,a)\|_2$ is upper bounded by $R$ for all  $(s,a)\in\cS\times\cA$ and $V$. 
Hence, we have 
\begin{align}
    H_0 - H_T= \mathcal{O}\big(d\cdot\log(1+TR^2/(d\lambda))\big) \label{eq:linear_ent}
\end{align}
almost surely. %By \eqref{eq:swtich_times} in the proof of Theorem \ref{thm:reg}, we obtain 
%\begin{align}
 %   K-1= \mathcal{O}\big(d\cdot\log(1+TR^2/(d\lambda))\big)\label{eq:linear_switch}
%\end{align}
%almost surely. 
\paragraph{Regret Bounds.} With the verification of Assumption \ref{as:var} and the analysis of the upper bound of the cumulative Information gain $H_0-H_T$ in Bayesian linear kernel MDPs, we are ready to specialize the theorems in Appendix \ref{app:theory} in  Bayesian linear kernel MDPs if we determine an appropriate $L$ (the bound of the value).   We analyze the bound of $V(s^\prime)$ for $s^\prime\sim P_{\theta}(s,a)$ and  $\theta\sim \mathcal{N}(\mu,\lambda I_d)$. Define that $\Tilde{\eps}\sim \mathcal{N}(0,1)$. 
By Definition \ref{def:lk mdp}, we have
\begin{align}
    |V(s^\prime)|&= |\Tilde{\eps}+ \psi_V(s,a)^\top\theta | \notag\\
    &\le |\Tilde{\eps}| + |\psi_V(s,a)^\top (\theta-\mu) | + |\psi_V(s,a)^\top \mu |\notag\\
    & \le |\Tilde{\eps}| + \|\psi_V(s,a)\|_2\cdot \|\theta-\mu\|_2+\|\psi_V(s,a)\|_2\cdot \|\mu\|_2\notag\\
    &\le |\Tilde{\eps}| + R\cdot\|\theta\|_2+R\cdot\|\mu\|_2,\label{eq:bound_V}
\end{align}
where the first inequality uses the triangle inequality, the second inequality uses the Cauchy-Schwartz inequality, and the last inequality uses the definition of $R$ in Definition \ref{def:lk mdp}. 
By Definition \ref{def:lk mdp}, \eqref{eq:bound_V}, and the tail behavior of the Gaussian distribution \citep{ghosh2021exponential}, we have
\begin{align*}
    |V(s)|\le \sqrt{2\cdot\log(2/\delta)}+R\cdot\|\mu\|_2+R\cdot\sqrt{2\lambda d\cdot\log(2d\delta)}
\end{align*}
for any $s\in\cS$ and value function $V$ with probability at least $1-\delta$. 
Since the prior distribution of $\theta$ is $\mathcal{N}(0,\lambda I_d)$, it is natural to restrict $\mu$ such that $\|\mu\|_2 \le c d\cdot\log(2d))$ for some absolute constant $c$. Then, we apply the union bound of all $T$ value functions in \texttt{RAFA} and the variants (Algorithms \ref{alg: theory_gen}, \ref{alg: theory_bonus}, and \ref{alg: theory}) to have
\begin{align}
    |V_t(s)|&\le \sqrt{2\cdot\log(2T/\delta)}+R\cdot\|\mu\|_2+R\cdot\sqrt{2\lambda d\cdot\log(2dT\delta)}\notag\\
    &\le (c+1)R\cdot\sqrt{2\lambda d\log(2dT/\delta)})
    \label{eq:before_union2}
\end{align}
for any $t<T$, $s\in\cS$, and value function $V$ with probability at least $1-\delta$. Hence, we can select $L=(c+1)R\cdot\sqrt{2\lambda d\log(2dT/\delta)}$ in Theorems \ref{thm:reg_conc}, \ref{thm:reg_bonus}, and \ref{thm:reg}.
By specializing Theorems \ref{thm:reg_conc}, \ref{thm:reg_bonus}, and \ref{thm:reg}, we summarize the corresponding regret bounds in Table \ref{tab:reg_linear} 
for Algorithms \ref{alg: theory_gen}, \ref{alg: theory_bonus}, and \ref{alg: theory}, respectively. Here, we choose the planning suboptimality of $\texttt{PL}^\eps$ to be $\eps = \mathcal{O}(1/\sqrt{T})$ and all the bounds hold with probability at least $1-\delta$.
\begin{table}[H]
    \centering
 \begin{tabular}{l|c}
\toprule
 \makecell{Algorithm}& Bayesian Regret \\
\midrule
\texttt{RAFA} (Algorithm \ref{alg: theory_gen}) & $\mathcal{O}((1-\gamma)^{-1}(\kappa+1)\sqrt{d^3T}\cdot \log(dT/\delta))$ \\
\texttt{RAFA} with Optimistic Bonus (Algorithm \ref{alg: theory_bonus})  & $((1-\gamma)^{-1}\sqrt{d^3T}\cdot \log(dT/\delta))$  \\
\texttt{RAFA} with Posterior Sampling (Algorithm \ref{alg: theory}) & $\mathcal{O}((1-\gamma)^{-1}\sqrt{d^3T}\cdot \log(dT/\delta))$  \\
\bottomrule
\end{tabular}
    \caption{Bayesian regret of variants of  \texttt{RAFA} in Bayesian linear kernel MDPs (see Definition \ref{def:lk mdp}). Here, we choose the planning suboptimality of $\texttt{PL}^\eps$ to be $\eps = \mathcal{O}(1/\sqrt{T})$ and all the bounds hold with probability at least $1-\delta$. }
    \label{tab:reg_linear}
\end{table}
\section{More Experiments}
\label{sec:add_exp}
In what follows, we provide the detailed setups and additional results of our experiments.
\subsection{Game of 24}
\label{app_game24}

\paragraph{Task Setup.} Figure~\ref{fig:game24} gives an illustrative example for Game of 24. 

\begin{figure}[H]
    \centering
       \fbox{    
	\parbox{0.8\textwidth}{
 \small
 \textbf{\color{ultramarine} [Illustrative example for Game of 24]} 
 \begin{center}
\begin{itemize}
  \item Numbers: [2, 5, 8, 11]
  \item Arithmetic Operations: [$+$, $-$, $\times$, $/$, $($, $)$]
  
  \item \textbf{Solution}: 
  \begin{equation*}
    \begin{aligned}
        (11 - 5) \times 8 / 2 = 24 
    \end{aligned}
  \end{equation*}

\end{itemize}
\end{center}
}}
  \caption{An illustrative example of the Game of 24. The player uses combinations of basic arithmetic operations with four given numbers to get 24.} \label{fig:game24}
\end{figure}

\begin{figure}[H]
\centering
\includegraphics[width=0.47\textwidth]{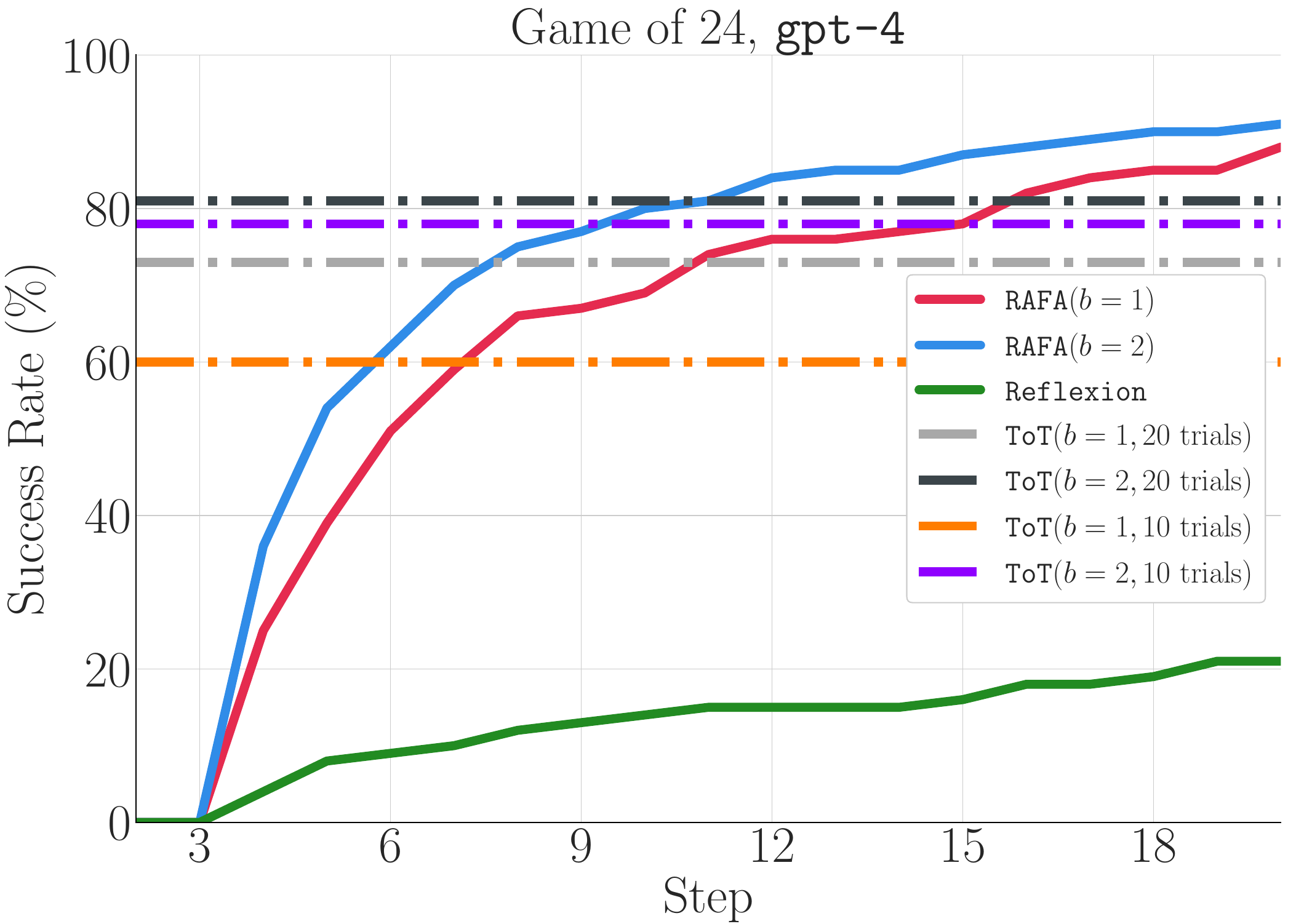}
\hfill
\includegraphics[width=0.47\textwidth]{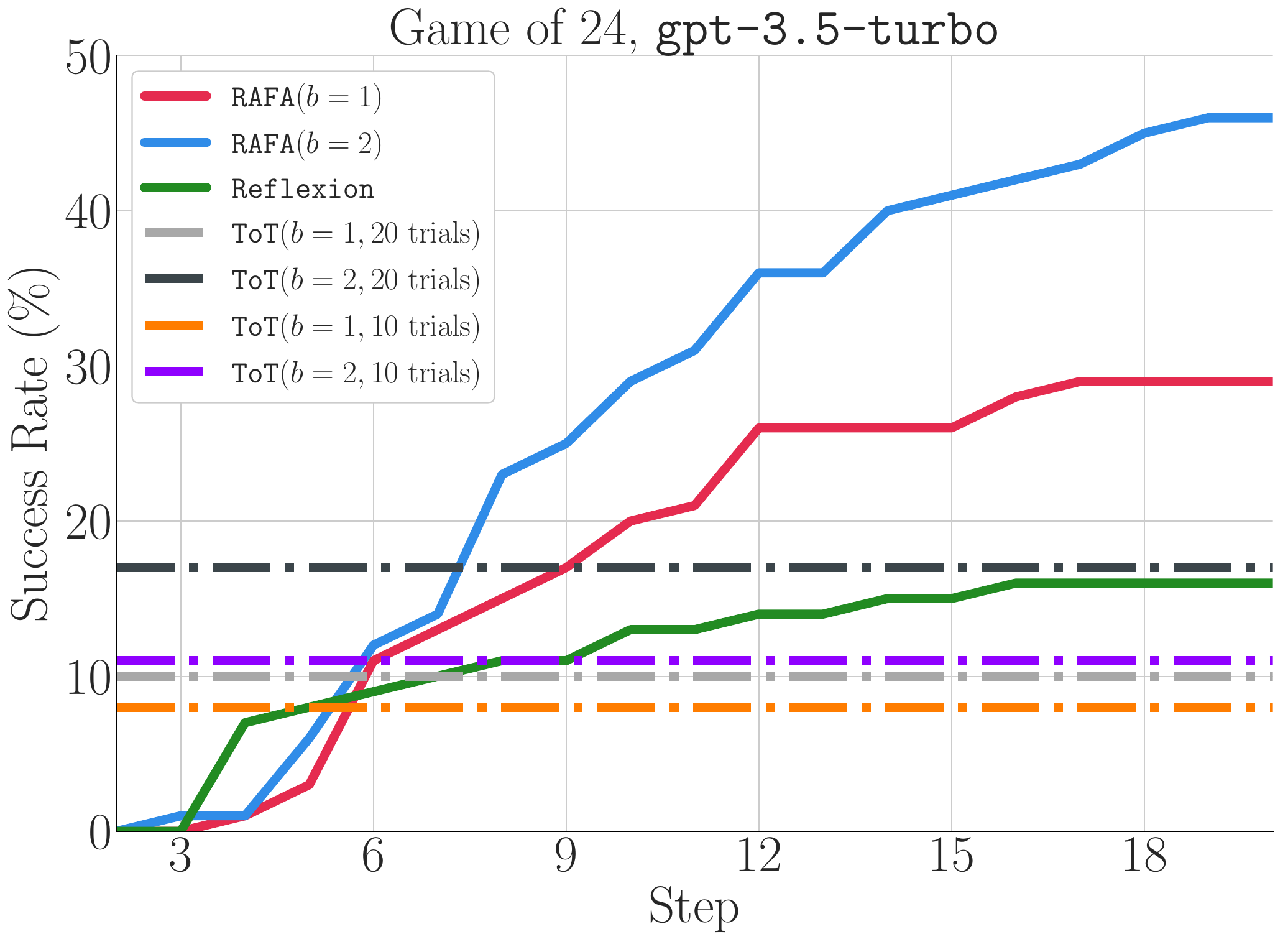}
\caption{ Sample efficiency on Game of 24. \texttt{RAFA} agent achieves strong performance due to an orchestration of reasoning and acting. The success rate at a given step is the number of tasks that is solved within the given step.}
\label{fig:game24_sample_efficiency_2}
\end{figure}

%%%%%%%%new rebuttal
\iffalse
\begin{figure}[H]
  \captionsetup{labelfont={color=red}}
\centering
\includegraphics[width=0.47\textwidth]{}
\hfill
\includegraphics[width=0.47\textwidth]{}
\caption{ \textcolor{red}{Token efficiency on Game of 24. \texttt{RAFA} is superior in terms of token complexity. Methods that lack planning like \texttt{Reflexion} has a low token demand, however it is not enough to compensate for the drop in performance. Methods that lacks in-context learning like \texttt{ToT} would generate unnecessary repeated trials due to lack of reflection and improvement, which makes the method token inefficient.}}
\label{fig:game24_sample_efficiency_token}
\end{figure}

\textcolor{red}{
To evaluate the performance of \texttt{RAFA} in a more profound way, we also consider token complexity for evaluation as shown in Figure~\ref{fig:game24_sample_efficiency_token}. This rules out the possibility that some methods can get better performance via just using more queries. We compare the token efficiency of our method on the Game of 24 with various baseline methods and the results and \texttt{RAFA} is superior in terms of token complexity. Methods that lack planning like \texttt{Reflexion} has a low token demand, however it is not enough to compensate for the drop in performance. Methods that lacks in-context learning like \texttt{ToT} would generate unnecessary repeated trials due to lack of reflection and improvement, which makes the method token inefficient.
}
\fi

Following \citet{tree_of_thought}, we use the same subset indexed 901-1,000 from a total of 1,362 tasks collected from \texttt{4nums.com}. The index is arranged from easy to hard by human solving time so the subset is relatively challenging. The agent receives a reward of 1 if the proposed formula is correct and the proposed formula is accepted and concatenated into the state; if the final result is exactly 24, the agent receives a reward of 10, and the episode terminates. Otherwise, the agent receives a reward of 0, and the proposed formula is not accepted. We limit the maximum trials for each task to 20 to avoid meaningless retries. The task is successful if the agent receives a return larger than 10 \footnote{For $\texttt{gpt-3.5-turbo}$, we report the success rate when the agent receives a return no less than 3 (i.e., find all sub-steps to get 24 but not necessarily generate a whole correct formula). This is because $\texttt{ToT}$ with $\texttt{gpt-3.5-turbo}$ is known to suffer from correctly get a whole formula due to limited reasoning ability and non-perfect prompts. See \href{https://github.com/princeton-nlp/tree-of-thought-llm/issues/24}{https://github.com/princeton-nlp/tree-of-thought-llm/issues/24} for more details.} (i.e., find a valid solution within 20 steps). We report the final success rate and sample efficiency for each method on the subset of 100 tasks.  Notably, a task is considered successful if the \texttt{RAFA} agent returns one and only one correct formula, which is more strictly evaluated than Tree of Thoughts (\texttt{ToT}, \citet{tree_of_thought}): we allow open-loop agents like \texttt{ToT} to retry 20 times and consider them successful if they generate a valid solution in any of the 20 trials. For \texttt{CoT}~\citep{chain_of_thought} and \texttt{Reflexion}~\citep{shinn2023reflexion} agents, we allow them to reflect on the environment's feedback but require them to generate a plan immediately without sophisticated reasoning.
%Note that the agent is required to return one and only one correct formula for success while some prior work like ToT~\citep{tree_of_thought}
%only require any one of the best $k$ formulas to be correct for success. Thus, our result is not directly comparable to theirs. 

\paragraph{RAFA Setup.} In the Game of 24, the \texttt{RAFA} agent uses ToT as the planner, regenerates a plan when the agent receives a zero reward and continues acting according to the previous plan when the agent receives a positive reward. We set the base ToT planner with beam search width $b=1,2$ and use both \texttt{gpt-3.5-turbo} and \texttt{gpt-4} to test the RAFA's boost-up over LLM agents with different reasoning abilities. We set the temperature $t = 0.2$ by default to favor rigorous reasoning and $t = 0.7$ for majority voting.

\paragraph{Reduced Hallucination Through Interaction.} A comprehensive review of various method proposals revealed significant hallucination, especially with \texttt{gpt-3.5-turbo}. A common hallucination is that the agent believes she can reuse the same number (e.g. using the number $2$ twice as illustrated in Figure~\ref{fig:game24_illustration}). \texttt{RAFA} efficiently mitigates such hallucination by actively interacting with the environment, displaying exceptional hallucination resistance and improved performance.

\paragraph{Enhanced Efficiency Through Planning.} Evidenced in Figure~\ref{fig:game24_sample_efficiency}, the \texttt{RAFA} agent substantially surpasses the \texttt{Reflexion} baseline, reflecting heightened efficiency and minimized regret by negating careless trials. For example, without carefully planning, agent may give negative answers, e.g,``Impossible to obtain 24 with the given numbers, or unchecked answers, e.g.,``Answer: 6 * 9 / (3 - 2) = 24". This reduction of careless trails is especially achieved when a strong backbone LLMs (e.g., $\texttt{gpt-4}$) is used, even with a basic planning method, such as BFS with $B=1$.

\paragraph{Ablation Study.} The \texttt{RAFA} agent’s performance is dissected by individually examining its components: (1) Planning modules or model/elite LLM, (2) Reflection modules or critic LLM, and (3) Different LLMs. Results, displayed in Table~\ref{tab:game24_results} and Figure~\ref{fig:game24_sample_efficiency}, affirm the substantial contribution of each segment to the aggregate performance. Compared to absent or rudimentary zero-shot planning, a basic planner markedly enhances overall performance. However, augmenting planner strength only offers marginal performance enhancements. Both critic LLM and robust LLM usage emerge as pivotal for optimal performance.

\subsection{ALFWorld}
\label{sec:add_exp_alfw}

\begin{figure}[h]
\centering
%\vspace{-0.2cm}
\includegraphics[width=1\textwidth]{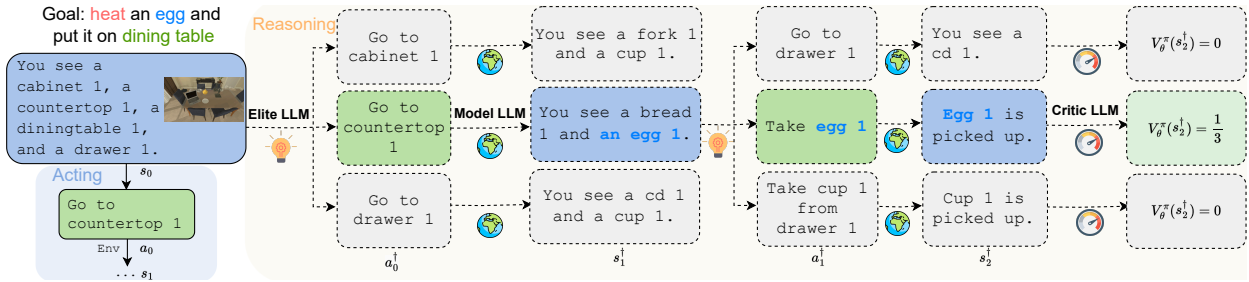}  
%\vspace{-0.2cm}
%\setlength{\belowcaptionskip}{-10pt}
\caption{An illustration of \texttt{RAFA} in the ALFWorld environment.\vspace{-0.1cm}}
\label{fig:alf_demo}
\end{figure}

\paragraph{Task Setup.} The action space of ALFWorld consists of high-level actions such as “heat
a potato with a microwave”, which is executed in the underlying embodied simulator through low-level
action primitives. The egocentric visual observations of the simulator are translated into natural language before being provided to the agent. The state is the history of the observations. If a task goal can be precisely achieved by the agent, it will be counted as a success.

\paragraph{RAFA Setup.} In the ALFWorld environment, the \texttt{RAFA} planner is instantiated as Breadth First
Search (BFS). Specifically, $B$ and $U$ are both set to 2, and we use \texttt{gpt-3} (\texttt{text-davinci-003})
for the \texttt{Critic}, \texttt{Model}, and \texttt{Elite} modules. Besides, since it is challenging to prompt the LLM with the stored full trajectories in the memory buffer due to the token limit, we make the following modifications: the \texttt{Model} LLM instance uses only the partial trajectory executed so far in the current episode, and the \texttt{Elite} LLM instance uses the same partial executed trajectory with additional model-generated state-action pairs during the planning subroutine. When switching is triggered after $20$ failed timesteps (i.e., an episode), a summary from the failure trajectory is generated by \texttt{gpt-4} and added to the \texttt{Critic} prompt.

\paragraph{Reduced Hallucination Through Interaction.} The baselines are more likely to hallucinate when the target object is not found after exploring many locations. On the other hand, the critic LLM used in \texttt{RAFA} is able to probe the hallucination by generating the summary ``In this environment, my critic assigned a 1/3 value after taking a knife. However, the task is to take and cool a tomato." and avoid it in the next episode. Therefore, \texttt{RAFA} is more sample-efficient due to an orchestration of reasoning and acting and the ability to mitigate hallucination through interaction.

\paragraph{Ablation Study.} To better understand the role that the planning subroutine plays in the \texttt{RAFA} algorithm, we conduct ablation studies on the search depth $U$ and search breadth $B$. The results are shown in Figure \ref{fig:alf_curve_abl} and \ref{fig:alf_curve_abl_B}, respectively. We observe that when setting the search depth to $B=U=2$, the success rate is higher than when setting the search depth to $U=1$ or setting the search breadth $B=1$, especially at the initial episode. This indicates that the reasoning ability of \texttt{RAFA} is enhanced through the planning subroutine. Besides, the algorithm is also more sample-efficient when setting $B=U=2$, indicating a better capacity for learning and planning through interaction and reasoning.

\begin{figure}[htbp]
  \centering
  \begin{minipage}[t]{0.42\textwidth}
    \centering
    \includegraphics[width=\linewidth]{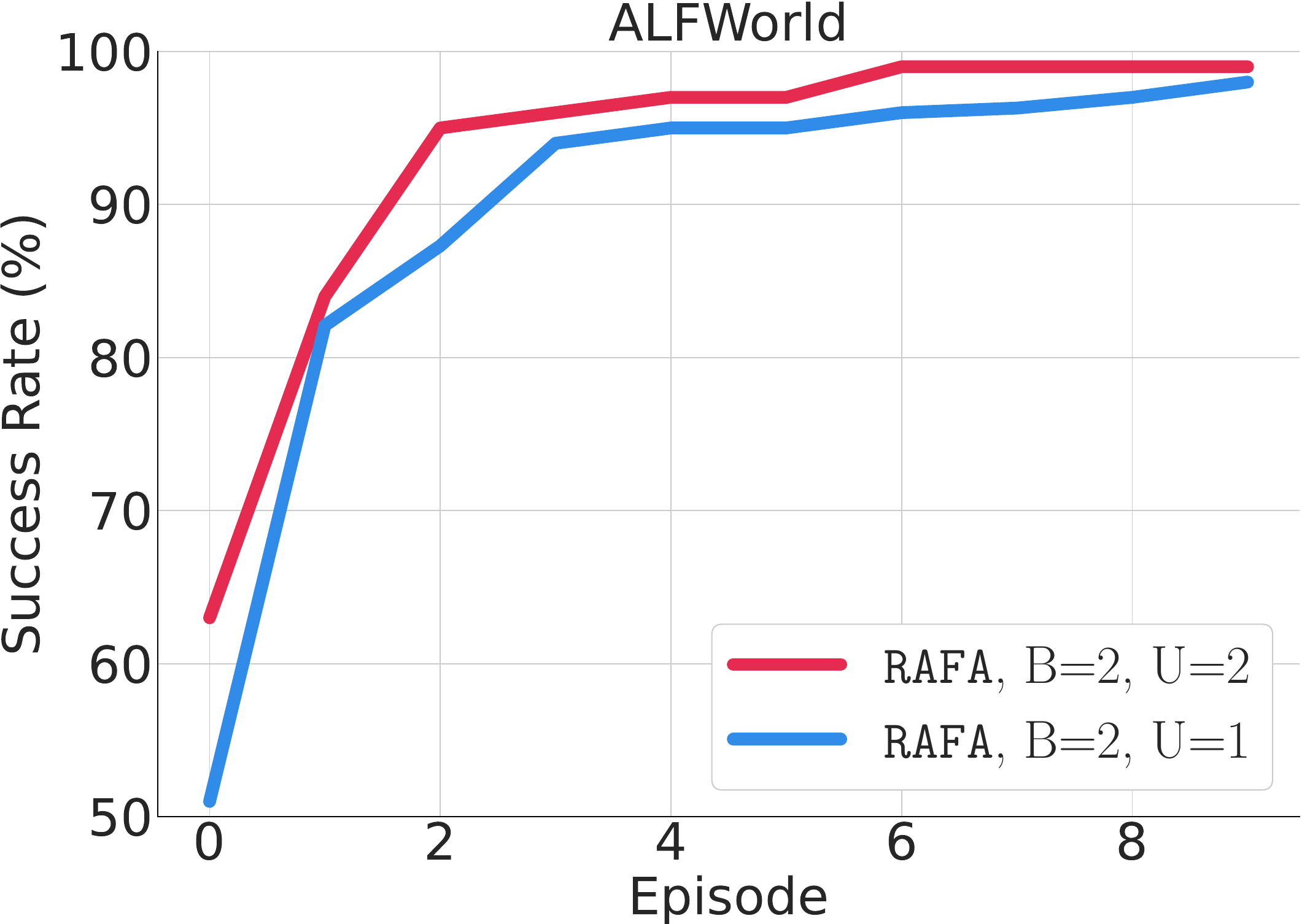}
    \caption{Ablation on the search depth $U$ in the ALFWorld environment.}
    \label{fig:alf_curve_abl}
  \end{minipage}%
  \hspace{0.5cm}
  \begin{minipage}[t]{0.42\textwidth}
    \centering
    \includegraphics[width=\linewidth]{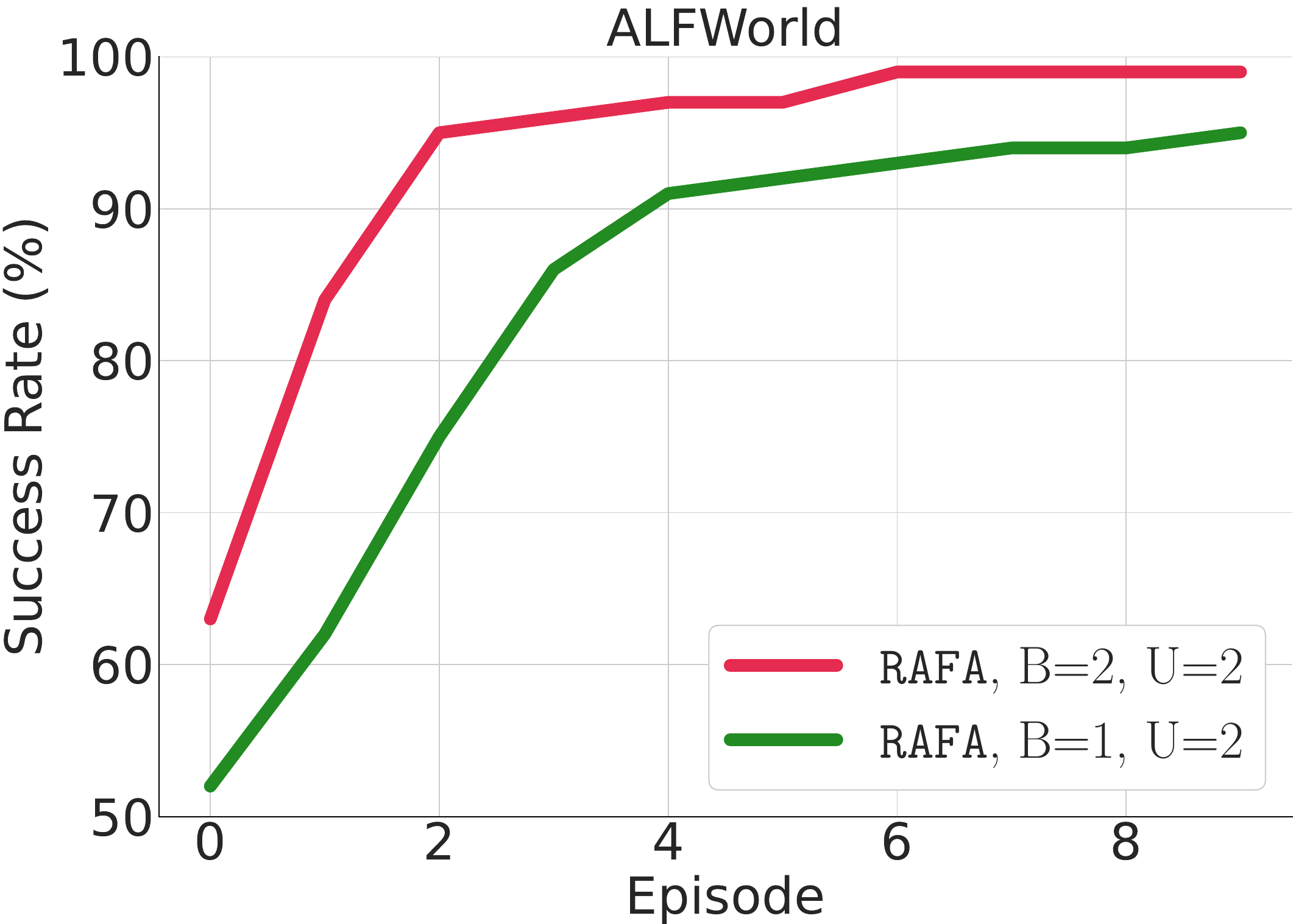}
    \caption{Ablation on the search breadth $B$ in the ALFWorld environment.}
    \label{fig:alf_curve_abl_B}
  \end{minipage}
\end{figure}

\subsection{BlocksWorld}
\paragraph{Task Setup.} The reported success rates are averaged in tasks that require different minimum steps. Specifically, the evaluation is conducted in $57$ 4-step tasks and $114$ 6-step tasks. We set the state as the current arrangement of the blocks and the actions contain Stack, Unstack, Put, and Pickup, coupled with a block being operated.

\begin{figure}[htbp]%{r}{0.25\textwidth}
  \centering
  \vspace{-0.1cm}
  \includegraphics[width=\linewidth]{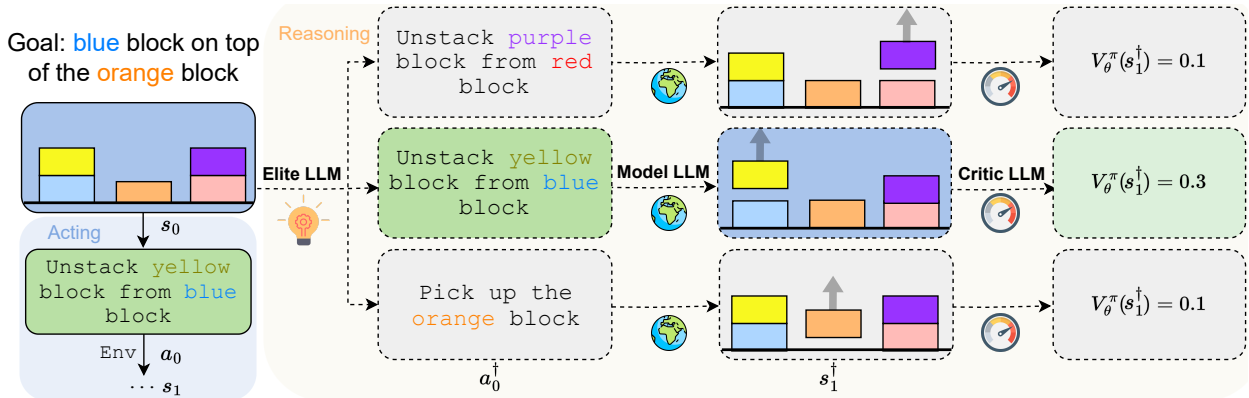}
  \vspace{-0.1cm}
  \setlength{\belowcaptionskip}{-5pt}
\caption{Illustration of \texttt{RAFA} in the BlocksWorld environment.}
\label{fig:blocksworld-example}
\end{figure}

\paragraph{RAFA Setup.} The search space is up to $5^4$ for a 4-step task and is up to $5^6$ for a 6-step task. For 4-step tasks, \texttt{RAFA} can achieve over 50\% success rate within 8 learning steps with \texttt{Vicuna-13B(v1.3)} and achieve over 80\% success rate within 8 learning steps with \texttt{Vicuna-33B(v1.3)}. For 6-step tasks, \texttt{RAFA} can achieve over 40\% success rate within 20 learning steps with \texttt{Vicuna-13B(v1.3)} and achieve over 50\% success rate within 20 learning steps with \texttt{Vicuna-33B(v1.3)}. Empirical results show that \texttt{Vicuna} could produce wrong state transition in the planning phase. \texttt{RAFA} can mitigate hallucination with feedback from failure trajectories and active exploration. One can draw such a conclusion by comparing \texttt{RAFA} with \texttt{RAP} as \texttt{RAP} does not receive feedback from the real environment.

%Here we summarize the relationship between the success rate and the number of samples. The LLM world model prompted with few-shot examples has inaccurate transitions. \texttt{RAFA} makes corrections with environment feedback and learns faster than MCTS especially when the number of episodes is small.

\subsection{Tic-Tac-Toe}
\paragraph{Task Setup.} Tic-Tac-Toe \citep{beck2008combinatorial} is a competitive game in which two players take turns to mark a three-by-three grid with X or O, and a player succeeds when their marks occupy a diagonal, horizontal, or vertical line. 

\begin{figure}[htbp]%{r}{0.25\textwidth}
  \centering
  \vspace{-0.1cm}
  \includegraphics[width=\linewidth]{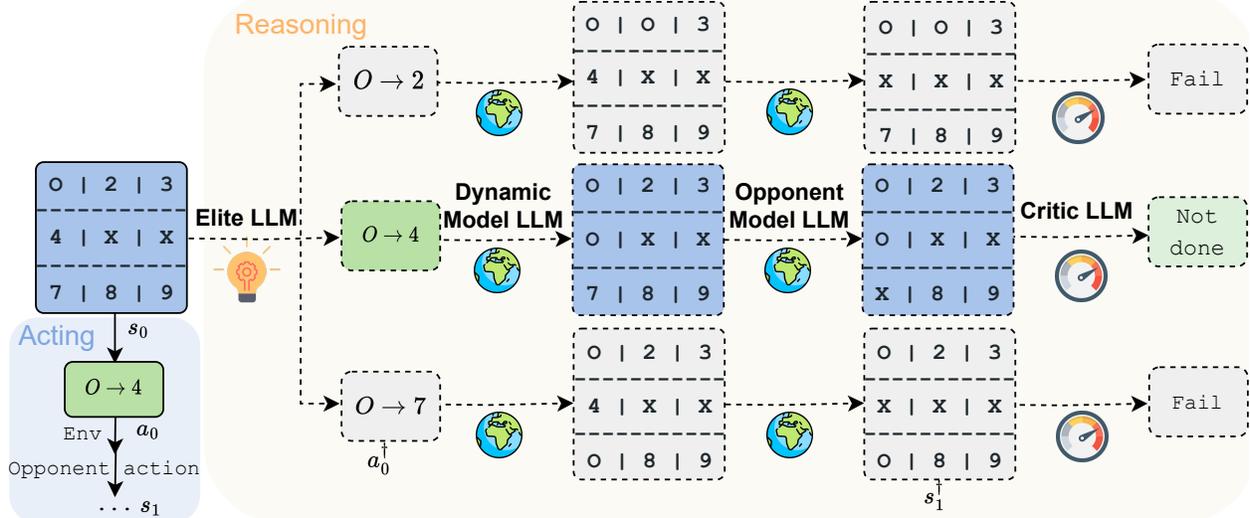}
  \vspace{-0.1cm}
  \setlength{\belowcaptionskip}{-5pt}
\caption{Illustration of \texttt{RAFA} (playing O) in the Tic-Tac-Toe game.  States are represented by a numbered $3 \times 3$ grid and actions are represented by a number between 1-9. The opponent is considered part of the environment.}
\label{fig:tictactoe_rafa}
\end{figure}

\iffalse
\begin{figure}[H]
\centering
\subfloat{
    \begin{minipage}[t]{\linewidth}
        \centering
        \includegraphics[width=1\textwidth]{figs/new_tictactoe.pdf}
    \end{minipage}%
}
\caption{Illustration of \texttt{RAFA} (playing O) in the Tic-Tac-Toe game.  States are represented by a numbered $3 \times 3$ grid and actions are represented by a number between 1-9. The opponent is considered part of the environment.}
\label{fig:tictactoe_rafa}
\end{figure}

\begin{figure}[ht]
    \centering
    \includegraphics[width=1\linewidth]{}
    \caption{Tic-Tac-Toe Example. States are represented by a numbered $3 \times 3$ grid and actions are represented by a number between 1-9. The opponent is considered part of the environment.}
    \label{fig:tictactoe_example}
\end{figure}
\fi
We adopt the convention that X plays first. 
As illustrated below in Figure \ref{fig:tictactoe_rafa}, we use a numbered $3 \times 3$ grid to represent a state and a number between 1 and 9 to represent an action, which also illustrates the transition and reward function.
Although Tic-Tac-Toe is a solved game with a forced draw assuming the best play from both players, it remains a challenge for LLMs to accomplish this task even when prompted to play only the optimal moves.
We collected the battle outcomes between different LLM models in Table \ref{tab:tictactoe_initial}, where we notice that \texttt{gpt-4} performs worse when playing as ``O''. Thus, in our experiments, we let \texttt{RAFA} play as ``O'' and let baseline LLM models play as ``X''. 

\begin{table}[ht]
\centering
\begin{tabular}{cc|cc}
\toprule
\multicolumn{2}{c|}{\multirow{2}{*}{X wins : Tie : O wins}} & \multicolumn{2}{c}{O}                     \\ 
\cline{3-4} 
\multicolumn{2}{c|}{}                                       & \multicolumn{1}{c}{\texttt{gpt-3.5}} & \texttt{gpt-4} \\ 
\midrule
\multicolumn{1}{c|}{\multirow{2}{*}{X}}   & \texttt{gpt-3.5}   & \multicolumn{1}{c|}{$55\% : 35\% : 10\%$}              &    $90\% : 0\% : 10\%$   \\ 
% \cline{2-4} 
\multicolumn{1}{c|}{}                     & \texttt{gpt-4}           & \multicolumn{1}{c|}{$65\% : 15\% : 20\%$}              &    $90\% : 0\% : 10\%$   \\ 
\bottomrule
\end{tabular}
\caption{Probability of ``X wins,'' ``Tie,'' and ``O wins'' in Tic-Tac-Toe. The results are obtained by averaging over 20 simulated games.}
\label{tab:tictactoe_initial}
\end{table}

\paragraph{RAFA Setup.} For implementation, we set $B = 3$ and adopt MCTS to evaluate the proposed actions. We set $U = 4$ which is the maximum game depth. We set a prediction-based switching condition triggered when the prediction does not agree with the observation. Specifically, policy switches when one of the following events occurs:
\begin{itemize}
    \item The \texttt{RAFA} agent takes an action and predicts the next state, which is different from the observed next state.
    \item Before the opponent takes an action, the \texttt{RAFA} agent tries to predict such an action, which is different from the actual action that the opponent takes.
    \item After the opponent takes an action, \texttt{RAFA} agent predicts the next state, which is different from the observed next state.
    \item The \texttt{RAFA} agent predicts the current game status (X wins, O wins, Tie, Not finished), which is different from the environment’s feedback.
\end{itemize}

Besides, we use the ground truth of those predictions to update the agent's belief of the world, which also implicitly affects the agent's policy.

We define a discrete reward function with $r = -1, 0, 1$ corresponding to lose, tie, and win. The agent only gets rewards when the current episode is completed. We define the score of an agent as its expected reward which can be approximated by simulation.
The empirical results are shown in figure \ref{fig:tictactoe_results_gpt4}. We conduct experiments using both \texttt{gpt-4} as the backend. The score of \texttt{RAFA} $(B=4)$ increases as it interacts more with the environment.
By analyzing the generated trajectories, we also notice that although \texttt{RAFA} agent is not perfect, it exploits the weakness of the baseline model well, which is why it almost never loses after $7$ episodes.

\begin{figure}[ht]
\centering
    \includegraphics[width=0.4\linewidth]{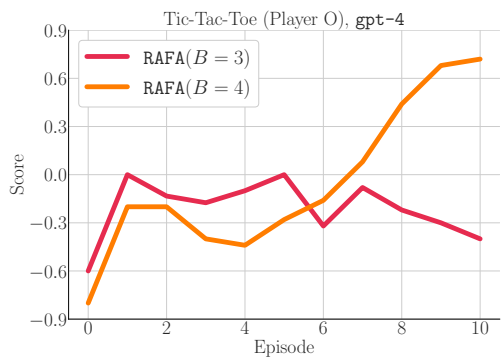}
    \caption{Score curves in the Tic-Tac-Toe game. We use \texttt{gpt-4} as backend. Results are averaged across 10 simulations and smoothed with a window size of $5$.}
    \label{fig:tictactoe_results_gpt4}
\end{figure}

\section{Prompts}
\label{sec:prompts}
In this section, we give details of the prompts used for each task.
\subsection{Game of 24}
\paragraph{Critic LLM.} For the LLM instance of the \texttt{Critic}, we prompt it with the current action (formula) with reward and feedback from the environment. The critic is required to determine whether each formula is valid or not and give a "sure" or "impossible" label for the formula. We use critic prompts to generate reflection for formula proposal and evaluation, respectively.
\vspace{0.3cm}
\VerbatimInput[label=\fbox{\texttt{Critic} prompt (for formula proposal)}]{tex/appendix/prompts/game24/critic_proposal.tex}
\vspace{0.3cm}
\VerbatimInput[label=\fbox{\texttt{Critic} prompt (for formula evaluation)}]{tex/appendix/prompts/game24/critic_evaluation.tex}

\paragraph{Elite LLM.} We adopt the same prompts used in Tree-of-Thoughts~\citep{tree_of_thought} to propose and evaluate formulas, except that we concatenate the reflections from each step to avoid making repeated mistakes.

\vspace{0.3cm}
\VerbatimInput[label=\fbox{\texttt{Elite} prompt (for formula proposal)}]{tex/appendix/prompts/game24/elite_proposal.tex}
\vspace{0.3cm}
\VerbatimInput[label=\fbox{\texttt{Elite} prompt (for formula evaluation)}]{tex/appendix/prompts/game24/elite_value.tex}
\vspace{0.3cm}
\VerbatimInput[label=\fbox{\texttt{Elite} prompt (for last step formula evaluation)}]{tex/appendix/prompts/game24/elite_laststep.tex}

For Chain-of-Thought baselines, we adopt the same methodology, and keep the original prompts except for adding reflections as below.

\vspace{0.3cm}
\VerbatimInput[label=\fbox{\texttt{Elite} prompt (for chain-of-thought proposals)}]{tex/appendix/prompts/game24/elite_cot.tex}

\subsection{ALFWorld}
\paragraph{Critic LLM.} For the LLM instance of the \texttt{Critic}, we prompt it with successful examples for each of the six task types to generate the value within each episode. Specifically, the critic LLM first determines if the value of the current state changes. If it changes, the critic LLM then generates the scalar value with the corresponding reason. We provide the following two examples in the put and cool tasks, respectively.
\vspace{0.3cm}
\VerbatimInput[label=\fbox{\texttt{Critic} prompt (for value generation), "put" task}]{tex/appendix/prompts/alfworld/critic_value_put}
\vspace{0.3cm}

\VerbatimInput[label=\fbox{\texttt{Critic} prompt (for value generation), "cool" task}]{tex/appendix/prompts/alfworld/critic_value_cool}

In addition to the prompt for value generation within each episode, we also prompt the \texttt{Critic} LLM to generate summary feedback according to the past failure trajectory. We use the same prompt for all six task types. Specifically, it contains the full failure trajectory (observation only), as well as the failure feedback at the end of each trajectory.

\vspace{0.3cm}
\VerbatimInput[label=\fbox{\texttt{Critic} prompt (for feedback generation)}]{tex/appendix/prompts/alfworld/critic_feedback}

\paragraph{Model LLM and Elite LLM.} We use the same prompt for both the \texttt{Model} LLM and the \texttt{Elite} LLM. Specifically, we simply prompt these two instances with the successful past trajectories (observations and actions) for each task type and expect the \texttt{Model} LLM and \texttt{Elite} LLM to generate the possible next observation and the potential actions, respectively. Below, we provide two prompt examples in the put and cool tasks, respectively.

\vspace{0.3cm}
\VerbatimInput[label=\fbox{\texttt{Model} \& \texttt{Elite} prompt, "put" task}]{tex/appendix/prompts/alfworld/model_put}
\vspace{0.3cm}

\VerbatimInput[label=\fbox{\texttt{Model} \& \texttt{Elite} prompt, "cool" task}]{tex/appendix/prompts/alfworld/model_cool}

\subsection{Blocksworld}

\paragraph{Critic LLM.} We evaluate RAFA and RAP with the reward scheme proposed by \cite{rap}. We prompt the language model with the previous state-action trajectory and calculate the log probabilities of taking each feasible action. Given the action taken in the current state, the \texttt{Model} LLM predicts the next state and we calculate the percentage of subgoals completed in the next state. We adopt the prompt examples from \cite{rap} to ensure fairness in comparison.
\vspace{0.3cm}

\VerbatimInput[label=\fbox{\texttt{Critic} prompt example (for log probability), "step-4" task}]{tex/appendix/prompts/blocksworld/critic_logp}
\vspace{0.3cm}

\paragraph{Model LLM.} we prompt the \texttt{Model} LLM with few-shot examples and the current state and action. The \texttt{Model} LLM generates the predicted next state description. We adopt the prompt examples from \cite{rap} to ensure fairness in comparison.

\VerbatimInput[label=\fbox{\texttt{Model} prompt template, "Pick up" action}]{tex/appendix/prompts/blocksworld/model_pickup}
\vspace{0.5cm}

\VerbatimInput[label=\fbox{\texttt{Model} prompt template, "Unstack" action}]{tex/appendix/prompts/blocksworld/model_unstack}
\vspace{0.3cm}

\VerbatimInput[label=\fbox{\texttt{Model} prompt template, "Put down" action}]{tex/appendix/prompts/blocksworld/model_putdown}
\vspace{0.5cm}

\VerbatimInput[label=\fbox{\texttt{Model} prompt template, "Stack" action}]{tex/appendix/prompts/blocksworld/model_stack}
\vspace{0.3cm}

\subsection{Tic-Tac-Toe}

\paragraph{Elite LLM}

\VerbatimInput[label=\fbox{\texttt{Elite} prompt, propose $n$ actions}]{tex/appendix/prompts/tictactoe/propose.tex}

\paragraph{Model LLM}

\vspace{0.3cm}
\VerbatimInput[label=\fbox{\texttt{Model} prompt, predict next state}]{tex/appendix/prompts/tictactoe/dynamics.tex}
\vspace{0.3cm}

\VerbatimInput[label=\fbox{\texttt{Model} prompt, predict opponent's action}]{tex/appendix/prompts/tictactoe/oppo.tex}

\paragraph{Critic LLM}

\vspace{0.3cm}
\VerbatimInput[label=\fbox{\texttt{Critic} prompt, evaluate winner}]{tex/appendix/prompts/tictactoe/winner.tex}
\vspace{0.3cm}

\VerbatimInput[label=\fbox{\texttt{Critic} prompt, evaluate tie (when there is no winner)}]{tex/appendix/prompts/tictactoe/complete.tex}
\vspace{0.3cm}

\end{document}